\newcommand{\captionfonts}{\normalsize}
\long\def\@makecaption#1#2{%
  \vskip\abovecaptionskip
  \sbox\@tempboxa{{\captionfonts #1: #2}}%
  \ifdim \wd\@tempboxa >\hsize
    {\captionfonts #1: #2\par}
  \else
    \hbox to\hsize{\hfil\box\@tempboxa\hfil}%
  \fi
  \vskip\belowcaptionskip}
\renewcommand{\fnum@algorithm}{\fname@algorithm}
\newcommand*{\rom}[1]{\expandafter\@slowromancap\romannumeral #1@}
\newtheorem{thm}{Theorem}
\newtheorem{lem}{Lemma}
\newtheorem{dfn}{Definition}
\newtheorem{prp}{Proposition}
\newtheorem*{rmk}{Remark}
\newtheorem{rmk-2}{Remark}
\newtheorem{rmk-3}{Remark}
\newtheorem{rmk-4}{Remark}
\newtheorem{rmk-5}{Remark}
\newtheorem{rmk-6}{Remark}
\newtheorem{rmk-7}{Remark}
\newtheorem{rmk-8}{Remark}
\newtheorem{cl}{Corollary}
\newtheorem{assm}{Assumption}
\begin{document}

\ \vspace{20mm}\\

{\LARGE \flushleft Universal Solutions of Feedforward ReLU Networks for Interpolations}

\ \\
{\bf \large Changcun Huang}\\
{cchuang@mail.ustc.edu.cn}\\
%


\thispagestyle{empty}
\markboth{}{NC instructions}
\ \vspace{-0mm}\\
%
\begin{center} {\bf Abstract} \end{center}
This paper provides a theoretical framework on the solution of feedforward ReLU networks for interpolations, in terms of what is called an interpolation matrix, which is the summary, extension and generalization of our three preceding works, with the expectation that the solution of engineering could be included in this framework and finally understood. To three-layer networks, we classify different kinds of solutions and model them in a normalized form; the solution finding is investigated by three dimensions, including data, networks and the training; the mechanism of a type of overparameterization solution is interpreted. To deep-layer networks, we present a general result called sparse-matrix principle, which could describe some basic behavior of deep layers and explain the phenomenon of the sparse-activation mode that appears in engineering applications associated with brain science; an advantage of deep layers compared to shallower ones is manifested in this principle. As applications, a general solution of deep neural networks for classifications is constructed by that principle; and we also use the principle to study the data-disentangling property of encoders. Analogous to the three-layer case, the solution of deep layers is also explored through several dimensions. The mechanism of multi-output neural networks is explained from the perspective of interpolation matrices.


\ \\[-2mm]
{\bf Keywords:} Feedforward neural network, ReLU, universal solution, interpolation matrix, sparse activation

\section{Introduction}
A ReLU \citep*{Nair2010,Glorot2011} has become the most popular unit of neural networks in recent years \citep*{Goodfellow2016,LeCun2015}. To the solutions of the classification or approximation via ReLU networks, besides those derived from the training method such as in convolutional neural networks \citep*{Krizhevsky2017} and autoencoders \citep*{Hinton2006,Girin2020}, there also exist some constructed ones, which can be classified by their original ideas as: polynomial approximations \citep*{Yarotsky2017,Liang2017,Telgarsky2015}, wavelet constructions \citep*{Daubechies2019,Shaham2018, Huang2020}, hinging-hyperplane methods \citep*{Arora2018}, and piecewise linear approximations \citep*{Shen2021,Huang2020}.

The two directions above are nearly parallel to each other. Instead of the training method, \citet*{Huang2022a} attempted to find a solution of engineering through theoretical constructions. Although several ways were proposed to enhance the universality of the constructed solution, there are still some obstacles rendering it too specially designed, such that it may not be easily encountered by the training process. Especially, the interference-avoiding principle needs a certain type of hyperplane arrangement, which is a constraint.

This paper tries to explore more universal solutions with the help of algebraic methods, on the basis of \citet*{Huang2022a,Huang2022b, Huang2020}. The most related works to ours are \citet*{zhang2017} and \citet*{Collins2018}. The former had used the interpolation matrix for ReLU networks, but only involved a special case relevant to their concrete interpolation strategies. The latter studied the nonlinear effect of zero entries of an interpolation matrix, through an extracted feature that can be quantitatively measured; however, this measure can only grasp some rough property of interpolation matrices.

We will systematically investigate the property of interpolation matrices in this paper, by which a framework of universal solutions of feedforward ReLU networks will be established. It is hoped that the solution obtained by the training method could be explained under this framework.

\subsection{Arrangements and Contributions}
Section 2 introduces an interpolation framework and presents some basic principles (theorem 1 and lemma 2) on the effect of zero entries of an interpolation matrix. Section 3 classifies various kinds of solutions (definitions 2, 3 and 4) and introduces a normalized form to describe them (theorem 2). Proposition 1 lists some factors that could lead to the solution mode of theorem 2.

Section 4 analyses the source of multiple solutions (theorem 3) and studies them in terms of interpolation matrices (theorem 4), with the mechanism of overparameterization solutions interpreted simultaneously.

Section 5 integrates the training procedure into the solution finding (theorem 7). Lemmas 6, 7 and 8 are general principles for the training of neural networks. The influence of the training on interpolation matrices is given in theorem 5.

Section 6 constructs a solution of deep-layer networks for binary classification and data disentangling, whose mechanism (theorem 8) is novel and different from that of \citet*{Huang2022a,Huang2022b}. Theorem 8 also severs as both an evidence of the later presented theories and a typical example that is helpful to understand them.

Section 7 gives one of the main results of this paper, which is called \textsl{sparse-matrix principle} (theorem 10), on the basis of a general conclusion (theorem 9). The sparse-activation mode of neural networks, which is related to both engineering applications and brain science, will be interpreted by that principle (theorem 12). We model a deep-layer network by decomposing it into two subnetworks, such that the results of three-layer networks could be applied (lemma 13 and theorem 11).

The concept of the activation route of data (definitions 8 and 9) is proposed, which plays a fundamental role in our analyses. We'll also incorporate the training procedure into the solution finding of deep-layer networks (theorem 11). In section 7.7, the mechanism of ReLU networks for multi-outputs is explained from the perspective of interpolation matrices.

Section 8 is an application of the theories of section 7 in autoencoders. The data-disentangling property of an encoder, which had been intensively studied in \citet*{Huang2022b}, will be interpreted and generalized (theorem 13) by the sparse-matrix principle of theorem 10. Section 9 concludes this paper by a discussion.

\subsection{Notes and Notations}
We will borrow some notations, notes and instructions from \citet*{Huang2022a,Huang2022b}, which are listed below. The notes of \citet*{Huang2022a} are in its section 2.4.

\begin{itemize}
\item[1.] The reason that the interpolation framework is used to model neural networks is explained in section 1.2 of \citet*{Huang2022a}.
\item[2.] Definitions 1 and 2 of \citet*{Huang2022a}: the notation of network architectures, such as $n^{(1)}m^{(1)}{\mu}^{(1)}$.
\item[3.] Note 2 of \citet*{Huang2022a}: the notations relevant to a hyperplane derived from a unit, such as $l^0$, $l^+$ and $l_1^+l_2^+$.
\item[4.] Note 3 of \citet*{Huang2022a}: the index of a layer of neural networks, with that of the input layer being $0$ and the hidden layers starting from $1$.
\item[5.] Definitions 4 and 5 of \citet*{Huang2022a}: the concepts of the activation of units, including ``simultaneously activate'' and ``partially activate''.
\item[6.] Definition 11 of \citet*{Huang2022b}: divided regions of a set of hyperplanes.
\item[7.] Note 7 of \citet*{Huang2022a}: the definition of the output of a hyperplane.
\item[8.] Definition 10 of \citet*{Huang2022a}: the meaning of ``having the same classification effect'' for hyperplanes.
\item[9.] Definition 16 of \citet*{Huang2022a}: the concepts of open and closed convex polytopes.
\item[10.] When we mention the probability measure, the underlying probabilistic model is the one introduced in the appendix of \citet*{Huang2022a}.
\item[11.] Except for section 6, any unit of the output player of a neural network is assumed to be a linear one.
\item[12.] Definition 9 of \citet*{Huang2022b}: the concept of data disentangling.

\end{itemize}

\section{Three-layer ReLU Networks}
The results of sections 2 and 3 are the preliminaries to the whole paper. The mechanism of a three-layer network is the basis of a deep-layer one, since the former is a subnetwork of the latter; this's why we discuss the three-layer case first.

\subsection{Model Description}
We'll use two interpolation frameworks. The first one to be introduced in this section deals with the discrete points coming from a piecewise linear function, whose each linear component provides a finite number of points. We propose this model because it is a typical form of approximations or interpolations in practice, when different batchs of points are fitted or interpolated by their corresponding hyperplanes.

The second framework will be given in section 4.1, which is the usual form that takes all the points to be interpolated as a whole, regardless of whether part of them forming a hyperplane or not, analogous to that of \citet*{Haykin2009} and \citet*{DeVore2021}.

Denote a discrete piecewise linear function by
\begin{equation}
f: D \to \mathbb{R}
\end{equation}
whose domain
\begin{equation}
D = \bigcup_{i=1}^{N}D_i \subset \mathbb{R}^n,
\end{equation}
in which each subdomain $D_i$ is a data set composed of $\beta_i$ discrete points, and a certain linear function or an $n$-dimensional hyperplane is defined on it by the function of equation 2.1. Let
\begin{equation}
\mathcal{C} : =|D| = \sum_{i = 1}^N\beta_i
\end{equation}
be the cardinality of $D$.

Suppose that we have $N$ $n$-dimensional hyperplanes $h_i$'s for $i = 1, 2, \cdots, N$ of $n + 1$-dimensional space, each of which provides $\beta_i$ discrete points $\{\boldsymbol{x}_{ij}, y_{ij}\}$ for $j = 1, 2, \cdots, \beta_i$; all of the data points are collectively denoted by a set
\begin{equation}
F = \bigcup_{i=1}^N F_i,
\end{equation}
where $F_i$ is the set of the points of hyperplane $h_i$. Then equation 2.4 is another expression of a discrete piecewise linear function of equation 2.1, with subdomain
\begin{equation}
D_i = \{\boldsymbol{x}_{i1}, \boldsymbol{x}_{i2}, \cdots, \boldsymbol{x}_{i\beta_i} \}
\end{equation}
associated with $F_i$.

The interpolation of each point of $F_i$ by a three-layer ReLU network $n^{(1)}m^{(1)}1^{(1)}$ can be expressed as
\begin{equation}
\sum_{\nu = 1}^{m}\alpha_{\nu}\Phi_{\nu}(\boldsymbol{x}_{ij}) = y_{ij},
\end{equation}
where $\alpha_{\nu}$ is the output weight of the unit $u_{\nu}$ of the hidden layer, or the $\nu$th input weight of the unit of the output layer, and
\begin{equation}
\Phi_{\nu}(\boldsymbol{x}_{ij}) := \sigma(\boldsymbol{w}_{\nu}^T\boldsymbol{x}_{ij} + b_{\nu}),
\end{equation}
where $\sigma(s) = \max(0,s)$ is the activation function of a ReLU, while $\boldsymbol{w}_{\nu}$ and $b_{\nu}$ are the parameters of $u_{\nu}$. The case of all $F_i$'s could be integrated into a matrix form as
\begin{equation}
\boldsymbol{\Phi}\boldsymbol{\alpha} = \boldsymbol{y},
\end{equation}
where
\begin{equation}
\boldsymbol{\Phi}=\begin{bmatrix}
\Phi_1(\boldsymbol{x}_{11}) & \Phi_2(\boldsymbol{x}_{11}) & \cdots & \Phi_m(\boldsymbol{x}_{11})\\
\vdots & \vdots &\ddots & \vdots\\
\Phi_1(\boldsymbol{x}_{1\beta_1}) & \Phi_2(\boldsymbol{x}_{1\beta_1}) & \cdots & \Phi_m(\boldsymbol{x}_{1\beta_1})\\
\vdots & \vdots &\ddots & \vdots\\
\Phi_1(\boldsymbol{x}_{N1}) & \Phi_2(\boldsymbol{x}_{N1}) & \cdots & \Phi_m(\boldsymbol{x}_{N1})\\
\vdots & \vdots &\ddots & \vdots\\
\Phi_1(\boldsymbol{x}_{N\beta_N}) & \Phi_2(\boldsymbol{x}_{N\beta_N}) & \cdots & \Phi_m(\boldsymbol{x}_{N\beta_N})\\
\end{bmatrix}
\end{equation}
with $\boldsymbol{\alpha} = [\alpha_1, \alpha_2, \cdots, \alpha_m]^T$, and $\boldsymbol{y} = [y_{11}, \cdots, y_{1\beta_1}, \cdots, y_{N1}, \cdots, y_{N\beta_N}]^T$. The size of $\boldsymbol{\Phi}$ is $\mathcal{C} \times m$, where $\mathcal{C}$ defined in equation 2.3 is the number of the data points and $m$ is the number of the units of the hidden layer.

Write equation 2.9 as
\begin{equation}
\boldsymbol{\Phi}=\begin{bmatrix}
\boldsymbol{H}_1^T, & \boldsymbol{H}_2^T, & \cdots, & \boldsymbol{H}_N^T
\end{bmatrix}^T,
\end{equation}
where
\begin{equation}
\boldsymbol{H}_i=\begin{bmatrix}
\Phi_1(\boldsymbol{x}_{i1}) & \Phi_2(\boldsymbol{x}_{i1}) & \cdots & \Phi_m(\boldsymbol{x}_{i1})\\
\Phi_1(\boldsymbol{x}_{i2}) & \Phi_2(\boldsymbol{x}_{i2}) & \cdots & \Phi_m(\boldsymbol{x}_{i2})\\
\vdots & \vdots &\ddots & \vdots\\
\Phi_1(\boldsymbol{x}_{i\beta_i}) & \Phi_2(\boldsymbol{x}_{i\beta_i}) & \cdots & \Phi_m(\boldsymbol{x}_{i\beta_i})\\
\end{bmatrix}
\end{equation}
represents the interpolation of $F_i$ of equation 2.4 for $i = 1, 2, \cdots, N$, and the size $\beta_i \times m$ means that use $m$ units of the hidden layer to interpolate the $\beta_i$ points of $F_i$. Under equation 2.10, equation 2.8 can be decomposed into
\begin{equation}
\boldsymbol{H}_i\boldsymbol{\alpha} = \boldsymbol{y}_i
\end{equation}
with $\boldsymbol{y}_i = [y_{i1}, y_{i2}, \cdots, y_{i\beta_i}]^T$ for all $i$.

Another form of equation 2.9 is
\begin{equation}
\boldsymbol{\Phi}=\begin{bmatrix}
\boldsymbol{L}_1, & \boldsymbol{L}_2, & \cdots, & \boldsymbol{L}_m
\end{bmatrix},
\end{equation}
where
\begin{equation}
\boldsymbol{L}_{\nu} = \begin{bmatrix}
\Phi_{\nu}(\boldsymbol{x}_{11}), \cdots, \Phi_{\nu}(\boldsymbol{x}_{1\beta_1}), \cdots, \Phi_{\nu}(\boldsymbol{x}_{N1}), \cdots, \Phi_{\nu}(\boldsymbol{x}_{N\beta_N})
\end{bmatrix}^T
\end{equation}
for $\nu = 1, 2, \cdots, m$, which is the $\nu$th column of equation 2.9, whose entries are the outputs of $u_{\nu}$ of the hidden layer with respect to the input data set $D$.

\begin{dfn}
We call $\boldsymbol{\Phi}$ of equation 2.9 the interpolation matrix of a three-layer network $n^{(1)}m^{(1)}1^{(1)}$ with respect to an input data set $D$.
\end{dfn}

\subsection{Basic Principles}

\begin{lem}
If $D$ of equation 2.1 simultaneously activates all the $m$ units of the hidden layer of $n^{(1)}m^{(1)}1^{(1)}$, and if the interpolation matrix $\boldsymbol{\Phi}$ of equation 2.9 satisfies $m = \mathcal{C} > n+1$, then $\boldsymbol{\Phi}$ is singular.
\end{lem}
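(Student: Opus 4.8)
The plan is to exploit the simultaneous-activation hypothesis to linearize the entries of $\boldsymbol{\Phi}$ and then bound its rank by a factorization argument. Since $D$ simultaneously activates every hidden unit, by the meaning of ``simultaneously activate'' (imported as Note 5, i.e. Definitions 4 and 5 of \citet*{Huang2022a}) we have $\boldsymbol{w}_{\nu}^T\boldsymbol{x}_{ij}+b_{\nu}\ge 0$ for every data point $\boldsymbol{x}_{ij}\in D$ and every $\nu$; hence the ReLU acts as the identity on each argument, so by equation 2.7 the $((i,j),\nu)$ entry of $\boldsymbol{\Phi}$ is simply $\Phi_{\nu}(\boldsymbol{x}_{ij})=\boldsymbol{w}_{\nu}^T\boldsymbol{x}_{ij}+b_{\nu}$, an affine function of the data.

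First I would collect the augmented data points as the rows of a matrix $\boldsymbol{X}$ of size $\mathcal{C}\times(n+1)$, whose row indexed by $(i,j)$ is $[\boldsymbol{x}_{ij}^T,\ 1]$, and collect the hidden-unit parameters as the columns of a matrix $\boldsymbol{W}$ of size $(n+1)\times m$, whose $\nu$th column is $[\boldsymbol{w}_{\nu}^T,\ b_{\nu}]^T$. The linearized form of the entries above then says precisely that $\boldsymbol{\Phi}=\boldsymbol{X}\boldsymbol{W}$. The key step is the rank bound for a product: $\operatorname{rank}(\boldsymbol{\Phi})=\operatorname{rank}(\boldsymbol{X}\boldsymbol{W})\le\min\{\operatorname{rank}(\boldsymbol{X}),\operatorname{rank}(\boldsymbol{W})\}\le n+1$, since both factors pass through an $(n+1)$-dimensional space. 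Under the hypothesis $m=\mathcal{C}$ the matrix $\boldsymbol{\Phi}$ is square of order $m$, and since $m>n+1$ its rank is strictly less than $m$; therefore $\boldsymbol{\Phi}$ is singular.

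There is essentially no obstacle here; the only point requiring care is that the simultaneous-activation condition must be read as the \emph{entire} set $D$ lying in the (closed) positive halfspace of every unit, which is exactly what makes the identity substitution valid for all entries of $\boldsymbol{\Phi}$ at once and thus permits the global factorization $\boldsymbol{\Phi}=\boldsymbol{X}\boldsymbol{W}$. If only a proper subset of $D$ activated some unit, the factorization would fail on the corresponding rows and this argument would not apply — which is presumably the role this lemma plays in motivating the later study of zero entries of interpolation matrices.
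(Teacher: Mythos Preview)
Your proof is correct and essentially identical in spirit to the paper's: both linearize the entries via the simultaneous-activation hypothesis and then observe that $\boldsymbol{\Phi}$ factors through an $(n+1)$-dimensional space. The only cosmetic difference is that the paper spells out the column dependence explicitly (writing each $\boldsymbol{L}_t$ for $t\ge n+2$ as a linear combination of $\boldsymbol{L}_1,\dots,\boldsymbol{L}_{n+1}$ via the dependence among the augmented weight vectors $\boldsymbol{\lambda}_\nu=[\boldsymbol{w}_\nu^T,b_\nu]^T$), whereas you invoke the rank bound $\operatorname{rank}(\boldsymbol{X}\boldsymbol{W})\le n+1$ directly; these are the same argument.
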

\begin{proof}
Let ${\lambda}_{\nu} = [\boldsymbol{w}_{\nu}^T, b_{\nu}]^T$ and $\boldsymbol{z}_{ij} = [\boldsymbol{x}_{ij}^T, 1]^T$. Then $\boldsymbol{w}_{\nu}^T\boldsymbol{x}_{ij} + b_{\nu}$ of equation 2.7 can be expressed as
\begin{equation}
\boldsymbol{w}_{\nu}^T\boldsymbol{x}_{ij} + b_{\nu} = \boldsymbol{\lambda}_{\nu}^T\boldsymbol{z}_{ij},
\end{equation}
where $\boldsymbol{\lambda}_{\nu}$ is an $n+1$-dimensional column vector. Due to the condition of this lemma, each element of $\boldsymbol{\Phi}$ is nonzero, and thus equation 2.14 is equivalent to
\begin{equation}
\boldsymbol{L}_{\nu} = \begin{bmatrix}
\boldsymbol{\lambda}_{\nu}^T\boldsymbol{z}_{11}, \cdots, \boldsymbol{\lambda}_{\nu}^T\boldsymbol{z}_{1\beta_1}, \cdots, \boldsymbol{\lambda}_{\nu}^T\boldsymbol{z}_{N1}, \cdots, \boldsymbol{\lambda}_{\nu}^T\boldsymbol{z}_{N\beta_N}
\end{bmatrix}^T.
\end{equation}

Because $\boldsymbol{\lambda}_{\nu}$ is $n+1$-dimensional, among $\boldsymbol{\lambda}_{\nu}$'s for $\nu = 1, 2, \cdots, m$ with $m > n+1$, there are at most $n+1$ linearly independent vectors. Without loss of generality, we assume that $\boldsymbol{\lambda}_1, \boldsymbol{\lambda}_2, \cdots, \boldsymbol{\lambda}_{n+1}$ are linearly independent; then each of $\boldsymbol{\lambda}_{n+2}, \boldsymbol{\lambda}_{n+3}, \cdots, \boldsymbol{\lambda}_{m}$ is the linear combination of the first $n+1$ ones. For example, to a certain $\boldsymbol{\lambda}_{t}$ for $t \ge n+2$, we have
\begin{equation}
\boldsymbol{\lambda}_t = \boldsymbol{a}^T\boldsymbol{e},
\end{equation}
where $\boldsymbol{a} = [a_1, a_2, \cdots, a_{n+1}]^T$ and $\boldsymbol{e} = [\boldsymbol{\lambda}_1^T, \boldsymbol{\lambda}_2^T, \cdots, \boldsymbol{\lambda}_{n+1}^T]^T$.

Substituting equation 2.17 into equation 2.16, we have
\begin{equation}
\begin{aligned}
\boldsymbol{L}_t &= \begin{bmatrix}
(\boldsymbol{a}^T\boldsymbol{e})\boldsymbol{z}_{11}, \cdots, (\boldsymbol{a}^T\boldsymbol{e})\boldsymbol{z}_{1\beta_1}, \cdots, (\boldsymbol{a}^T\boldsymbol{e})\boldsymbol{z}_{N1}, \cdots, (\boldsymbol{a}^T\boldsymbol{e})\boldsymbol{z}_{N\beta_N}
\end{bmatrix}^T \\
&= \sum_{\mu = 1}^{n+1}a_{\mu}
\begin{bmatrix}
\boldsymbol{\lambda}_{\mu}^T\boldsymbol{z}_{11}, \cdots, \boldsymbol{\lambda}_{\mu}^T\boldsymbol{z}_{1\beta_1}, \cdots, \boldsymbol{\lambda}_{\mu}^T\boldsymbol{z}_{N1}, \cdots, \boldsymbol{\lambda}_{\mu}^T\boldsymbol{z}_{N\beta_N}
\end{bmatrix}^T = \sum_{\mu = 1}^{n+1}a_{\mu}\boldsymbol{L}_{\mu},
\end{aligned}
\end{equation}
where $a_{\mu}$'s are the entries of $\boldsymbol{a}$ of equation 2.17. Equation 2.18 means that the $t$th column of $\boldsymbol{\Phi}$ for $t \ge n+2$ is the linear combination of the first $n+1$ columns, which implies the singularity of $\boldsymbol{\Phi}$.

If we cannot find $n+1$ linearly independent vectors among $\boldsymbol{\lambda}_{\nu}$'s, similar proofs could be done in a lower-dimensional space. This completes the proof.
\end{proof}

\begin{lem}
To the interpolation matrix $\boldsymbol{\Phi}$ of equation 2.9, suppose that $m = \mathcal{C}$. The singular or nonsingular property of $\boldsymbol{\Phi}$ could be changed by introducing zero entries, through the control of the activations of the hidden-layer units.
\end{lem}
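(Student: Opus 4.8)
The plan is to prove the statement by \emph{exhibiting two hidden-layer parameter configurations for one and the same input data set $D$}, one making $\boldsymbol{\Phi}$ singular and the other making it nonsingular, where the only structural difference between the two is the pattern of zero entries produced by the activations. Fix any $\mathcal{C}=m$ distinct points of $\mathbb{R}^n$, say $\boldsymbol{x}_1,\dots,\boldsymbol{x}_m$ lying on a common line, and relabel the double index $(i,j)$ of section 2.1 by a single index $1,\dots,m$ accordingly; we also take $m=\mathcal{C}>n+1$, the regime in which lemma 1 has force (the range $m\le n+1$ is analogous and less interesting, so I would only remark on it).

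For the first configuration I would choose $\boldsymbol{w}_{\nu},b_{\nu}$ so that $\boldsymbol{w}_{\nu}^T\boldsymbol{x}_k+b_{\nu}>0$ for every $\nu$ and every $k$, i.e.\ so that $D$ simultaneously activates all $m$ units. Then every entry of $\boldsymbol{\Phi}$ is nonzero and $m=\mathcal{C}>n+1$, so lemma 1 applies directly and $\boldsymbol{\Phi}$ is singular. For the second configuration, with the \emph{same} points, I would pick scalars $c_1<c_2<\dots<c_m$ interleaving the ordered points and let each $\Phi_{\nu}$ be $\sigma$ of an affine form depending only on the coordinate along the line, positive at $\boldsymbol{x}_k$ for $k\ge\nu$ and nonpositive for $k<\nu$; thus unit $u_{\nu}$ is activated by $\boldsymbol{x}_k$ precisely when $k\ge\nu$. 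Then $\boldsymbol{\Phi}$ becomes lower triangular with strictly positive diagonal entries $\Phi_{\nu}(\boldsymbol{x}_{\nu})$, whence $\det\boldsymbol{\Phi}=\prod_{\nu=1}^{m}\Phi_{\nu}(\boldsymbol{x}_{\nu})\ne 0$ and $\boldsymbol{\Phi}$ is nonsingular. Passing from the first configuration to the second amounts exactly to introducing, through the control of the activations, the zero entries above the diagonal, which changes $\boldsymbol{\Phi}$ from singular to nonsingular; and to turn a nonsingular $\boldsymbol{\Phi}$ back into a singular one it suffices to introduce still more zeros, e.g.\ to drive some $b_{\nu}$ down until $u_{\nu}$ is activated by no point of $D$, so that $\boldsymbol{L}_{\nu}=\boldsymbol{0}$ and $\boldsymbol{\Phi}$ has a zero column. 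This establishes that the property can be changed in either direction.

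I do not expect a genuine obstacle here; the only things needing care are routine. One must verify that the triangular configuration is realizable for the chosen $D$ — which is why I place the points on a line and make each hidden hyperplane depend only on the coordinate along that line, reducing the construction to the one-dimensional picture — and one must respect the ReLU boundary convention $\sigma(0)=0$, so that a point lying on a hidden hyperplane counts as inactive and contributes a genuine zero entry. It is also worth noting, as a consistency check against lemma 1, that the triangular matrix above is nonsingular \emph{despite} $m=\mathcal{C}>n+1$: this is not a contradiction precisely because its entries are no longer all of the form $\boldsymbol{\lambda}_{\nu}^T\boldsymbol{z}_k$, so the linear-combination argument in the proof of lemma 1 breaks down — which is exactly the phenomenon this lemma is meant to highlight.
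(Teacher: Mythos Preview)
Your argument is correct and establishes the lemma, but it proceeds along a different line from the paper's own proof.

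The paper argues the first direction briefly, as you do, by pointing out that zero entries coming from the ReLU cutoff break the linear-combination identity $\boldsymbol{L}_t=\sum_{\mu}a_{\mu}\boldsymbol{L}_{\mu}$ in the proof of lemma~1. For the existence of a nonsingular $\boldsymbol{\Phi}$ it does \emph{not} collapse to a one-dimensional picture. Instead it generalizes a construction from the author's earlier work: each subdomain $D_i$ is enlarged to contain exactly $n+1$ points in a common divided region, and for each $D_i$ one builds $n+1$ hyperplanes with the same classification effect as a distinguishing hyperplane $l_i$. The resulting interpolation matrix is block lower-triangular with $(n+1)\times(n+1)$ diagonal blocks $\boldsymbol{P}_{ii}$, and nonsingularity is obtained from $\det\boldsymbol{\Phi}=\prod_i\det\boldsymbol{P}_{ii}$ together with an argument that each $\boldsymbol{P}_{ii}$ is invertible because the $n+1$ points of $F_i$ determine a unique $n$-dimensional hyperplane.

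What your approach buys is simplicity and self-containment: by putting the data on a line and letting each unit threshold that single coordinate, you get an ordinary lower-triangular matrix with positive diagonal and no appeal to external lemmas; you also make the reverse direction (nonsingular $\to$ singular by zeroing an entire column) explicit. What the paper's approach buys is structural: its block lower-triangular form, with one $(n+1)\times(n+1)$ block per subdomain, is precisely the activation-mode template $\mathcal{M}^*$ of equation~3.3 that drives the rest of section~3, so the construction doubles as the prototype for the ``independent piecewise linear interpolation'' solution. Your example, by contrast, lives essentially in dimension one and does not exhibit that $n$-dimensional block pattern, so it proves the lemma but does not feed the subsequent development.
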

\begin{proof}
In equation 2.18 of the proof of lemma 1, if the $t$th column $\boldsymbol{L}_t$ of $\boldsymbol{\Phi}$ has zero entries due to the zero-output property of a ReLU instead of the linear combination of $\boldsymbol{L}_{\mu}$'s, the result $\boldsymbol{L}_t = \sum_{\mu = 1}^{n+1}a_{\mu}\boldsymbol{L}_{\mu}$ would not hold, and the singularity of $\boldsymbol{\Phi}$ due to lemma 1 may be changed.

To the existence of a nonsingular $\boldsymbol{\Phi}$ by introducing zero entries, we generalize the solution of lemma 4 of \citet*{Huang2022a}, in which each subdomain $D_i$ of equation 2.2 contains only one element. After $D_i$'s having been distinguished via lemma 4 of \citet*{Huang2022a}, to each $\boldsymbol{x}_i$ of $D_i$, in its sufficiently small neighbourhood, we select other $n$ points that are in the same divided region as $\boldsymbol{x}_i$, which comprise the new subdomain $D_i$ along with $\boldsymbol{x}_i$. Then the enlarged $D_i$'s are also distinguishable, with $|D_i| = n+1$ and $\mathcal{C} = N(n+1) > n+1$.

We assume that to each $D_i$, the $n+1$ points of the corresponding $F_i$ of equation 2.4 determine a unique $n$-dimensional hyperplane $h_i$. This condition is easily satisfied because the construction of those points is trivial.

Then according to each distinguishable hyperplane $l_i$ of $D_i$, use proposition 1 of \citet*{Huang2022a} to construct other $n$ hyperplanes that have the same classification effect as $l_i$, all of which together with $l_i$ are collectively denoted by $L_i$. Then $m = \mathcal{C} = N(n+1)$. Finally, the corresponding interpolation matrix is
\begin{equation}
\boldsymbol{\Phi}_d=\begin{bmatrix}
\boldsymbol{P}_{11} & \boldsymbol{0} & \boldsymbol{0}  & \cdots & \boldsymbol{0}\\
\boldsymbol{U}_{21} & \boldsymbol{P}_{22} & \boldsymbol{0}  & \cdots & \boldsymbol{0}\\
\vdots & \vdots & \vdots &\ddots & \vdots\\
\boldsymbol{U}_{N1} & \boldsymbol{U}_{N2} & \boldsymbol{U}_{N3} & \cdots & \boldsymbol{P}_{NN}\\
\end{bmatrix},
\end{equation}
where submatrix $\boldsymbol{P}_{ii}$ is a positive-entry matrix of size $(n+1) \times (n+1)$ that corresponds to the set $L_i$ as mentioned above, and $\boldsymbol{U}_{ij}$ is a matrix whose entries are either zero or nonzero. Then
\begin{equation}
\det\boldsymbol{\Phi} = \prod_{i=1}^N\det\boldsymbol{P}_{ii}.
\end{equation}
Under equation 2.19, equation 2.8 can be written as
\begin{equation}
\boldsymbol{\Phi}_d\begin{bmatrix}
\boldsymbol{\alpha}_{1}^T, \boldsymbol{\alpha}_{2}^T, \cdots, \boldsymbol{\alpha}_{N}^T
\end{bmatrix}^T = \boldsymbol{y},
\end{equation}
where $\boldsymbol{\alpha}_{i}$ for $i = 1, 2, \cdots, N$ is an $(n+1) \times 1$ vector whose entries are the $n+1$ output weights of the units of the hidden layer associated with $L_i$.

Now prove $\det\boldsymbol{P}_{ii} \ne 0$. Under this lemma, decompose $\boldsymbol{y}$ of equation 2.8 into the form $\boldsymbol{y} = [\boldsymbol{y}_1^T, \boldsymbol{y}_2^T, \cdots, \boldsymbol{y}_N^T]^T$, where $\boldsymbol{y}_i$ of size $(n+1) \times 1$ corresponds to the function values of subdomain $D_i$ of equation 2.2. By equations 2.19 and 2.21, we have
\begin{equation}
\begin{aligned}
& \boldsymbol{P}_{11}\boldsymbol{\alpha}_1 = \boldsymbol{y}_1, \\
\boldsymbol{P}_{jj}\boldsymbol{\alpha}_j = & \ \boldsymbol{y}_j - \sum_{k = 1}^{j-1}\boldsymbol{U}_{j, j-k}\boldsymbol{\alpha}_{j-k}
\end{aligned}
\end{equation}
for $j = 2, 3, \cdots, N$. Equation 2.22 would give a unique solution of $\boldsymbol{\alpha}_{i}$ for $i = 1, 2, \cdots, N$ for the interpolation of $F_i$ of equation 2.4, if $\det\boldsymbol{P}_{ii} \ne 0$, and vice versa. Since we have assumed that the unique $h_i$ exists for $F_i$, the solution of $\boldsymbol{\alpha}_{i}$ is unique, which implies $\det\boldsymbol{P}_{ii} \ne 0$. Thus, $\det\boldsymbol{\Phi} \ne 0$.

\end{proof}

\begin{thm}
Under the condition of lemma 2, to make the interpolation matrix $\boldsymbol{\Phi}$ of equation 2.9 nonsingular, a necessary condition is that we cannot find $k$ columns for $k \ge n+2$ from equation 2.13 whose entries are all nonzero.
\end{thm}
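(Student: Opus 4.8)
The plan is to establish the contrapositive: assuming that some $k$ columns of $\boldsymbol{\Phi}$ with $k \ge n+2$ each have all nonzero entries, I will show $\boldsymbol{\Phi}$ must be singular, contradicting the hypothesis. This is essentially a rerun of the argument of lemma 1, but localized to those particular columns rather than applied to the whole matrix.

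First I would fix such a collection of columns, say $\boldsymbol{L}_{\nu_1}, \dots, \boldsymbol{L}_{\nu_k}$ from equation 2.13, and recall the notation of the proof of lemma 1: $\boldsymbol{\lambda}_{\nu} = [\boldsymbol{w}_{\nu}^T, b_{\nu}]^T \in \mathbb{R}^{n+1}$ and $\boldsymbol{z}_{ij} = [\boldsymbol{x}_{ij}^T, 1]^T$, so that each entry is $\Phi_{\nu}(\boldsymbol{x}_{ij}) = \sigma(\boldsymbol{\lambda}_{\nu}^T\boldsymbol{z}_{ij})$. The key observation is that on these $k$ columns every entry is nonzero, hence strictly positive (a ReLU output is never negative), so $\sigma$ acts as the identity there and $\Phi_{\nu_r}(\boldsymbol{x}_{ij}) = \boldsymbol{\lambda}_{\nu_r}^T\boldsymbol{z}_{ij}$ for all $r$ and all $i,j$; the nonlinearity disappears and each of these columns is linear in the data, exactly of the form of equation 2.16.

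Next I would make the dimension count explicit. Arrange the $\boldsymbol{z}_{ij}$ as the rows of a $\mathcal{C} \times (n+1)$ matrix $\boldsymbol{Z}$ and $\boldsymbol{\lambda}_{\nu_1}, \dots, \boldsymbol{\lambda}_{\nu_k}$ as the columns of an $(n+1) \times k$ matrix $\boldsymbol{\Lambda}$; then the $\mathcal{C}\times k$ submatrix of $\boldsymbol{\Phi}$ formed by the chosen columns equals $\boldsymbol{Z}\boldsymbol{\Lambda}$, so $\operatorname{rank}(\boldsymbol{Z}\boldsymbol{\Lambda}) \le n+1 < k$ and the $k$ chosen columns are linearly dependent. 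Equivalently, and closer in spirit to lemma 1: among $\boldsymbol{\lambda}_{\nu_1},\dots,\boldsymbol{\lambda}_{\nu_k}$ there are at most $n+1$ linearly independent vectors, so some $\boldsymbol{\lambda}_{\nu_t}$ is a linear combination of the others, and substituting this relation into equation 2.16, just as in equation 2.18, expresses $\boldsymbol{L}_{\nu_t}$ as the corresponding linear combination of the remaining chosen columns. Since under the condition of lemma 2 we have $m=\mathcal{C}$, so $\boldsymbol{\Phi}$ is square, a linear dependence among its columns forces $\det\boldsymbol{\Phi}=0$, which is the desired contradiction.

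I do not expect a serious obstacle here; the only point requiring care is the step where "all entries nonzero" is used to strip off $\sigma$ and recover the linear identity of equation 2.18 — this is precisely what fails once a zero entry appears, consistent with lemma 2, and is where the whole statement draws its force. It is also worth remarking in the write-up that the condition is only necessary and not sufficient: merely avoiding $k \ge n+2$ all-nonzero columns does not by itself produce a nonsingular matrix, which is why the explicit block construction $\boldsymbol{\Phi}_d$ of equation 2.19 was needed in lemma 2 to exhibit one.
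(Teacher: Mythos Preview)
Your proposal is correct and follows essentially the same route as the paper: the paper's proof simply states that the conclusion is a direct consequence of lemma 1, since any $k \ge n+2$ all-nonzero columns would force singularity, and your write-up is a fuller, but faithful, unpacking of exactly that argument. Your additional remarks (stripping off $\sigma$ via positivity, the rank bound on $\boldsymbol{Z}\boldsymbol{\Lambda}$, and the comment on necessity versus sufficiency) are welcome elaborations rather than a different approach.
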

\begin{proof}
The conclusion is a direct consequence of lemma 1. Any $k$ columns of $\boldsymbol{\Phi}$ for $k \ge n+2$ that have no zero elements would lead to the singularity of $\boldsymbol{\Phi}$.
\end{proof}

\section{Activation Mode of Solutions}
This section will classify the solutions of a three-layer network and model them in a uniform way, which is the basis of the solution finding in later sections.

\subsection{Preliminaries}
The next two definitions are the typical interpolation forms of $F = \bigcup_{i=1}^N F_i$ of equation 2.4 for a single subdomain.
\begin{dfn}
Under equation 2.4, suppose that cardinality $|F_i| \ge n+1$ for $i = 1, 2, \cdots, N$ , and that the points of $F_i$ determine a unique $n$-dimensional hyperplane $h_i$ of $n+1$-dimensional space. If network $n^{(1)}m^{(1)}1^{(1)}$ interpolates $F_i$ by $h_i$, we say that it is an exact interpolation.
\end{dfn}

\begin{dfn}
Under the notations of definition 2, suppose that the points of $F_i$ are on a lower-dimensional hyperplane of $n+1$-dimensional space, whose dimensionality is less than $n$. To network $n^{(1)}m^{(1)}1^{(1)}$, the interpolation of the lower-dimensional $F_i$ is in terms of a higher $n$-dimensional hyperplane, for which we call it a redundant interpolation.
\end{dfn}

\begin{figure}[!t]
\captionsetup{justification=centering}
\centering
\includegraphics[width=2.3in, trim = {4.8cm 3.3cm 3.5cm 3.4cm}, clip]{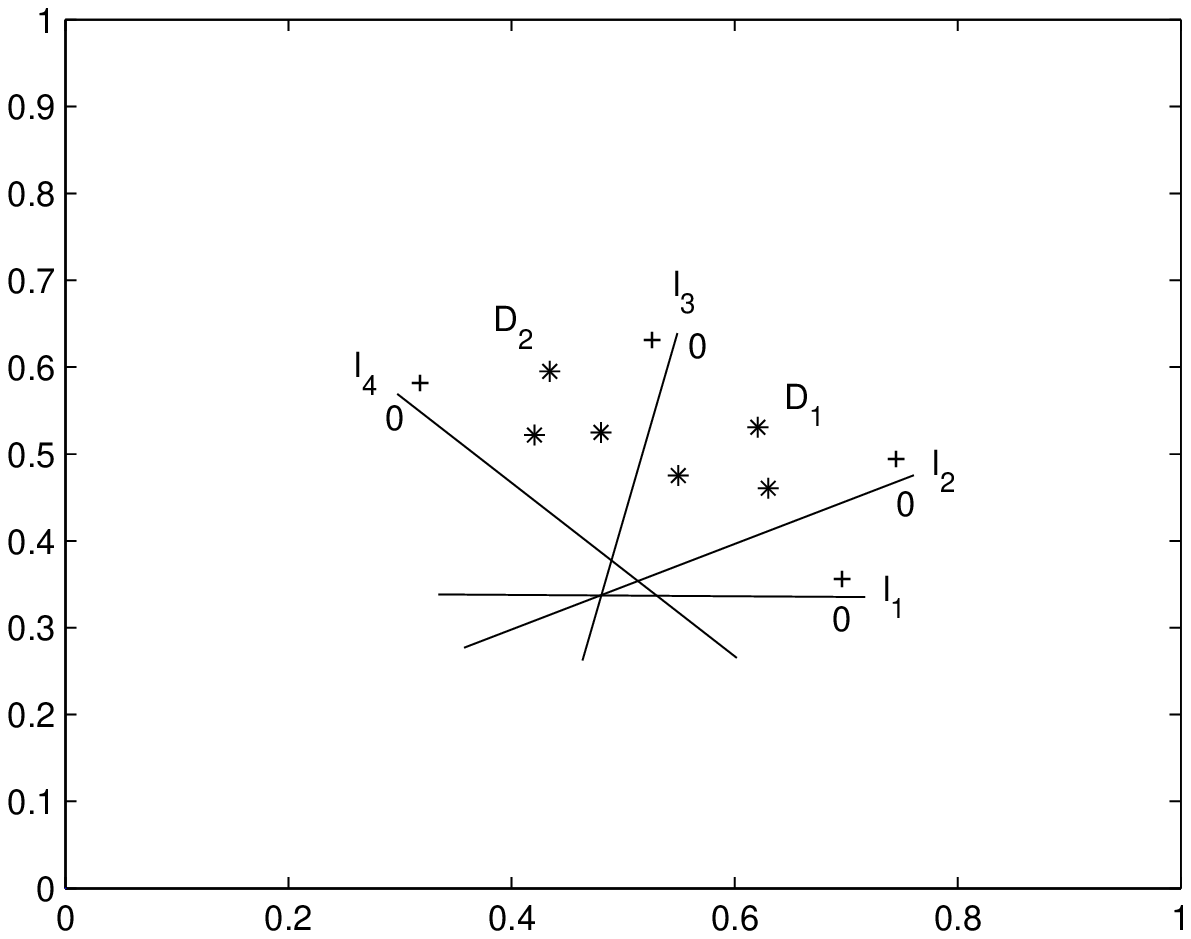}
\caption{Dependent piecewise linear interpolation.}
\label{Fig.1}
\end{figure}

In equation 2.4, the interpolation of $F_i$ may be independent of other subsets of $F$, or be correlated with some of them. The former case had been studied in lemmas 1, 2, and theorem 2 of \citet*{Huang2022a}, while the latter case has not been mentioned yet. We want to present a general model that can include both the above two cases. In order for that, first give their formal definitions.

\begin{lem}
To interpolate $F = \bigcup_{i=1}^N F_i$ of equation 2.4 by network $n^{(1)}m^{(1)}1^{(1)}$, some subsets among $F_i$'s may be correlated with each other, such that one of them cannot be interpolated by an arbitrary linear function, due to the parameter sharing between the associated subdomains $D_i$'s of equation 2.5.
\end{lem}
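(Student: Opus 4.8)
The plan is to establish the claim by exhibiting a concrete configuration --- the one illustrated in Figure~\ref{Fig.1} --- in which the linear function interpolating one subdomain is forced to lie in a proper subfamily of all affine functions once the interpolants of the other subdomains are fixed. Since the lemma only asserts that such a correlation \emph{may} occur, one worked configuration suffices; the work is to make that configuration explicit and show it genuinely imposes a constraint.

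First I would recall from equations 2.6--2.8 that the network realizes $g(\boldsymbol{x})=\sum_{\nu=1}^{m}\alpha_{\nu}\sigma(\boldsymbol{w}_{\nu}^{T}\boldsymbol{x}+b_{\nu})$, and that the parameters $\boldsymbol{w}_{\nu},b_{\nu}$ of each hidden unit $u_{\nu}$ --- hence the hyperplane $l_{\nu}$ it induces in the input space --- are common to all points of $D=\bigcup_{i}D_{i}$; this is the parameter sharing in question. Fix a subdomain $D_{i}$ and split the hidden units into those whose $l_{\nu}$ crosses $D_{i}$ and those whose $l_{\nu}$ does not; a unit of the latter kind contributes, on all of $D_{i}$, either the zero function (if $D_{i}$ lies in the half-space on which $u_{\nu}$ outputs zero) or the single affine function $\alpha_{\nu}(\boldsymbol{w}_{\nu}^{T}\boldsymbol{x}+b_{\nu})$. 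In the exact-interpolation setting of Definition 2 we may restrict to configurations in which every unit activated on $D_{i}$ contributes an affine function there (otherwise $g|_{D_{i}}$ carries a genuine crease and cannot equal a single hyperplane $h_{i}$), so that $g|_{D_{i}}=\sum_{\nu\in S_{i}}\alpha_{\nu}(\boldsymbol{w}_{\nu}^{T}\boldsymbol{x}+b_{\nu})$, where $S_{i}$ is the set of units activated on $D_{i}$.

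Next I would choose the geometry of the $D_{i}$'s and of the hidden hyperplanes so that two subdomains, say $D_{1}$ and $D_{2}$, satisfy $S_{1}\subseteq S_{2}$ with $S_{2}\setminus S_{1}$ either empty or a single unit $u_{p+1}$ whose parameters are already fixed. Then
\[
g|_{D_{2}}(\boldsymbol{x})=g|_{D_{1}}(\boldsymbol{x})+\sum_{\nu\in S_{2}\setminus S_{1}}\alpha_{\nu}(\boldsymbol{w}_{\nu}^{T}\boldsymbol{x}+b_{\nu}),
\]
so the affine function interpolating $F_{2}$ equals the one interpolating $F_{1}$ plus a term drawn from a \emph{fixed}, at most one-dimensional, family of affine functions. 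Consequently, once $h_{1}$ is pinned down by the exact interpolation of $F_{1}$, the interpolant of $F_{2}$ is confined to the coset $h_{1}+\operatorname{span}\{\boldsymbol{w}_{p+1}^{T}\boldsymbol{x}+b_{p+1}\}$, which is a proper subset of all affine functions on $\mathbb{R}^{n}$ whenever $n\ge 1$. This is exactly the asserted correlation: $F_{2}$ cannot be interpolated by an arbitrary linear function, and the reason is that the units of $S_{1}$, being shared with $D_{2}$, are constrained simultaneously by the data of both subdomains --- i.e.\ by the parameter sharing between $D_{1}$ and $D_{2}$.

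The only delicate point --- and where I would spend the care --- is making the exhibited configuration internally consistent and genuinely a \emph{constraint}: the points of $D_{1}$ and $D_{2}$ must all lie in a single crease-free region of the chosen hyperplane arrangement, the sets $S_{1},S_{2}$ must be realizable by an actual activation pattern of some valid parameter assignment, and the ``private'' units $S_{2}\setminus S_{1}$ must not themselves span all affine functions (which is why that set is kept of size at most one, or empty). I would settle all of this by writing out the picture of Figure~\ref{Fig.1} explicitly; the simplest instance is $n=1$ with $D_{1},D_{2}$ adjacent intervals whose union contains no breakpoint of $g$, so that $g|_{D_{1}\cup D_{2}}$ is a single line and the slope on $D_{2}$ is forced to equal that on $D_{1}$. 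Everything else in the argument is routine bookkeeping of which units activate on which subdomain.
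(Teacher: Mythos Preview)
Your proposal is correct and follows essentially the same approach as the paper: both exhibit the configuration of Figure~\ref{Fig.1}, observe that $D_1$ and $D_2$ share all but one activated unit, and conclude that once the shared output weights are fixed by the interpolation of $F_1$, the interpolant on $D_2$ is confined to a one-parameter family rather than the full space of affine functions. The paper's version is terser (it invokes the $n{+}1$-unit count from lemma~1 of \citet*{Huang2022a} and reads the activation pattern directly off the figure in dimension $n=2$), whereas you spell out the coset computation and suggest an $n=1$ instance, but the underlying argument is the same.
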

\begin{proof}
This conclusion is obvious by an example of Figure \ref{Fig.1}. To the implementation of an arbitrary linear function on $D_1$ or $D_2$, by lemma 1 of \citet*{Huang2022a}, at least three activated lines (units) are needed. However, as shown in Figure \ref{Fig.1}, $D_1$ shares three activated lines $l_1$, $l_2$ and $l_4$ with $D_2$, and the difference is only in line $l_3$. If the three activated lines that they share are used to produce a certain linear function on $D_1$, $D_2$ could only resort to $l_3$, and hence the linear function on $D_2$ is restricted. The general case is similar.
\end{proof}

\begin{dfn}
An independent piecewise linear interpolation of $F = \bigcup_{i=1}^NF_i$ of equation 2.4 is the one that the correlated case of lemma 3 does not happen; otherwise, we call it a dependent piecewise linear interpolation.
\end{dfn}

We change the interpolation matrix $\boldsymbol{\Phi}$ of equation 2.9 into the form
\begin{equation}
\boldsymbol{\Phi}_B = \boldsymbol{\Phi} = \begin{bmatrix}
\boldsymbol{B}_{11} & \boldsymbol{B}_{12} & \cdots & \boldsymbol{B}_{1M}\\
\boldsymbol{B}_{21} & \boldsymbol{B}_{22} & \cdots & \boldsymbol{B}_{2M}\\
\vdots & \vdots &\ddots & \vdots\\
\boldsymbol{B}_{N1} & \boldsymbol{B}_{N2} & \cdots & \boldsymbol{B}_{NM}
\end{bmatrix},
\end{equation}
where the column partitions the $m$ hyperplanes (or units) into $M$ groups, the row partitions the $\mathcal{C}$ points analogously, and the block $\boldsymbol{B}_{\nu\mu}$ for $1 \le \nu \le N$ and $1 \le \mu \le M$ is a matrix whose entries are the outputs of the $\nu$th group hyperplane with respect to the $\mu$th subdomain $D_{\mu}$.

\begin{dfn}
The activation mode of an interpolation matrix in the form of equation 3.1 is defined as
\begin{equation}
\mathcal{M} = \big(m_{\nu\mu}\big),
\end{equation}
where $m_{\nu\mu} = \boldsymbol{P}$, $\boldsymbol{0}$, $\boldsymbol{U}$, or $\boldsymbol{0}'$ for $1 \le \nu \le N$ and $1 \le \mu \le M$, The first three cases of the value of $m_{\nu\mu}$ correspond to positive-entry $\boldsymbol{B}_{\nu\mu}$, zero-entry $\boldsymbol{B}_{\nu\mu}$, and uncertain-entry (either positive or zero) $\boldsymbol{B}_{\nu\mu}$ of equation 3.1, respectively. The last case $\boldsymbol{0}'$ means that the corresponding $\boldsymbol{B}_{\nu\mu}$ has at least one column whose entries are all zero. We also call $\mathcal{M}$ the activation-mode matrix of equation 3.1.
\end{dfn}

\noindent
\textbf{Example}. The activation mode of equation 2.19 derived from distinguishable data sets is
\begin{equation}
\mathcal{M}^* = \begin{bmatrix}
\boldsymbol{P} & \boldsymbol{0} & \boldsymbol{0}  & \cdots & \boldsymbol{0}\\
\boldsymbol{U} & \boldsymbol{P} & \boldsymbol{0}  & \cdots & \boldsymbol{0}\\
\vdots & \vdots & \vdots &\ddots & \vdots\\
\boldsymbol{U} & \boldsymbol{U} & \boldsymbol{U} & \cdots & \boldsymbol{P}
\end{bmatrix}.
\end{equation}

\begin{dfn}
If we say that data set $D$ activates a set $H$ of hyperplanes, it means that all the elements of $H$ is activated by each point of $D$.
\end{dfn}

\begin{assm}
Let $l_i$ for $i = 1, 2, \cdots, m$ be the hyperplane corresponding to the $i$th unit of the hidden layer of network $n^{(1)}m^{(1)}1^{(1)}$. Under equation 2.1, suppose that to some subdomain $D_i$, we have $f(D_i) = 0$. Then if the divided region $\prod_{i=1}^{m}l_i^0$ exists, we will not consider the trivial solution of $D_i \subset \prod_{i=1}^{m}l_i^0$.
\end{assm}

\begin{lem}
To the interpolation matrix of equation 2.9 or 3.1, suppose that $\beta_i = |F_i| = n+1$ for $i = 1, 2, \cdots, N$ and $m = \mathcal{C} = N(n+1)$. In terms of the activation mode of equation 3.2, equation 3.3 is the unique solution of an independent piecewise linear interpolation for equation 2.1 or 2.4.
\end{lem}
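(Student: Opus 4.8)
The plan is to prove the statement in two halves: first, that the mode $\mathcal{M}^{*}$ of equation 3.3 does arise from an independent piecewise linear interpolation when $\beta_i=n+1$ and $m=\mathcal{C}=N(n+1)$; second, that the activation mode of \emph{any} independent piecewise linear interpolation equals $\mathcal{M}^{*}$ after a simultaneous relabelling of the $N$ subdomains and the $M$ column groups (so ``unique'' is meant up to this obvious symmetry). The first half is essentially already in hand: in the proof of lemma 2 the block-lower-triangular matrix $\boldsymbol{\Phi}_d$ of equation 2.19 has activation mode $\mathcal{M}^{*}$, and equations 2.19--2.22 show that its linear system decouples by forward substitution, each diagonal block $\boldsymbol{P}_{ii}$ being nonsingular precisely because the $n+1$ points of $F_i$ determine the unique hyperplane $h_i$; hence each $\boldsymbol{\alpha}_i$ is fixed freely by $\boldsymbol{y}_i$ once $\boldsymbol{\alpha}_1,\dots,\boldsymbol{\alpha}_{i-1}$ are fixed, which is exactly what it means for the interpolation to be independent.

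For the converse, let $\mathcal{M}$ be the activation mode of an independent piecewise linear interpolation of $F=\bigcup_{i=1}^{N}F_i$; I would argue in three steps. (i) Each subdomain must fully activate a set $G_i$ of $n+1$ hidden units whose restriction to the rows of $D_i$ is a nonsingular positive $(n+1)\times(n+1)$ block: by lemma 1 and proposition 1 of \citet*{Huang2022a}, realizing the unrestricted hyperplane $h_i$ on the $n+1$ points of $D_i$ requires at least $n+1$ hyperplanes fully activated by $D_i$, and one may take $n+1$ of them forming a positive block, which is nonsingular because $h_i$ is unique (cf. the argument for $\det\boldsymbol{P}_{ii}\neq 0$ in the proof of lemma 2). (ii) The groups $G_1,\dots,G_N$ are pairwise disjoint: a unit in $G_i\cap G_j$ would, once whichever of $D_i,D_j$ is interpolated first has been fitted, have its weight already determined, so it could not be one of the units through which the later subdomain is fitted freely; combined with $|G_1|+\dots+|G_N|=N(n+1)=m$ this forces the $G_i$ to partition the hidden layer and identifies $M=N$. (iii) Order the subdomains so that $D_i$ is interpolated after every subdomain whose group contains a unit that $D_i$ activates; such an order must exist, because otherwise a cyclic chain of ``borrowed'' units would place some $F_i$ in the situation of lemma 3, preventing it from being interpolated by an arbitrary linear function and contradicting independence. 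Under this labelling $D_i$ activates no unit of $G_j$ for $j>i$, so $m_{ij}=\boldsymbol{0}$ for $j>i$; $m_{ii}=\boldsymbol{P}$ by (i); and the blocks with $j<i$ are left unconstrained, i.e. $m_{ij}=\boldsymbol{U}$. This is exactly $\mathcal{M}^{*}$.

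I expect the real work to sit in step (iii): converting ``there is no admissible processing order'' into the precise conclusion of lemma 3, i.e. showing that along a cycle of mutually borrowed units the number of hidden units still free for some individual subdomain — after the other subdomains in the cycle have been fitted — drops below $n+1$, so that subdomain can no longer realize an arbitrary $h_i$. A subsidiary nuisance is making the disjointness argument of step (ii) independent of which processing order is used to witness independence, and disposing of the degenerate sub-cases (fewer than $n+1$ effectively independent $\boldsymbol{\lambda}_\nu$'s, or a subdomain with $f(D_i)=0$ that is ruled out by the standing assumption) exactly as in the proofs of lemmas 1 and 2. None of this needs an idea beyond lemmas 1--3 and theorem 2, but the bookkeeping of which columns each subdomain can still move must be carried out carefully.
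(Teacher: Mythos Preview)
Your proposal is correct and follows essentially the same route as the paper: both extract the two structural constraints (each $D_i$ must own a group of $n+1$ fully-activated units; any two subdomains must differ in at least $n+1$ activated units), use the count $m=N(n+1)$ to conclude that these groups partition the hidden layer so that $M=N$, and then argue that the resulting activation mode can be brought to the lower-triangular form $\mathcal{M}^*$ by relabelling rows and columns. Where you flag step~(iii)---the existence of a consistent processing order and the possible obstruction by cycles of ``borrowed'' units---as the point needing care, the paper handles it more informally: it exhibits the $N=4$ case, observes that any two modes satisfying the stated properties encode the same underlying relation $R$ between the $D_i$'s and the hyperplanes, and asserts that row/column swaps realise the normalisation; your extra caution there is warranted but does not depart from the paper's argument.
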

\begin{proof}
By lemmas 1 and 2 of \citet*{Huang2022a}, there are two principles for the construction of an independent piecewise linear interpolation. First, each subdomain $D_i$ should have at least $n+1$ activated hyperplanes. Second, given arbitrary two subdomains $D_{\nu}$ and $D_{\mu}$ for $1 \le \nu, \mu \le N$ with $\nu \ne \mu$, $D_{\nu}$ should be different from $D_{\mu}$ in at least $n+1$ activated hyperplanes.

Let $H$ be the set of $m$ hyperplanes associated with the units of the hidden layer of $n^{(1)}m^{(1)}1^{(1)}$. The set $H$ could be decomposed into
\begin{equation}
H = \bigcup_{i=1}^NH_i,
\end{equation}
with $|H_i| = n+1$ and $H_{\nu} \cap H_{\mu} = \emptyset$ if $\nu \ne \mu$ for $\nu, \mu = 1,2, \cdots, N$. Correspondingly, $M = N$ in equation 3.1 and $\boldsymbol{\Phi}_B$ is an $N \times N$ block matrix with each block an $(n+1) \times (n+1)$ matrix. The decomposition method of equation 3.4 is as follows.

We examine the activation mode by the above two principles. For instance, to subdomains $D_1$ and $D_2$ with respect to the first two rows of $\boldsymbol{\Phi}_B$ of equation 3.1, in order to satisfy those principles, make $D_1$ and $D_2$ only activate $H_1$ and $H_2$ of equation 3.4, respectively. Then to the activation-mode matrix $\mathcal{M}$ of equation 3.2, its element $m_{11} = \boldsymbol{P}$ and $m_{22} = \boldsymbol{P}$. And there are two ways to distinguish between $D_2$ and $D_1$: $D_2$ doesn't activate $H_1$ implying $m_{21} = \boldsymbol{0}$, or $D_1$ doesn't activate $H_2$ such that $m_{12} = \boldsymbol{0}$.

In general, there are two properties of $\mathcal{M}$ for this lemma: (1) For each $D_i$, the $i$th row of $\mathcal{M}$ has one $\boldsymbol{P}$ element; (2) $D_{\nu}$ is different from $D_{\mu}$ at least either by $m_{\nu\mu} = \boldsymbol{0}$ or by $m_{\mu\nu} = \boldsymbol{0}$ for arbitrary $1 \le \nu, \mu \le N$.

For example, when $N =4$, we construct two activation-mode matrices that satisfy the above properties as:
\begin{equation}
\mathcal{M}'_4 = \begin{bmatrix}
\boldsymbol{P} & \boldsymbol{U} & \boldsymbol{0}  & \boldsymbol{U}\\
\boldsymbol{0} & \boldsymbol{P} & \boldsymbol{0}  & \boldsymbol{0}\\
\boldsymbol{U} & \boldsymbol{U} & \boldsymbol{P}  & \boldsymbol{U}\\
\boldsymbol{0} & \boldsymbol{U} & \boldsymbol{0}  & \boldsymbol{P}
\end{bmatrix},
\mathcal{M}_4 = \begin{bmatrix}
\boldsymbol{P} & \boldsymbol{0} & \boldsymbol{0}  & \boldsymbol{0}\\
\boldsymbol{U} & \boldsymbol{P} & \boldsymbol{0}  & \boldsymbol{0}\\
\boldsymbol{U} & \boldsymbol{U} & \boldsymbol{P}  & \boldsymbol{0}\\
\boldsymbol{U} & \boldsymbol{U} & \boldsymbol{U}  & \boldsymbol{P}
\end{bmatrix}.
\end{equation}

Now we prove that in equation 3.5, $\mathcal{M}_4$ is equivalent to $\mathcal{M}_4'$ from the perspective of a relation underlying the activation-mode matrix. To fixed data $D = \bigcup_{i=1}^{N}D_i$ and hyperplanes $H = \bigcup_{j=1}^{m}H_i$, the relation whether $D_i \subset l_j^+$ or $D_i \subset l_j^0$ with $l_j \in H$ for each $i$ and $j$ is determined, which will be called relation $R$, describing the mutual relationships of the points and hyperplanes of $D$ and $H$, respectively.

Note that $\mathcal{M}_4$ and $\mathcal{M}_4'$ of equation 3.5 are from the same relation $R$, since they both satisfy the above two properties. The distinction between them could be resolved by exchanging the subscripts of $D_i$'s and $L_j$'s, without influencing the underlying relation $R$ because only notations are modified. And the subscript exchanges of $D_i$'s and $L_j$'s correspond to the row swap and column swap of $\mathcal{M}$, respectively, which could facilitate this transform. In fact, for example, we can transform $\mathcal{M}_4'$ to $\mathcal{M}_4$ by row and column swap operations.

The general case is similar, and the standard solution mode could be expressed in the form of equation 3.3. Since $m = N(n+1)$, each subdomain $D_i$ for $i = 1, 2, \cdots, N$ should have exactly $n+1$ activated hyperplanes, and the construction method above is the only way to realize an independent piecewise linear interpolation, which corresponds to the decomposition method of set $H$ of equation 3.4. This completes the proof.

\end{proof}

\subsection{General Results}
\begin{assm}
Using the notations of assumption 1, we assume that subdomains $D_i$'s of equation 2.2 are in different divided regions of $l_i$'s.
\end{assm}

\begin{lem}
Under assumptions 1 and 2, there are two properties of the solution of equation 2.8: (1) each subdomain $D_{i}$ of equation 2.2 should activate at least one hyperplane; (2) $D_{\nu}$ should be distinct from $D_{\mu}$ in at least one activated hyperplane for $1 \le \nu, \mu \le N$ with $\nu \ne \mu$.
\end{lem}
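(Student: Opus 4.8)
The plan is to prove the two properties separately, each by contradiction, using assumption 2 to guarantee that the activation pattern is constant over every subdomain $D_i$ (so that "$D_i$ activates $H$'' is well defined), and using assumption 1 to discard the degenerate all-silent interpolation of a vanishing piece.

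For property (1), suppose that some $D_i$ activates no hidden unit. By assumption 2 all points of $D_i$ lie in a single divided region of the $l_i$'s, and "activates no hyperplane'' means that this region is contained in $\prod_{\nu=1}^{m} l_\nu^0$, i.e.\ $\Phi_\nu(\boldsymbol{x}_{ij}) = \sigma(\boldsymbol{w}_\nu^T\boldsymbol{x}_{ij}+b_\nu) = 0$ for every $\nu$ and every $j$. Hence each row of $\boldsymbol{\Phi}$ of equation 2.9 indexed by a point of $D_i$ is the zero row, so $\boldsymbol{\Phi}\boldsymbol{\alpha} = \boldsymbol{y}$ forces $\boldsymbol{y}_i = \boldsymbol{0}$, that is $f(D_i) = 0$. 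But $f(D_i) = 0$ together with $D_i \subset \prod_{\nu=1}^{m} l_\nu^0$ is precisely the trivial solution ruled out by assumption 1, a contradiction. Therefore each $D_i$ activates at least one hyperplane.

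For property (2), suppose that $D_\nu$ and $D_\mu$ with $\nu \neq \mu$ do not differ in any activated hyperplane, i.e.\ they activate exactly the same set $H_0 \subseteq \{l_1, l_2, \cdots, l_m\}$. I would first record the basic fact that on the part of the input space where precisely the units indexed by $H_0$ are active, the network collapses to the single affine map $g(\boldsymbol{x}) = \sum_{l_t \in H_0}\alpha_t(\boldsymbol{w}_t^T\boldsymbol{x}+b_t)$, since every unit outside $H_0$ contributes $0$ there. By assumption 2, $D_\nu$ and $D_\mu$ each lie in one divided region, and an open divided region is pinned down by its sign pattern, hence by its set of activated hyperplanes; so $D_\nu$ and $D_\mu$ would lie in the same divided region, contradicting assumption 2. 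Equivalently, and without relying on divided regions being open cells: the solution of equation 2.8 would interpolate both $F_\nu$ and $F_\mu$ by the one affine function $g$, which is incompatible with $F_\nu$ and $F_\mu$ being samples of two genuinely distinct linear components $h_\nu \neq h_\mu$ of $f$ (cf.\ lemmas 1 and 2 of \citet*{Huang2022a} and the discussion preceding lemma 3). Hence $D_\nu$ must be distinct from $D_\mu$ in at least one activated hyperplane.

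The routine part is (1). The delicate part is (2) in the boundary configurations where a subdomain sits on some $l_i^0$, so that two subdomains can share the same activated set while formally lying in different, lower-dimensional divided regions; resolving this cleanly is exactly where I would lean on the "one affine map per activation cell'' observation together with the hypothesis that the linear pieces $h_\nu$ of the target $f$ are genuinely different — the same mechanism isolated by lemmas 1 and 2 of \citet*{Huang2022a} — rather than arguing purely combinatorially about the arrangement of divided regions.
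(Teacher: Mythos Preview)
Your proof is correct and follows the same route as the paper's: property (1) is forced by assumption 1 (otherwise $D_i \subset \prod_\nu l_\nu^0$ with $f(D_i)=0$, the excluded trivial solution) and property (2) by assumption 2 (identical activation patterns would place $D_\nu$ and $D_\mu$ in the same divided region). The paper's own proof is a single sentence declaring the lemma an ``immediate consequence of the above two assumptions'' together with one line of intuition per property; you have simply supplied the contradiction arguments and the boundary-case discussion that the paper leaves implicit.
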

\begin{proof}
This lemma is an immediate consequence of the above two assumptions. The first property ensures an adjustable hyperplane to fit $F_i$ of equation 2.4 associated with $D_i$, and the second property makes the hyperplanes for different subdomains distinct.
\end{proof}

\begin{thm}
To interpolate a piecewise linear function of equation 2.1 or 2.4 by network $n^{(1)}m^{(1)}1^{(1)}$, under assumptions 1 and 2, the activation mode of any solution could be expressed in the form
\begin{equation}
\mathcal{M}^* = \begin{bmatrix}
\boldsymbol{P} & \boldsymbol{0}' & \boldsymbol{0}'  & \cdots & \boldsymbol{0}'\\
\boldsymbol{U} & \boldsymbol{P} & \boldsymbol{0}'  & \cdots & \boldsymbol{0}'\\
\vdots & \vdots & \vdots &\ddots & \vdots\\
\boldsymbol{U} & \boldsymbol{U} & \boldsymbol{U} & \cdots & \boldsymbol{P}
\end{bmatrix}.
\end{equation}
\end{thm}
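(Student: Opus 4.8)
The plan is to translate the statement into a combinatorial fact about which hidden-layer hyperplanes each subdomain activates, and then to read off the pattern $\mathcal{M}^*$ from a suitable ordering of the rows together with a suitable grouping of the columns; lemma 5 supplies exactly the two combinatorial inputs that make this work.

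Concretely, I would fix an arbitrary solution and let $A_i\subseteq\{l_1,\dots,l_m\}$ be the set of hidden-layer hyperplanes activated by $D_i$. By assumption 2 each $D_i$ lies in a single divided region of the $l_j$'s, so for every $j$ either $D_i\subset l_j^+$ or $D_i\subset l_j^0$; hence $A_i$ is well defined, and for any column group $G$ the corresponding block of $\mathcal{M}$ is $\boldsymbol{P}$ when $G\subseteq A_i$, is $\boldsymbol{0}$ when $G\cap A_i=\emptyset$, and otherwise has an all-zero column and so is of type $\boldsymbol{0}'$. Lemma 5 then gives the two properties I need: $A_i\neq\emptyset$ for every $i$ (this is where assumption 1 is used, discarding the trivial solution in which a subdomain with $f(D_i)=0$ activates nothing), and $A_\nu\neq A_\mu$ for $\nu\neq\mu$.

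Next I would order the rows and build the column groups. Since $\subseteq$ is a partial order on the pairwise-distinct sets $A_1,\dots,A_N$, relabel the subdomains along a linear extension of it, so that $A_i\subseteq A_j$ forces $i\le j$. Writing $U_i:=A_1\cup\cdots\cup A_i$ and $U_0:=\emptyset$, I would scan $i=1,2,\dots,N$ and, whenever $A_i\setminus U_{i-1}\neq\emptyset$, open a new column group equal to the ``freshly activated'' set $A_i\setminus U_{i-1}$ and attach it to $D_i$; any hyperplane activated by no subdomain is placed in the first group. This produces a partition of $\{l_1,\dots,l_m\}$ into $M\le N$ column groups, ordered by their owning rows.

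Finally I would verify the three zones of $\mathcal{M}^*$. A subdomain that owns a group activates all of it, giving a $\boldsymbol{P}$ block on the diagonal. If row $k$ meets a group owned by a later subdomain $D_t$ (so $k<t$), the group's hyperplanes lie outside $U_{t-1}\supseteq A_k$, so $D_k$ activates none of them and the block is $\boldsymbol{0}$, i.e. of type $\boldsymbol{0}'$, which is exactly what the strictly-upper part requires; every remaining block is strictly below the diagonal and may simply be recorded as the wildcard $\boldsymbol{U}$. When each $A_i$ contributes at least one fresh hyperplane this gives $M=N$ and the pattern is literally $\mathcal{M}^*$. The step I expect to be the real obstacle is the remaining case, where some subdomain's activated set is already contained in the union of the others' and so cannot be assigned a private column: one then has $M<N$, the surplus rows carry only $\boldsymbol{U}$'s and $\boldsymbol{0}$'s, and one must argue that this ragged array is still ``the form $\mathcal{M}^*$'' — a relabeling matter in the spirit of the passage between $\mathcal{M}_4$ and $\mathcal{M}_4'$ via the relation $R$ in the proof of lemma 4 — without disturbing the entries already pinned down as $\boldsymbol{P}$ or $\boldsymbol{0}'$.
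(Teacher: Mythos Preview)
Your strategy is the paper's own: invoke lemma 5 for the two constraints on the activation sets $A_i$, order the rows and group the columns so that each subdomain acquires a private $\boldsymbol{P}$ block with $\boldsymbol{0}'$ (in fact $\boldsymbol{0}$) blocks above it, and appeal to the swap-equivalence from the proof of lemma 4 for normalisation. The paper does this by a pairwise inductive sketch ($D_1$, $D_2$, then add $D_3$, then ``the remaining part can be similarly dealt with''); your linear extension of $\subseteq$ together with the groups $A_i\setminus U_{i-1}$ makes the same construction precise.

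One small slip: putting never-activated hyperplanes into the first group spoils $m_{11}=\boldsymbol{P}$, since such a hyperplane contributes an all-zero column for every subdomain, $D_1$ included. Either discard those hyperplanes (their output weights are immaterial to the interpolation) or argue separately that an extra all-$\boldsymbol{0}$ column block is harmless.

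Your flagged obstacle is real, and the paper does not resolve it either. Take $A_1=\{l_1,l_2\}$, $A_2=\{l_2,l_3\}$, $A_3=\{l_1,l_3\}$ (three lines in the plane, subdomains in the regions $l_1^+l_2^+l_3^0$, $l_1^0l_2^+l_3^+$, $l_1^+l_2^0l_3^+$): both conclusions of lemma 5 hold and a solution exists for a suitable $f$, yet no row order and no partition into three nonempty groups yields three diagonal $\boldsymbol{P}$'s with $\boldsymbol{0}'$ above, because every hyperplane lies in exactly two of the $A_i$'s and so none can serve the last diagonal block. The paper passes over this with the same gesture toward the relation-$R$ swap argument that you invoke, so your caveat does not put you behind the paper; you have simply been more candid about where the argument is informal.
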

\begin{proof}
Compared to lemma 4, in this case, each subdomain $D_i$ doesn't necessarily activate $n+1$ hyperplanes, and the $D_{\nu}$ and $D_{\mu}$ for $1 \le \nu, \mu \le N$ with $\nu \ne \mu$ need not to be discriminated by $n+1$ activated hyperplanes.

We construct the activation-mode matrix $\mathcal{M}$ according to lemma 5. To $D_1$ and $D_2$, we can arrange the order of the $m$ hyperplanes, such that $m_{11} = \boldsymbol{P}$, $m_{22} = \boldsymbol{P}$, and $m_{12} = \boldsymbol{0}$ or $m_{21} = \boldsymbol{0}$, which is the same as the case of lemma 4. But if $D_3$ is added, matrix $\mathcal{M}$ may be in this form
\begin{equation}
\mathcal{M}_3' = \begin{bmatrix}
\boldsymbol{P} & \boldsymbol{U} & \boldsymbol{U}\\
\boldsymbol{0} & \boldsymbol{P} & \boldsymbol{0'}\\
\boldsymbol{0}' & \boldsymbol{U} & \boldsymbol{P}
\end{bmatrix},
\end{equation}
where $m_{31} = \boldsymbol{0}'$ indicates that the number $n_3$ of the activated hyperplanes distinguishing between $D_3$ and $D_1$ needs not necessarily be the same as the number $n_2$ for $m_{21} = 0$ between $D_2$ and $D_1$. If $n_3 < n_2$, the definition of $\boldsymbol{0}'$ of equation 3.2 could express this relationship; otherwise, we can still use the original notation $\boldsymbol{0}$, which is a special case of $\boldsymbol{0}'$.

The remaining part can be similarly dealt with. For consistency of the notation, we write $m_{12} = \boldsymbol{0}'$ or $m_{21} = \boldsymbol{0}'$, which could include the $\boldsymbol{0}$ case, and similarly for other $\boldsymbol{0}$ entries. Finally, due to the same reason as equation 3.5, the standard form of the activation-mode matrix is equation 3.6.
\end{proof}

Next result shows how we could reach the solution mode of equation 3.6 in more specific forms, which are closely related to applications.
\begin{prp}
Three factors could lead to the solution with entry $\boldsymbol{0}'$ of equation 3.6 distinct from entry $\boldsymbol{0}$ of equation 3.3. The first is that there exists some $F_i$ of equation 2.4 whose interpolation is redundant. The second is due to the dependency between $F_i$'s for dependent piecewise linear interpolations. The third is about the overparameterization solution of lemma 2 of \citet*{Huang2022a}.
\end{prp}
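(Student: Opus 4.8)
\noindent\emph{Proof proposal.} The plan is to treat the three factors separately, for each exhibiting a solution whose activation-mode matrix, once put into the normalised form of theorem 2, contains a block that is genuinely $\boldsymbol{0}'$ --- some all-zero columns together with some columns having nonzero entries --- rather than the fully-zero block $\boldsymbol{0}$ that occurs in equation 3.3. In every case the lever is the dichotomy isolated inside the proof of theorem 2: two subdomains $D_{\nu}$ and $D_{\mu}$ must be separated by at least one hyperplane that one of them fails to activate, but the number of separating hyperplanes may be strictly smaller than the size of the hyperplane group into which they have been bundled in equation 3.1; exactly when this happens the corresponding off-diagonal block is forced to have some zero columns and some nonzero ones, i.e. it is $\boldsymbol{0}'$ but not $\boldsymbol{0}$. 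The remaining work for each factor is to (i) produce such a configuration and (ii) argue the $\boldsymbol{0}'$ pattern is essential, in the sense that no relabelling of the $D_i$'s and $l_j$'s and no regrouping of the $m$ hyperplanes into $M$ groups can restore the clean form of equation 3.3; here the relation $R$ of the proof of lemma 4 and the fact that row/column swaps leave $R$ intact are the right bookkeeping devices.

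For the first factor I would take some $F_i$ lying on a lower-dimensional hyperplane of dimension $k<n$, as in definition 3, and count degrees of freedom: the $\beta_i\ge n+1$ incidence equations $\boldsymbol{w}^{T}\boldsymbol{x}_{ij}+b=y_{ij}$ have rank at most $k+1$, so the $n$-dimensional interpolating hyperplane through $F_i$ is far from unique and $D_i$ can be fixed using fewer than $n+1$ fully activated hyperplanes. Hence, once the hyperplanes handling $D_i$ are grouped so that its diagonal block equals the positive block $\boldsymbol{P}$, another subdomain may activate part of that group while still being distinguished from $D_i$ by the hyperplane(s) of the group it misses; the resulting off-diagonal block is precisely of the $\boldsymbol{0}'$ type. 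A small explicit instance --- say $n=2$ with two subdomains one of which is collinear --- makes this concrete, and the degree-of-freedom count is what rules out realising the same $R$ by a disjoint-group activation pattern of the kind underlying lemma 4.

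For the second factor I would use the configuration of Figure 1 together with lemma 3: $D_1$ and $D_2$ share the three activated lines $l_1,l_2,l_4$ and differ only in $l_3$. Any partition of the four units into groups places at least one of the shared lines in the same group as $l_3$; for whichever of $D_1,D_2$ does not activate $l_3$, that group's block has the $l_3$-column identically zero but a shared-line column nonzero, so it is $\boldsymbol{0}'$ and not $\boldsymbol{0}$. Since the sharing is imposed by the dependency (lemma 3) rather than by a choice of labels, no permutation of units or points removes it, so the normalised mode of theorem 2 genuinely requires a $\boldsymbol{0}'$ entry in the dependent case. For the third factor I would invoke the overparameterisation solution of lemma 2 of \citet*{Huang2022a} directly: there the extra units appended to an already-working interpolation contribute columns that vanish on some subdomains and not on others, so the blocks they sit in are $\boldsymbol{0}'$; reading off the activation mode of that solution and normalising it by theorem 2 yields the claim.

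The step I expect to be the main obstacle is the first factor --- not producing a redundant interpolation, which is trivial, but verifying the essentiality of the $\boldsymbol{0}'$ block, i.e. that one cannot, by a different bundling of the hyperplanes in equation 3.1, collapse the configuration back to the form of equation 3.3 with full $\boldsymbol{0}$ blocks above the diagonal. I would handle this by showing that the relation $R$ attached to the redundant $F_i$ is incompatible with any ``disjoint $(n+1)$-group'' activation pattern (the pattern forced by lemma 4 when $m=N(n+1)$), using the reduced rank $k+1<n+1$ of the incidence equations as the quantitative obstruction. The dependent and overparameterised cases are comparatively routine once the first is settled, since in those the sharing (respectively the redundant units) is already present in the hypotheses of lemma 3 and lemma 2 of \citet*{Huang2022a}.
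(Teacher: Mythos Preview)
Your plan is close in spirit, but you have over-read the proposition: it says these factors \emph{could} lead to a $\boldsymbol{0}'$ entry distinct from $\boldsymbol{0}$, which is a possibility claim, not that the $\boldsymbol{0}'$ pattern is unavoidable under every regrouping. Your ``essentiality'' step (ii) is therefore unnecessary, and in your second case it is actually false: in the Figure~1 configuration one may put $l_3$ alone in one group and $\{l_1,l_2,l_4\}$ in the other, and the resulting $2\times 2$ activation-mode matrix has a genuine $\boldsymbol{0}$ above the diagonal after the standard row/column swap---so a regrouping \emph{does} restore the form of equation~3.3. Drop (ii) entirely and keep only the construction of a configuration exhibiting a $\boldsymbol{0}'$ block.

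The paper's proof is also organised into the three cases, but the unifying thread is different from yours. It does not argue about how partitions force mixed columns; instead it counts, for each factor, how many activated hyperplanes a single subdomain $D_i$ actually needs in order to realise its linear piece $h_i$, working through the linear-output system $\boldsymbol{\mathcal{W}}\boldsymbol{\alpha}_i=\boldsymbol{b}_i$ of \citet*{Huang2022a}. In Case~1 the paper writes a parametric equation for the $k$-flat carrying $F_i$, substitutes it into the equation of an $n$-dimensional hyperplane through that flat, and feeds the resulting constraints into $\boldsymbol{\mathcal{W}}\boldsymbol{\alpha}_i=\boldsymbol{b}_i$ to conclude that $k+1<n+1$ activated hyperplanes already suffice; your degree-of-freedom count is the right intuition, but the paper routes it through this system rather than through incidence equations on the data directly. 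In Case~2 the dependency is exploited algebraically via the rank condition $\mathrm{rank}(\boldsymbol{\mathcal{W}})=\mathrm{rank}([\boldsymbol{\mathcal{W}},\boldsymbol{b}_i])$, which can hold with $\mu<n+1$ unknowns once $\boldsymbol{b}_i$ is tied to a correlated $F_j$---not via a partition-counting argument on Figure~1. In Case~3, lemma~2 of \citet*{Huang2022a} is invoked to allow \emph{more} than $n+1$ activated hyperplanes without disturbing $h_i$. In every case the point is that the rigid ``exactly $n+1$ hyperplanes per subdomain'' count behind lemma~4 and equation~3.3 is broken, which is what permits the unequal group sizes and hence off-diagonal blocks with only some all-zero columns, i.e.\ the $\boldsymbol{0}'$ type.
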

\begin{proof}
The proof consists of three cases, corresponding to the above three factors, respectively.

\textbf{Case 1}. Let $N_3 = n^{(1)}m^{(1)}1^{(1)}$. If $F_i$ is on a lower-dimensional hyperplane of $n+1$-dimensional space $\boldsymbol{X}_{n+1}$ whose dimensionality is less than $n$,  its interpolation doesn't need $n+1$ activated hyperplanes to determine a unique $n$-dimensional hyperplane. This could result in the flexible parameter setting associated with $\boldsymbol{0}'$ of equation 3.6.

In fact, denote a $k$-dimensional hyperplane $l_k$ for $k < n$ that can interpolate $F_i$ by parametric equation
\begin{equation}
\boldsymbol{x} = \boldsymbol{x}_0 + \sum_{j = 1}^{k}t_{j}\boldsymbol{\lambda}_{j},
\end{equation}
which is embedded in $\boldsymbol{X}_{n+1}$ and all the vectors are of size $(n+1) \times 1$. Let ${\boldsymbol{w}'}^T\boldsymbol{x}' - y + b = 0$ be the equation of an $n$-dimensional hyperplane $l_i \subset \boldsymbol{X}_{n+1}$ passing through $l_k$, which corresponds to the output
\begin{equation}
y = {\boldsymbol{w}'}^T\boldsymbol{x}' + b
\end{equation}
of network $N_3$, where $\boldsymbol{x}'$ is a vector of the $n$-dimensional input space. Let $\boldsymbol{x} = [\boldsymbol{x}'^T, y]^T$ and $\boldsymbol{w} = [\boldsymbol{w}'^T, -1]^T$. Then the equation of $l_i$ could be expressed as
\begin{equation}
\boldsymbol{w}^T\boldsymbol{x} + b = 0,
\end{equation}
which is an equation of $n$-dimensional hyperplanes of $\boldsymbol{X}_{n+1}$.

Substituting equation 3.8 into equation 3.10 gives $\boldsymbol{w}^T\boldsymbol{x}_0 + b =0$ and $\boldsymbol{w}^T\boldsymbol{\lambda}_{j} = 0$ for $j = 1, 2, \cdots, k$, which can be integrated into
\begin{equation}
\boldsymbol{A}\boldsymbol{w} = \boldsymbol{b},
\end{equation}
where $\boldsymbol{A} = [\boldsymbol{\lambda}_1, \boldsymbol{\lambda}_2, \cdots, \boldsymbol{\lambda}_k, \boldsymbol{x}_0]^T$ with size $(k+1) \times (n+1)$, and $\boldsymbol{b} = [0, \cdots, 0, -b]^T$ is a vector of size $(k+1) \times 1$.

In the output layer of $N_3$, we should realize $l_i$ of equation 3.10 subject to equation 3.11 by producing $\boldsymbol{w}'$ and $b$ of equation 3.9, through the method of lemma 1 or 2 of \citet*{Huang2022a}. Analogous to equation 4.3 of \citet*{Huang2022a}, we have
\begin{equation}
\boldsymbol{\mathcal{W}}\boldsymbol{\alpha}_i = \boldsymbol{b}_i,
\end{equation}
where $\boldsymbol{b}_i = [{\boldsymbol{w}'}^T, b]^T$ of size $(n+1) \times 1$ includes the unknown parameters of $l_i$; the $\nu$ entries of $\boldsymbol{\alpha}_i = [a_1, a_2, \cdots, a_{\nu}]^T$ are the output weights of the $\nu$ activated hyperplanes of subdomain $D_i$ associated $F_i$; and $\boldsymbol{\mathcal{W}}$ of size $(n+1) \times \nu$ is the linear-output matrix of $D_i$ (definition 8 of \citet*{Huang2022a}). The final output is obtained by adjusting the $\nu$ variables of $\boldsymbol{\alpha}_i$.

By equation 3.12, each entry of $\boldsymbol{w}'$ as well as $b$ of equation 3.9 could be represented as the linear combination of the variables of $\boldsymbol{\alpha}_i$. Thus, substituting equation 3.12 into equation 3.11 yields $k+1$ linear equations with the entries of $\boldsymbol{\alpha}_i$ as the $\nu$ unknowns, which could be not homogeneous due to the constant entry $-1$ of $\boldsymbol{w}$ of equation 3.10 or 3.11. Then it is possible to find a nonzero-vector solution of $\boldsymbol{\alpha}_i$ to interpolate $F_i$ when $\nu \ge k+1$, which means that at least $k+1$ activated hyperplanes would be enough. Since $k+1 < n+1$, the conclusion follows.

\textbf{Case 2}. Even if $F_i$ determines a unique $n$-dimensional hyperplane $h_i$, the correlation between $F_i$ and some other $F_{j}$ for $j \ne i$ may result in the realization of $h_i$ by less than $n+1$ activated hyperplanes. The details are as follows.

Let $y = \boldsymbol{w}^T\boldsymbol{x} + b$ be the hyperplane of $h_i$ to be implemented, where $\boldsymbol{x}$ is a point of the $n$-dimensional input space of $N_3$. We consider $\boldsymbol{\mathcal{W}}\boldsymbol{\alpha}_i = \boldsymbol{b}_i$ of equation 3.12 as the method of realizing $h_i$, where the $\mu$ entries of $\boldsymbol{\alpha}_i = [a_1, a_2, \cdots, a_{\mu}]^T$ with $\mu < n+1$ are the unknowns and $\boldsymbol{b}_i = [\boldsymbol{w}^T, b]^T$. In this case, the number of equations is greater than that of the unknowns of $\boldsymbol{\alpha}_i$. However, there would exist a solution if $\text{rank}({\boldsymbol{\mathcal{W}}}) = \text{rank}([{\boldsymbol{\mathcal{W}}}, \boldsymbol{b}_i])$. Due the arbitrary selection of $\boldsymbol{b}_i$, we could find some hyperplane satisfying this condition.

\textbf{Case 3}. By lemma 2 of \citet*{Huang2022a}, the redundant activated hyperplanes of $D_i$ besides some $n+1$ necessary ones could not influence the implementation of an $n$-dimensional hyperplane $h_i$ of $F_i$. This also makes the parameter setting related to $\boldsymbol{0}'$ flexible, analogous to case 1.
\end{proof}

\section{Overparameterization Solutions}
There have been some experimental observations and analyses on overparameterization solutions, such as \citet*{Allen-Zhu2019}, \citet*{Du2019}, and \citet*{Li2018}. Our result of this section emphasizes the theoretical background that could lead to the overparameterization solution of three-layer networks.

We have explained the mechanism of overparameterization from several aspects in our preceding works \citep*{Huang2022a,Huang2022b}. In \citet*{Huang2022a}, to a three-layer network, lemma 2 indicated that the redundant activated hyperplanes could not influence the implementation of a linear function on a certain domain.

To the deep-layer case \citep*{Huang2022a}, the overparameterization due to the affine-transform redundancy is manifested in two ways. The first is that for $n$-dimensional input space, more than $n$ units can still be capable of realizing an arbitrary affine transform (proposition 6), despite the excess ones. The second is that any number of redundant layers could be added if they can transmit some data in the sense of affine transforms (the proof of lemma 6).

In \citet*{Huang2022b}, the redundant units for affine transforms were used to formulate the characteristic of the architecture of an encoder, that is, the number of units decreases monotonically as the depth of the layer grows. This is a type of overparameterization solution for encoders.

This paper focuses on the overparameterization mechanism of three-layer networks from the viewpoint of interpolation matrices, which can explain some kinds of overparameterization solutions.

\subsection{Model Description}
Note that when the underlying piecewise linear function $f$ of equation 2.1 or 2.4 is unknown, which is in fact the usual case in practice, there's not only one way to decompose the points of $F$ of equation 2.4 into subsets. Different decomposition ways correspond to different interpolation solutions, respectively. Thus, we reformulate the model of section 2.1 to omit the information of piecewise linear components.

Denote a set of data points of $n+1$-dimensional space by
\begin{equation}
F = \{(\boldsymbol{x}_k, y_k); k = 1, 2, \cdots, \mathcal{C}\},
\end{equation}
where $\boldsymbol{x}_k \in \mathbb{R}^n$ and $y_k \in \mathbb{R}$, which are to be interpolated. Equation 4.1 can be regarded as a discrete function
\begin{equation}
f: D \to \mathbb{R},
\end{equation}
where the domain $D = \{\boldsymbol{x}_{k}; k = 1, 2, \cdots, \mathcal{C} \} \subset \mathbb{R}^n$, and the range is $R = \{y_{k}; k = 1, 2, \cdots, \mathcal{C} \} \subset \mathbb{R}$.

Compared to the model of equation 2.1 or 2.4, equation 4.1 or 4.2 assumes that the information of piecewise linear components is unknown, and is to be discovered or formulated. Then equation 2.8 for interpolations is modified to
\begin{equation}
\boldsymbol{\Psi}\boldsymbol{\alpha} = \boldsymbol{y},
\end{equation}
where the $\mathcal{C} \times m$ interpolation matrix
\begin{equation}
\boldsymbol{\Psi} = \begin{bmatrix}
\Phi_1(\boldsymbol{x}_1) & \Phi_2(\boldsymbol{x}_1) & \cdots & \Phi_m(\boldsymbol{x}_1)\\
\Phi_1(\boldsymbol{x}_2) & \Phi_2(\boldsymbol{x}_2) & \cdots & \Phi_m(\boldsymbol{x}_2)\\
\vdots & \vdots &\ddots & \vdots\\
\Phi_1(\boldsymbol{x}_{\mathcal{C}}) & \Phi_2(\boldsymbol{x}_{\mathcal{C}}) & \cdots & \Phi_m(\boldsymbol{x}_{\mathcal{C}})
\end{bmatrix}
\end{equation}
omits the classification of the points through their corresponding linear components, $\boldsymbol{y} = [y_1, y_2, \cdots, y_{\mathcal{C}}]^T$, and $\boldsymbol{\alpha} = [\alpha_1, \alpha_2, \cdots, \alpha_m]^T$.

Equation 4.4 can also be expressed in the form of equation 2.13, which is
\begin{equation}
\boldsymbol{\Psi}=\begin{bmatrix}
\boldsymbol{L}_1, & \boldsymbol{L}_2, & \cdots, & \boldsymbol{L}_m
\end{bmatrix},
\end{equation}
where
\begin{equation}
\boldsymbol{L}_{\nu} = \begin{bmatrix}
\Phi_{\nu}(\boldsymbol{x}_{1}), \Phi_{\nu}(\boldsymbol{x}_{2}), \cdots, \Phi_{\nu}(\boldsymbol{x}_{\mathcal{C}})
\end{bmatrix}^T
\end{equation}
for $\nu = 1, 2, \cdots, m$.

The next theorem is a source of multiple solutions of the interpolation of equation 4.1 or 4.2.
\begin{thm}
To the interpolation of data set $F$ of equation 4.1 of $n+1$-dimensional space, there's more than one way to decompose it into subsets, in the form of $F = \bigcup_{i=1}^N F_i$ of equation 2.4, such that each of its subset $F_i$ is on an $n$-dimensional hyperplane.
\end{thm}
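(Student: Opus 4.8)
The plan is to reduce the statement to a purely combinatorial fact about partitions, using one elementary geometric observation. First I would record the observation that any subset of $F$ of cardinality at most $n+1$ lies on some $n$-dimensional hyperplane of $\mathbb{R}^{n+1}$: the affine hull of $k \le n+1$ points has dimension at most $k-1 \le n$, and every affine subspace of dimension strictly less than $n$ is contained in (indeed in infinitely many) hyperplanes of dimension exactly $n$, while a subspace of dimension $n$ is already such a hyperplane. Consequently, \emph{every} partition of $F$ into blocks $F_1,\dots,F_N$ with $|F_i|\le n+1$ for all $i$ is a decomposition of the form of equation 2.4 in which each $F_i$ lies on an $n$-dimensional hyperplane. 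If one reads ``on an $n$-dimensional hyperplane'' in the strict sense that the carrying hyperplane be genuinely $n$-dimensional rather than lower, then either one chooses the blocks so that their points are affinely independent whenever the data permit it, or one simply invokes the redundant interpolation of definition 3, which allows such an $F_i$ to be handled by an $n$-dimensional hyperplane anyway; in neither case is anything lost.

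It then remains to exhibit more than one such partition, which I would do by cases on $\mathcal{C}=|F|$. When $n+2\le\mathcal{C}$, take as a first decomposition any partition of $F$ into blocks of size $\le n+1$ whose first block is $\{(\boldsymbol{x}_1,y_1),\dots,(\boldsymbol{x}_{n+1},y_{n+1})\}$, and as a second decomposition the partition obtained from it by deleting $(\boldsymbol{x}_{n+1},y_{n+1})$ from the first block, inserting $(\boldsymbol{x}_{n+2},y_{n+2})$ in its place, and moving $(\boldsymbol{x}_{n+1},y_{n+1})$ into the block that previously contained $(\boldsymbol{x}_{n+2},y_{n+2})$, splitting that block further if its size would exceed $n+1$. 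Since the data points are pairwise distinct we have $(\boldsymbol{x}_{n+1},y_{n+1})\ne(\boldsymbol{x}_{n+2},y_{n+2})$, so the two first-blocks differ as sets and the two partitions are genuinely distinct, not merely relabelings of the same one. When $2\le\mathcal{C}\le n+1$ the whole of $F$ lies on a single $n$-dimensional hyperplane, so $N=1$, $F_1=F$ is one decomposition and $N=2$ with $F_1=\{(\boldsymbol{x}_1,y_1)\}$, $F_2=F\setminus F_1$ is another, both blocks of the latter having at most $n+1$ points. Combining the cases gives the claim, the degenerate case $\mathcal{C}=1$ being excluded since interpolation is then vacuous.

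The argument contains no serious obstacle; the two points that need care are (i) the elementary lemma that a flat spanned by few points sits inside a codimension-one hyperplane, which is routine linear algebra, and (ii) checking that the two exhibited decompositions are distinct as set partitions rather than differing by a renaming of the $F_i$'s, which is exactly why the distinctness of the data points is invoked. If one wished to strengthen the conclusion to say that the two decompositions produce genuinely different interpolation matrices $\boldsymbol{\Psi}$ of equation 4.4 — and hence, via Section 3, different solution modes — one would additionally observe that changing which points are grouped together changes the block structure $\boldsymbol{H}_i$ of equations 2.10--2.12 and thus the activation pattern of the hidden units across the rows of $\boldsymbol{\Psi}$; but that refinement goes beyond the statement as given and I would leave it to the subsequent development.
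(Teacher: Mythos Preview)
Your proof is correct but takes a different route from the paper's. The paper partitions the \emph{domain} $D\subset\mathbb{R}^n$ geometrically, by placing $(n-1)$-dimensional hyperplanes so that each resulting divided region contains at most $n+1$ points of $D$; since such separating arrangements can evidently be chosen in more than one way, non-uniqueness follows. You instead bypass the input-space geometry altogether and manufacture two distinct set partitions of $F$ by hand, relying only on the affine-hull observation that any $\le n+1$ points sit in some $n$-dimensional hyperplane.

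Your argument is more elementary and treats the small-$\mathcal{C}$ edge cases explicitly, which the paper's one-sentence proof glosses over. The paper's approach, however, carries a structural payoff you lose: the partitioning hyperplanes in $\mathbb{R}^n$ are precisely what the hidden-layer units of a network $n^{(1)}m^{(1)}1^{(1)}$ supply, so each of its decompositions is automatically realized by some network configuration and hence corresponds to a genuine interpolation solution in the sense of the remark immediately following the theorem and of Theorem~4. Your purely combinatorial partitions need not have each block occupying a single divided region of any hyperplane arrangement --- the blocks may be geometrically interleaved (think of $n=1$ with points at $1,2,3,4$ and the partition $\{1,3\},\{2,4\}$) --- so while they establish the theorem exactly as stated, not every one of them translates into a distinct network solution without additional argument.
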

\begin{proof}
In fact, by not only one way, the associated domain $D$ for $F$ of equation 4.2 could be divided into this form $D = \bigcup_{i = 1}^kD_i$ through $n-1$-dimensional hyperplanes, where each $D_i$ contains less than or equal to $n+1$ points and is on a same divided region, such that $F_i$ with respect to $D_i$ can always be interpolated by an $n$-dimensional hyperplane.
\end{proof}

\begin{rmk}
The effect of this theorem is double-edged. On one hand, it increases the number of solutions. On the other hand, if $F$ is sampled from a certain piecewise linear function, the very solution that we want may be missing.
\end{rmk}

\subsection{Main Result}
To the interpolation by network $n^{(1)}m^{(1)}1^{(1)}$, the overparameterization of this section means that the number of the hyperplanes or units of the hidden layer is greater than that of the data points to be interpolated; that is, $m > \mathcal{C}$ in terms of interpolation matrix $\boldsymbol{\Psi}$ of equation 4.4.

The following theorem investigates the overparameterization solution of a three-layer network. For integrity, the case of $m = \mathcal{C}$ is also embedded in it.
\begin{thm}
To solve equation 4.3 for $m \ge \mathcal{C}$, there are $k = \binom{m}{\mathcal{C}}$ possibilities of constructing a square interpolation matrix $\boldsymbol{\Psi}'$. For fixed $\mathcal{C}$, $k$ is a polynomial of order $\mathcal{C}$ with respect to variable $m$ (i.e., the number of units). When $m > \mathcal{C}$, among the $k$ selections, each nonsingular case of $\boldsymbol{\Psi}'$ corresponds to an infinite number of solutions.
\end{thm}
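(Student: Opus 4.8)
The statement splits into three independent claims, and I would dispatch them in turn. For the count, note that $\boldsymbol{\Psi}$ of equation 4.4 is a $\mathcal{C}\times m$ matrix with $m\ge\mathcal{C}$; to extract a square interpolation matrix $\boldsymbol{\Psi}'$ one must keep all $\mathcal{C}$ rows — every data point is to be interpolated — and choose which $\mathcal{C}$ of the $m$ columns, equivalently which $\mathcal{C}$ of the $m$ hidden units, to retain. Each choice is a size-$\mathcal{C}$ subset of an $m$-element set, so the number of admissible $\boldsymbol{\Psi}'$ is exactly $k=\binom{m}{\mathcal{C}}$; and each such submatrix is again a genuine interpolation matrix in the sense of equation 4.4, namely that of the subnetwork whose hidden layer consists of the selected units, so nothing further need be checked.

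For the polynomial-order claim I would simply expand
\[
k=\binom{m}{\mathcal{C}}=\frac{1}{\mathcal{C}!}\prod_{j=0}^{\mathcal{C}-1}(m-j).
\]
With $\mathcal{C}$ held fixed the right-hand side is a product of $\mathcal{C}$ degree-one factors in $m$ divided by the constant $\mathcal{C}!$, hence a polynomial in $m$ of degree exactly $\mathcal{C}$ with leading coefficient $1/\mathcal{C}!$; this is the precise sense in which the number of square-submatrix selections grows only polynomially, of order $\mathcal{C}$, in the width $m$.

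For the last claim, fix a selection whose $\boldsymbol{\Psi}'$ is nonsingular, with column index set $S\subset\{1,2,\cdots,m\}$, $|S|=\mathcal{C}$. Reorder the columns of equation 4.3 so that $\boldsymbol{\Psi}=[\,\boldsymbol{\Psi}'\ \ \boldsymbol{\Psi}''\,]$ and $\boldsymbol{\alpha}=[\,\boldsymbol{\alpha}_S^T,\ \boldsymbol{\alpha}_{S^c}^T\,]^T$, where $\boldsymbol{\Psi}''$ collects the remaining $m-\mathcal{C}$ columns. Then equation 4.3 becomes $\boldsymbol{\Psi}'\boldsymbol{\alpha}_S+\boldsymbol{\Psi}''\boldsymbol{\alpha}_{S^c}=\boldsymbol{y}$, and since $\boldsymbol{\Psi}'$ is invertible, for every $\boldsymbol{\alpha}_{S^c}\in\mathbb{R}^{m-\mathcal{C}}$ there is a unique $\boldsymbol{\alpha}_S=(\boldsymbol{\Psi}')^{-1}(\boldsymbol{y}-\boldsymbol{\Psi}''\boldsymbol{\alpha}_{S^c})$ completing it to a solution of equation 4.3. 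Because $m>\mathcal{C}$ forces $m-\mathcal{C}\ge1$, the free block $\boldsymbol{\alpha}_{S^c}$ runs over an infinite set, so the selection yields infinitely many solutions; equivalently, the existence of one nonsingular $\mathcal{C}\times\mathcal{C}$ submatrix gives $\text{rank}\,\boldsymbol{\Psi}=\mathcal{C}$, so the full solution set is an affine subspace $\boldsymbol{\alpha}_0+\ker\boldsymbol{\Psi}$ of dimension $m-\mathcal{C}\ge1$.

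None of these steps is genuinely deep. The only points deserving a word of care are that a single nonsingular square submatrix already pins down $\text{rank}\,\boldsymbol{\Psi}=\mathcal{C}$, so that the affine description of the whole solution set is legitimate, and that the ``free-coordinate'' construction really does satisfy the original $\mathcal{C}$ equations — both immediate from elementary linear algebra. If one wanted more, the genuinely delicate question would be how the solution families produced by different column selections $S$ overlap; but the theorem as stated asserts only that each nonsingular selection witnesses an infinite family, which the above establishes.
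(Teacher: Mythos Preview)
Your proof is correct and follows essentially the same approach as the paper: select $\mathcal{C}$ columns to form $\boldsymbol{\Psi}'$, rewrite equation 4.3 as $\boldsymbol{\Psi}'\boldsymbol{\alpha}_S=\boldsymbol{y}-\boldsymbol{\Psi}''\boldsymbol{\alpha}_{S^c}$, and let the complementary coordinates vary freely when $\boldsymbol{\Psi}'$ is invertible. Your treatment is in fact slightly more complete than the paper's, since you explicitly expand $\binom{m}{\mathcal{C}}$ to verify the polynomial-degree claim, which the paper leaves implicit; your only notational divergence is that the paper uses $S$ for the \emph{unselected} indices whereas you use it for the selected ones.
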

\begin{proof}
When $m = \mathcal{C}$, we have $k = 1$ and there's at most one solution. When $m > \mathcal{C}$, to the linear equations of equation 4.3, the number $m$ of the unknowns $\alpha_{\nu}$'s for $\nu = 1, 2, \cdots, m$ is larger than the number $\mathcal{C}$ of equations.

Select a $\mathcal{C}$-combination from the $m$ entries $\alpha_{\nu}$'s as the $\mathcal{C}$ unknowns to form a square interpolation matrix, that is, select $\mathcal{C}$ columns of the interpolation matrix $\boldsymbol{\Psi}$ of equation 4.4 to form a square submatrix $\boldsymbol{\Psi}'$. Then equation 4.3 is changed into
\begin{equation}
\boldsymbol{\Psi}'\boldsymbol{\alpha}' = \boldsymbol{y}',
\end{equation}
where $\boldsymbol{\alpha}'$ only contains the selected $\mathcal{C}$ unknowns, and
\begin{equation}
\boldsymbol{y}' = \boldsymbol{y} - \sum_{r \in S}\alpha_r\boldsymbol{L}_{r},
\end{equation}
in which $S$ is the set of the remaining unknowns that are not chosen, and $\boldsymbol{L}_{r}$ is defined in equation 4.5.

The nonsingular $\boldsymbol{\Psi}'$ of equation 4.7 ensures the existence of a solution, and the free change of $\alpha_r$'s of equation 4.8 contributes to an infinite number of solutions. The number of different $\mathcal{C}$-combinations of $\alpha_{\nu}$'s is $\binom{m}{\mathcal{C}}$. This completes the proof.
\end{proof}

\begin{rmk-2}
The redundancy of the units of the hidden layer not only increases the probability of finding a solution, but also makes the number of solutions increase significantly.
\end{rmk-2}

\begin{rmk-2}
By this theorem, a standard algorithm for the solution construction is natural. Check the singularity of matrix $\boldsymbol{\Psi}'$ of equation 4.7 for each of the $\binom{m}{\mathcal{C}}$ combinations, until a nonsingular case is encountered.
\end{rmk-2}

\section{Training Mechanism}
This section will integrate the training process into the solution finding under our interpolation framework. Some of the results that follow are not limited to a three-layer network and are also applicable to the deep-layer case, such as lemmas 6, 7 and 8.

\begin{assm}
We assume that the quadratic lose function is chosen for the training, and the reason had been explained in section 7.2 of \citet*{Huang2022b}.
\end{assm}

\begin{lem}
To the training of a neural network, the smaller the value of the loss function, the more precisely the network approximates $F$ of equation 4.1, and the optimum solution is the interpolation of $F$.
\end{lem}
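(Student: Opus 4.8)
The plan is to make the quadratic loss explicit and read off both assertions from its algebraic shape. Write the function computed by the network $n^{(1)}m^{(1)}1^{(1)}$ with full parameter vector $\boldsymbol{\theta}$ (the hidden-layer weights and biases together with $\boldsymbol{\alpha}$) as $g_{\boldsymbol{\theta}}$, and let $g_{\boldsymbol{\theta}}(D) = [g_{\boldsymbol{\theta}}(\boldsymbol{x}_1), g_{\boldsymbol{\theta}}(\boldsymbol{x}_2), \cdots, g_{\boldsymbol{\theta}}(\boldsymbol{x}_{\mathcal{C}})]^T$ denote the vector of its outputs on the data set $D$ of equation 4.2. Under assumption 3, the objective minimized by training is
\begin{equation}
E(\boldsymbol{\theta}) = \sum_{k=1}^{\mathcal{C}}\big(g_{\boldsymbol{\theta}}(\boldsymbol{x}_k) - y_k\big)^2 = \big\|g_{\boldsymbol{\theta}}(D) - \boldsymbol{y}\big\|_2^2 .
\end{equation}

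For the monotonicity claim I would observe that, up to the strictly increasing map $t \mapsto t^2$ on $[0,\infty)$, $E(\boldsymbol{\theta})$ equals the Euclidean distance between the output vector $g_{\boldsymbol{\theta}}(D)$ and the target vector $\boldsymbol{y}$; hence a smaller loss value is equivalent to a smaller $\|g_{\boldsymbol{\theta}}(D) - \boldsymbol{y}\|_2$, i.e.\ to the network reproducing the values of $F$ more accurately. I would state explicitly that ``more precisely approximates'' is meant here in this aggregate $\ell_2$ sense (a decrease of the total squared error), not pointwise term by term.

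For the optimum, the key remark is $E(\boldsymbol{\theta}) \ge 0$ always, with $E(\boldsymbol{\theta}) = 0$ if and only if $g_{\boldsymbol{\theta}}(\boldsymbol{x}_k) = y_k$ for every $k = 1, 2, \cdots, \mathcal{C}$ --- which is precisely the interpolation condition, i.e.\ equation 4.3 $\boldsymbol{\Psi}\boldsymbol{\alpha} = \boldsymbol{y}$. Thus the global minimum of the loss is $0$ and its minimizers are exactly the parameter settings that interpolate $F$. To ensure this statement is not vacuous (that the infimum is attained, not merely approached), I would invoke the existence of an interpolating solution already established in the preceding sections --- e.g.\ the nonsingular square-submatrix construction of theorem 4, or lemma 2 / theorem 2, whenever the hidden layer carries enough units --- so that some $\boldsymbol{\theta}$ realizes $E(\boldsymbol{\theta}) = 0$.

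The argument is essentially a single observation once the loss is written in the form above; the only points requiring care are pinning down ``more precisely approximates'' as a decrease of the $\ell_2$ error, and citing a prior existence result so that the zero loss is genuinely achieved rather than only approximated in the limit.
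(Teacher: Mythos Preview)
Your proposal is correct and follows the same approach as the paper, which simply says the conclusion ``is obvious by the property of the quadratic loss function.'' Your version is considerably more explicit---writing out $E(\boldsymbol{\theta}) = \|g_{\boldsymbol{\theta}}(D) - \boldsymbol{y}\|_2^2$, identifying the monotone relation with the $\ell_2$ error, and citing theorem 4 / lemma 2 for attainability of the zero infimum---but the underlying idea is identical.
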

\begin{proof}
This conclusion is obvious by the property of the quadratic loss function.
\end{proof}

\begin{lem}
All the parameter updating or modifications during the training are for the decrease of the loss function or the interpolation of $F$ of equation 4.1.
\end{lem}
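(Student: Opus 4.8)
The plan is to obtain this statement as an essentially immediate consequence of assumption 3 and lemma 6, together with the very definition of the training procedure. First I would fix notation: under assumption 3 the training minimizes the quadratic loss, which in the notation of equation 4.3 is $L(\boldsymbol{\alpha},\boldsymbol{w},\boldsymbol{b}) = \sum_{k=1}^{\mathcal{C}}\big(\sum_{\nu=1}^{m}\alpha_{\nu}\sigma(\boldsymbol{w}_{\nu}^T\boldsymbol{x}_k+b_{\nu}) - y_k\big)^2$, viewed as a function of the whole parameter vector $\boldsymbol{\theta}$ collecting all $\alpha_\nu,\boldsymbol{w}_\nu,b_\nu$. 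Every standard learning rule (gradient descent via backpropagation, or any of its momentum/adaptive variants) produces an update of the form $\boldsymbol{\theta}\mapsto\boldsymbol{\theta}-\eta\,\boldsymbol{g}$ where $\boldsymbol{g}$ is (an estimator of) $\nabla L(\boldsymbol{\theta})$; by construction the step is computed solely from, and for the purpose of lowering, $L$.

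Next I would make the phrase ``for the decrease of the loss function'' precise by a first-order expansion. For the full-batch gradient,
\[
L(\boldsymbol{\theta}-\eta\nabla L(\boldsymbol{\theta})) = L(\boldsymbol{\theta}) - \eta\,\|\nabla L(\boldsymbol{\theta})\|^2 + O(\eta^2),
\]
so for a sufficiently small learning rate every update either strictly decreases $L$ or leaves $\boldsymbol{\theta}$ unchanged, the latter occurring only at a stationary point where $\nabla L=\boldsymbol{0}$ and there is, by definition, nothing to update. Hence no parameter modification is ever made that is not aimed at — and, for an admissible step size, does not achieve — a reduction of the loss.

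The third step is to translate this into the language of interpolation by invoking lemma 6, which says that the loss value is a faithful monotone proxy for how precisely the network approximates $F$ of equation 4.1 and that its global minimum is attained exactly by an interpolation of $F$. Consequently ``decreasing the loss'' and ``making the network approximate, and ultimately interpolate, $F$'' are the two sides of lemma 6, and combining them with the previous step yields the ``or the interpolation of $F$'' clause of the statement. This completes the argument.

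\textbf{Main obstacle.} The statement is conceptual rather than computational, so there is no genuine difficulty; the only care needed is in the stochastic or mini-batch regime, where a single step need not monotonically reduce the full-batch loss. I would handle this by noting that a mini-batch gradient is an unbiased estimator of $\nabla L$, so the descent inequality above holds in expectation and the update is still \emph{for} the decrease of the loss in the sense the lemma intends — which is precisely why the word ``for'' (purpose), rather than a word asserting a guaranteed per-step decrease, is the appropriate formulation.
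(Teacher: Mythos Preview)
Your proposal is correct and follows essentially the same approach as the paper: the paper's proof is a single sentence invoking the quadratic loss function, the mechanism of error backpropagation, and lemma 6, which is precisely the skeleton you flesh out with explicit notation, a first-order descent inequality, and the stochastic caveat. Your version is considerably more detailed than the paper's, but the underlying argument is identical.
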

\begin{proof}
This is due to the quadratic loss function, the mechanism of the error backpropagation of the training, and lemma 6.
\end{proof}

\begin{lem}
Each parameter updating of the training is obtained with simultaneously all the remaining parameters fixed as constants.
\end{lem}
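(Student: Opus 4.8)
The plan is to read the claim off directly from the form of a gradient step together with the definition of a partial derivative. By Assumption~3 the training minimises the quadratic loss function $L$, and by (the proof of) Lemma~7 every parameter modification is produced by error backpropagation, i.e.\ by a gradient-based update. Collect all trainable parameters of the network into one vector $\boldsymbol{\theta}=(\theta_1,\theta_2,\dots,\theta_p)$ (the $\boldsymbol{w}_{\nu}$, $b_{\nu}$ and $\alpha_{\nu}$ of Section~2, and in the deep case the analogous weights of every layer). Then one iteration of training sends $\boldsymbol{\theta}^{(t)}$ to $\boldsymbol{\theta}^{(t+1)}=\boldsymbol{\theta}^{(t)}-\eta\,\nabla_{\boldsymbol{\theta}}L(\boldsymbol{\theta}^{(t)})$ for a step size $\eta>0$; the stochastic or mini-batch variants only replace $L$ by an empirical loss over a subset of $F$ of equation 4.1, which does not affect the argument.

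First I would write this update coordinatewise: for each $i$,
\[
\theta_i^{(t+1)}=\theta_i^{(t)}-\eta\,\frac{\partial L}{\partial\theta_i}\Big|_{\boldsymbol{\theta}=\boldsymbol{\theta}^{(t)}}.
\]
Next I would invoke the definition of the partial derivative: $\partial L/\partial\theta_i$ is the ordinary derivative of the one-variable function $\theta_i\mapsto L(\theta_1^{(t)},\dots,\theta_{i-1}^{(t)},\theta_i,\theta_{i+1}^{(t)},\dots,\theta_p^{(t)})$, in which every $\theta_j$ with $j\ne i$ has been frozen at its current value $\theta_j^{(t)}$. Hence the quantity that determines the $i$th update is computed exactly as if all the remaining parameters were the constants $\theta_j^{(t)}$, which is the assertion of the lemma.

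The only point that needs a word of care is the apparent simultaneity of the updates: within step $t$ all coordinates are changed at once. I would dispose of this by observing that the gradient $\nabla_{\boldsymbol{\theta}}L$ is evaluated at $\boldsymbol{\theta}^{(t)}$ \emph{before} any coordinate is modified, so while the $i$th increment is being formed every other parameter still holds the value $\theta_j^{(t)}$; the new values $\theta_j^{(t+1)}$ play no role in that computation. (Had the algorithm instead updated coordinates one by one, as in coordinate descent, the statement would be even more immediate.) So there is no genuine obstacle here: once Lemma~7 has identified the updates as gradient steps, the lemma is essentially a restatement of what a partial derivative is, and the main thing to phrase correctly is that the ``frozen'' parameters are held at the current iterate $\boldsymbol{\theta}^{(t)}$ rather than at a moving point.
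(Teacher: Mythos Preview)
Your proof is correct and takes essentially the same approach as the paper, which simply states in one line that the lemma ``is a consequence of the principle of partial derivatives for multivariate functions.'' You have merely spelled out explicitly what that principle means in the context of a gradient step, including the clarification about simultaneity, so your argument is a faithful elaboration of the paper's own proof.
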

\begin{proof}
This lemma is a consequence of the principle of partial derivatives for multivariate functions.
\end{proof}

Denote by
\begin{equation}
H = \{l_j; j = 1, 2, \cdots, m\}
\end{equation}
the set of the hyperplanes derived from the units of the hidden layer of $n^{(1)}m^{(1)}1^{(1)}$.
\begin{thm}
Let $F_{i}$ be a subset of $F$ of equation 4.1, whose corresponding subdomain is $D_i \subset D$ of equation 4.2. If $F_i$ is not on an $n$-dimensional hyperplane, then during the training, by adjusting the parameters of $H$ of equation 5.1, $D_i$ tends to be in different divided regions of $H$, with the associated $F_{i}$ being changed accordingly.
\end{thm}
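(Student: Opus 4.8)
The plan is to argue by contradiction against the thesis that, in the limit of training, $D_i$ stays confined to a single divided region of $H$. The key observation is that if $D_i$ lies entirely in one divided region of $H$, then on $D_i$ every hidden unit is either uniformly active or uniformly inactive, so the network output restricted to $D_i$ is an affine function of $\boldsymbol{x}$ — that is, the points of $F_i$ are forced onto an $n$-dimensional hyperplane. Since $F_i$ is assumed \emph{not} to be on an $n$-dimensional hyperplane, such a configuration cannot interpolate $F_i$, hence by lemma 6 it is not the optimum and the loss on $F_i$ is bounded below by a strictly positive quantity (the distance from $F_i$ to the best affine fit). First I would make this precise: on the region where $D_i$ lives, write the output as $\sum_{\nu \in A}\alpha_\nu(\boldsymbol{w}_\nu^T\boldsymbol{x}+b_\nu)$ for the active set $A$, note this is affine, and invoke lemma 6 to conclude the training cannot be at a minimum.

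Next I would use lemma 7 and lemma 8 to convert "not at a minimum" into "the parameters of $H$ must move." By lemma 7, every update is in the direction of decreasing loss; by lemma 8, each update is computed with the other parameters frozen. If the output-layer weights $\boldsymbol{\alpha}$ alone cannot reduce the loss on $F_i$ below the affine-fit floor (which they cannot, since for fixed $H$ with $D_i$ in one region the reachable functions on $D_i$ are exactly the affine ones), then the gradient with respect to the parameters of the hidden layer must be nonzero; backpropagation therefore updates $\boldsymbol{w}_\nu, b_\nu$. The only way such updates can further decrease the loss is by changing which units are active on parts of $D_i$ — i.e., by moving some hyperplane $l_j \in H$ so that it separates $D_i$, placing different points of $D_i$ into different divided regions. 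I would phrase this as: as long as $D_i$ sits in a single region, a strictly-positive loss gradient persists along the hidden-layer directions, so the dynamics push $l_j$'s across $D_i$; once $D_i$ is split, the reachable functions on $D_i$ become genuinely piecewise linear and can approach $F_i$. The statement "the associated $F_i$ being changed accordingly" is then just the bookkeeping remark that the realized values $\{\Phi_\nu(\boldsymbol{x})\}$ on $D_i$, hence the fitted points, are updated along with $H$.

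The main obstacle is making rigorous the claim that the gradient with respect to the hidden-layer parameters is genuinely nonzero (rather than merely that $D_i$ in one region is suboptimal): at a ReLU kink the output is non-differentiable, and one must argue that there is a descent direction that actually moves a hyperplane through $D_i$ rather than the optimizer stalling at a spurious stationary configuration. I would handle this by a direct perturbation argument: given $D_i$ in one region with positive residual, exhibit an explicit small change to some $(\boldsymbol{w}_j, b_j)$ that tilts $l_j$ to pass through the interior of the convex hull of $D_i$ and shows the resulting one-sided directional derivative of the loss is negative — this uses that $F_i$ is not affine, so splitting $D_i$ strictly enlarges the set of realizable functions on it. The remaining steps (that the other factors, redundant or dependent interpolations of the earlier sections, do not obstruct this, and that assumptions 1–2 are consistent with the split configuration) are routine and I would only remark on them. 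I would also note the soft nature of the conclusion — "tends to" — so that no quantitative rate is required, only that the stationary configurations with $D_i$ undivided are not loss-minimizers and the descent dynamics leave their neighborhood.
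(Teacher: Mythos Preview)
Your proposal is correct and follows essentially the same approach as the paper: the core observation that a single divided region forces the network output on $D_i$ to be affine, together with lemmas 6 and 7, is exactly the argument the paper gives. The paper's own proof is in fact much briefer than yours --- it simply notes that a single hyperplane cannot interpolate $F_i$, that more divided regions supply more hyperplanes to fit $F_i$, and that otherwise lemma 7 would be violated --- without the gradient, perturbation, or non-differentiability analysis you undertake.
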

\begin{proof}
In this case, an $n$-dimensional hyperplane cannot interpolate $F_i$, while each divided region of $H$ can only produce one hyperplane after the parameters having been fixed. Thus, the value of the loss function could become smaller if this theorem holds, since more divided regions can provide more hyperplanes to fit $F_i$; otherwise, lemma 7 would be violated.
\end{proof}

\begin{rmk-3}
By theorems 3 and 5, the training is helpful to produce a solution by decomposing $F$ in various ways.
\end{rmk-3}

\begin{rmk-3}
The modification of the parameters of $H$ due to the training could change the interpolation matrix, through which the solution for interpolations is explored.
\end{rmk-3}

\begin{thm}
Under a fixed interpolation matrix $\boldsymbol{\Psi}$ of equation 4.4, which is derived from the arrangement of $H$ of equation 5.1, the output layer of $n^{(1)}m^{(1)}1^{(1)}$ could be trained to search for a solution of equation 4.3 for $\boldsymbol{\alpha}$ on the basis of a certain decomposition of $F$ of equation 4.1.
\end{thm}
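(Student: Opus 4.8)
The plan is to exploit that, once the arrangement of $H$ of equation 5.1 is frozen, the interpolation matrix $\boldsymbol{\Psi}$ of equation 4.4 is a constant matrix and the network output evaluated on the data set $D$ of equation 4.2 is the \emph{linear} map $\boldsymbol{\alpha}\mapsto\boldsymbol{\Psi}\boldsymbol{\alpha}$. First I would record this reduction explicitly: the hidden-layer parameters enter the forward map only through the entries of $\boldsymbol{\Psi}$, so ``under a fixed interpolation matrix'' is exactly the statement that only the output weights $\boldsymbol{\alpha}$ are free; by lemma 8 each gradient step on a coordinate of $\boldsymbol{\alpha}$ is taken with all other parameters held constant, which is consistent with differentiating the training objective as a function of $\boldsymbol{\alpha}$ alone. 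Under assumption 3 that objective is the convex quadratic $L(\boldsymbol{\alpha})=\tfrac{1}{2}\lVert\boldsymbol{\Psi}\boldsymbol{\alpha}-\boldsymbol{y}\rVert^{2}$, with Hessian $\boldsymbol{\Psi}^{T}\boldsymbol{\Psi}$.

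Next I would invoke lemmas 6 and 7. By lemma 7 every update of the output layer decreases $L$, and by lemma 6 the infimum of $L$ is attained exactly at an interpolation of $F$, i.e.\ at a point with $\boldsymbol{\Psi}\boldsymbol{\alpha}=\boldsymbol{y}$, whenever such a point exists; since $L$ is convex, the output-layer training therefore drives $\boldsymbol{\alpha}$ toward the set of global minimizers of $L$, which is precisely what ``trained to search for a solution of equation 4.3'' should mean. To guarantee that a zero-loss solution is actually available I would appeal to theorem 4: when $m\ge\mathcal{C}$ at least one of the $\binom{m}{\mathcal{C}}$ column selections produces a nonsingular square block $\boldsymbol{\Psi}'$, so $\boldsymbol{y}$ lies in the column space of $\boldsymbol{\Psi}$ and $\boldsymbol{\Psi}\boldsymbol{\alpha}=\boldsymbol{y}$ is consistent; if instead $\boldsymbol{\Psi}$ is rank-deficient, the same argument shows the training converges to the least-squares minimizer, which by lemma 6 is the best approximation of $F$ obtainable with this fixed $H$.

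The remaining clause, ``on the basis of a certain decomposition of $F$'', I would obtain from theorem 3 together with the geometry of the frozen hyperplanes. The fixed set $H=\{l_j\}$ partitions the input space into divided regions (definition 11 of \citet*{Huang2022b}), and once the parameters are fixed each region can realize only a single affine piece; grouping the points of $D$ by the region containing them and pairing them with their targets yields a decomposition $F=\bigcup_i F_i$ with each $F_i$ lying on one $n$-dimensional hyperplane, which is exactly the form guaranteed by theorem 3. Hence the $\boldsymbol{\alpha}$ to which the output layer converges realizes the interpolation associated with this particular $H$-induced decomposition, closing the argument.

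The step I expect to be the main obstacle is the convergence claim itself: excluding stalling at a non-stationary point and handling the non-strictly-convex case in which $\boldsymbol{\Psi}^{T}\boldsymbol{\Psi}$ is singular, so that the set of minimizers is an affine subspace rather than a single point. I would deal with this by leaning on the standard convergence theory of gradient descent for convex quadratics, combined with the monotone-decrease guarantee of lemma 7, and, whenever $m\ge\mathcal{C}$, by using theorem 4 to pass to a nonsingular square subsystem on which the minimizer is unique. A secondary subtlety is that the decomposition in the last paragraph is \emph{determined by} $H$ rather than chosen arbitrarily, so I would phrase the appeal to theorem 3 as ``there is such a decomposition compatible with the fixed $H$'' rather than claiming every decomposition is reachable.
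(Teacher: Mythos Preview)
Your proposal is correct and follows essentially the same line as the paper: invoke lemmas 7 and 8 to argue that, with $\boldsymbol{\Psi}$ fixed, training the output layer drives $\boldsymbol{\alpha}$ toward a solution of equation 4.3, and note that the fixed arrangement $H$ determines the decomposition $F=\bigcup_i F_i$ through its divided regions. The paper's own argument is much terser and does not appeal to theorem 4, the convexity of the quadratic loss, or any explicit convergence analysis, so the extra rigor you supply on those points simply exceeds what the paper itself provides.
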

\begin{proof}
This conclusion is a direct consequence of lemmas 7 and 8 for the parameters of the output layer. The interpolation via a three-layer network is in terms of a piecewise linear function whose each component is defined or implemented on a divided region of $H$. Let
each $D_i$ of $D = \bigcup_{i}D_i$ be the subdomain corresponding to $F_i$ of a partition $F = \bigcup_{i}F_i$, with $D_i$ being on a same divided region. When $H$ is fixed, each $D_i$ and $F_i$ are determined and the remaining work is to find an optimum $n$-dimensional hyperplane to fit $F_i$, through adjusting the vector $\boldsymbol{\alpha}$ of equation 4.3 by the training.
\end{proof}

\begin{rmk}
Theorems 5 and 6 are for the hidden layer and output layer of a three-layer network, respectively, which together comprise the whole training procedure.
\end{rmk}

\begin{dfn}
To the interpolation of equation 4.1 via neural networks, the method of using the fixed parameters of hidden layers as in theorem 4 is called the space-domain way. And the method of modifying the parameters of hidden layers by the training as in theorem 5 is called the time-domain way.
\end{dfn}

\begin{thm}
There are two ways to explore a solution of equation 4.3, including the space-domain way and the time-domain way.
\end{thm}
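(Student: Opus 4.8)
The plan is to obtain this theorem directly from Definition 7 together with Theorems 5 and 6, by showing that the two named ways are each individually sufficient to locate a solution of equation 4.3 and that together they exhaust the training procedure of a three-layer network. First I would recall that the parameters of $n^{(1)}m^{(1)}1^{(1)}$ split into two disjoint groups: those of the hidden layer (the weights $\boldsymbol{w}_{\nu}$ and biases $b_{\nu}$ defining $H$ of equation 5.1, hence the interpolation matrix $\boldsymbol{\Psi}$ of equation 4.4), and those of the output layer (the vector $\boldsymbol{\alpha}$ of equation 4.3). By Lemma 8 each parameter update is carried out with all remaining parameters frozen, so every update acts on exactly one of these two groups; combined with the remark following Theorem 6 this shows that the space-domain way and the time-domain way together constitute the whole training procedure, so no third way is needed.

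Next I would argue sufficiency of each way separately. For the space-domain way: hold $H$ fixed so that $\boldsymbol{\Psi}$ is fixed; by Theorem 6 the output layer can then be trained, relative to the decomposition $F=\bigcup_i F_i$ induced by $H$ with each $D_i$ lying on a single divided region, to solve $\boldsymbol{\Psi}\boldsymbol{\alpha}=\boldsymbol{y}$ for $\boldsymbol{\alpha}$ whenever that system is solvable, and Theorem 4 certifies that in the overparameterized regime $m>\mathcal{C}$ a suitable $H$ makes some square submatrix $\boldsymbol{\Psi}'$ nonsingular, so such a $\boldsymbol{\alpha}$ exists. For the time-domain way: by Theorem 5, whenever a chosen $F_i$ fails to lie on an $n$-dimensional hyperplane the training necessarily readjusts the parameters of $H$, moving $D_i$ across divided regions and thereby changing $\boldsymbol{\Psi}$; by Theorem 3 this is precisely the mechanism that allows $F$ to be re-decomposed into parts each of which does lie on an $n$-dimensional hyperplane, i.e.\ it drives the interpolation matrix toward a solvable configuration. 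Assembling these observations yields the statement.

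I expect the only real obstacle to be one of framing rather than computation: the theorem is essentially a taxonomic summary, so the delicate point is to make precise what ``a way to explore a solution'' means and then to certify that the two listed ways are both exhaustive -- which reduces to the hidden-layer/output-layer partition of the parameters plus Lemma 8 -- and each genuinely capable of reaching a solution -- which reduces to invoking Theorems 4, 5 and 6 in the appropriate combination. Accordingly I would keep the write-up short, citing these earlier results rather than reproving anything, and close by noting that Theorems 5 and 6 realize the time-domain and space-domain ways on the hidden layer and output layer respectively.
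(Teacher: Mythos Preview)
Your proposal is correct and follows essentially the same route as the paper: the paper's own proof simply invokes Theorem 4 for the space-domain way and Theorem 5 for the time-domain way, then remarks that their cooperation enlarges the possibilities for finding a solution. Your write-up is considerably more elaborate than the paper's---in particular the exhaustiveness argument via Lemma 8 and the additional appeals to Theorems 3 and 6 are your own additions, not present in the original three-sentence proof---so you could safely trim it down.
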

\begin{proof}
Even if the network is not trained, we may get a solution of equation 4.3 by theorem 4. If not, according to theorem 5, the training could change the interpolation matrix $\boldsymbol{\Psi}$ of equation 4.4 by modifying $H$ of equation 5.1, such that the condition of theorem 4 may be satisfied. The cooperation of the two ways provides more possibilities for finding a solution.
\end{proof}

\section{Classification via Deep Layers}
We have two goals in this section. The first is to give a special case of the sparse-matrix principle that will be proposed in a later section, which helps to understand it. The second is to simultaneously present a new mechanism of classification or data disentangling via deep layers different from \citet*{Huang2020,Huang2022a,Huang2022b}.

The interference among hyperplanes \citep*{Huang2022a} is a main difficulty of the interpolation by neural networks, for which an advantage of deep layers is that those disturbances could be eliminated. \citet*{Huang2020} resorted to an exclusively designed subnetwork called ``T-bias'' to achieve that goal, which is, however, a strong constraint on both the network architecture and the parameter setting.

In \citet*{Huang2022a}, compared to the method of T-biases, the interference-avoiding principle was used instead, which could be realized by the usual architecture of multilayer feedforward neural networks. Yet, this operation should be done in a higher-dimensional space with the help of the units of other subspaces.

The method to be introduced in this section is distinct from the above two ways, which needs neither a T-bias nor a higher-dimensional space. As long as the layer is deep enough without adding new units in each layer, the input data could be disentangled.

\subsection{Preliminaries}
For a two-category data set $D$ of $n$-dimensional space, we'll use the terms ``$*$-point'' and ``$o$-point'' to represent the element of the two categories, respectively, such as the points in Figure \ref{Fig.2} marked by ``*'' or ``o''.

Part of the proof of the next lemma is from that of lemma 10 of \citet*{Huang2022a}, with the purpose different.
\begin{lem}
Let $l_{1i}$ for $i = 1, \cdots, n$ be the hyperplane corresponding to unit $u_{1i}$ of the hidden layer of network $n^{(1)}n^{(1)}1^{(1)}$. Suppose that $l_{11}$ classifies data set $D$ of the input space into two parts $D_1$ and $D_2$, with $D_1 \subset l_{11}^+$ and $D_2 \subset l_{11}^0$. Then by adjusting the parameters of the remaining hyperplanes, in the sense of affine transforms, $D_2$ could be mapped to an arbitrary single point of $l_{11}$, and simultaneously $D_1$ is transmitted to the next layer.
\end{lem}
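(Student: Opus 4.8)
The plan is to treat the hidden layer of $n^{(1)}n^{(1)}1^{(1)}$ as a piecewise affine map of $\mathbb{R}^n$ into $\mathbb{R}^n$ and to split the labour between the distinguished unit $u_{11}$ and the remaining $n-1$ units $u_{12},\dots,u_{1n}$. The unit $u_{11}$ is already pinned down by the hypothesis: since $D_1\subset l_{11}^+$ and $D_2\subset l_{11}^0$, the first output coordinate equals $\boldsymbol{w}_{11}^T\boldsymbol{x}+b_{11}$ on $D_1$ and equals $0$ on $D_2$, so by itself $u_{11}$ already ``folds'' $D_2$ onto the coordinate hyperplane that represents $l_{11}$ after the layer. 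What remains is to choose $\boldsymbol{w}_{1i},b_{1i}$ for $i=2,\dots,n$ so that: (a) every point of $D_1\cup D_2$ lies in $l_{1i}^+$, so the layer restricted to $D_1$ (all units active) is a genuine affine transform; (b) the pre-activation $\boldsymbol{w}_{1i}^T\boldsymbol{x}+b_{1i}$ is constant over $D_2$ for each $i\ge 2$; (c) this common value can be prescribed arbitrarily through a suitable bias.

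Granting (a)--(c), the conclusion is immediate. On $D_2$ the layer output is $\bigl(0,\ \boldsymbol{w}_{12}^T\boldsymbol{x}+b_{12},\ \dots,\ \boldsymbol{w}_{1n}^T\boldsymbol{x}+b_{1n}\bigr)$, which by (b) is a single point, whose first coordinate $0$ identifies it as a point of (the image of) $l_{11}$, and whose other coordinates are free by (c); on $D_1$ the layer acts as an affine map, so $D_1$ is transmitted to the next layer in the sense of affine transforms. Property (a) is arranged exactly as in the construction underlying lemma 10 of \citet*{Huang2022a}: the data are bounded, so biases $b_{1i}$ chosen large enough push all of $D$ strictly onto $l_{1i}^+$ while leaving the normals $\boldsymbol{w}_{1i}$ free. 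For (b) the normals $\boldsymbol{w}_{1i}$ with $i\ge 2$ must annihilate the direction space of the affine hull of $D_2$, which by $D_2\subset l_{11}^0$ is confined to an $(n-1)$-dimensional flat; this still leaves enough freedom to pick the $\boldsymbol{w}_{1i}$ so that, together with $\boldsymbol{w}_{11}$, the whole linear-output part of the layer remains injective on $D_1$, which (by lemmas 1 and 2 of \citet*{Huang2022a}) is all that ``transmitted'' requires. Property (c) then amounts to solving the single scalar equation $\boldsymbol{w}_{1i}^T\boldsymbol{x}_0+b_{1i}=\text{(target coordinate)}$ for $b_{1i}$, with $\boldsymbol{x}_0$ any point of $D_2$.

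The main obstacle is the tension inside (a), (b) and the injectivity requirement on $D_1$: one must place the $n-1$ normals $\boldsymbol{w}_{1i}$ simultaneously in the annihilator of the directions of $D_2$ (so $D_2$ collapses to a point) and in sufficiently general position relative to $\boldsymbol{w}_{11}$ and to the directions of $D_1$ (so $D_1$ is not collapsed along with $D_2$), all while keeping every point of $D$ on the active side of every $l_{1i}$. I expect the delicate part of the write-up to be precisely this dimension count together with the verification that the sign pattern $D_1\subset l_{1i}^+$, $D_2\subset l_{1i}^+$ for $i\ge 2$, plus the given $D_1\subset l_{11}^+$, $D_2\subset l_{11}^0$, can be met at once; the remainder is routine affine algebra, and the ``arbitrary point of $l_{11}$'' assertion falls out of the bias adjustment in (c).
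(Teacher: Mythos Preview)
Your plan rests on a misreading of the notation $l_{11}^0$. In this paper (via Note~2 of \citealt{Huang2022a}), $l^0$ denotes the closed half-space on which the ReLU is inactive, not the hyperplane $l$ itself. Hence $D_2\subset l_{11}^0$ only places $D_2$ on the non-positive side of $l_{11}$; it does \emph{not} confine $D_2$ to an $(n-1)$-dimensional flat. With $D_2$ generically $n$-dimensional, your requirement~(b) that each $\boldsymbol{w}_{1i}^T\boldsymbol{x}+b_{1i}$ be constant on $D_2$ forces $\boldsymbol{w}_{1i}=0$ for all $i\ge 2$, which destroys the injectivity you need to transmit $D_1$. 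So the very tension you flag as ``delicate'' is in fact fatal: the dimension count does not go through, and your properties (a), (b), and injectivity on $D_1$ cannot be satisfied simultaneously.

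The paper's argument sidesteps this by never asking the linear part to be constant on $D_2$. Instead it makes \emph{every} unit inactive on $D_2$: one chooses an arbitrary point $O\in l_{11}$ and builds the remaining $n-1$ hyperplanes to pass through $O$ and to have the same classification effect as $l_{11}$ (so $D_1\subset\prod_i l_{1i}^+$ and $D_2\subset\prod_i l_{1i}^0$). Concretely, the weight matrix is a small Vandermonde-type perturbation of $\boldsymbol{w}_{11}$ (equation~6.3), which is nonsingular, and the biases are set by $\boldsymbol{b}=-\boldsymbol{W}\boldsymbol{x}_0$ with $\boldsymbol{x}_0=O$. Then the ReLU nonlinearity, not constancy of an affine map, collapses all of $D_2$ to the single output $\boldsymbol{0}=\mathcal{T}(O)$, while on $D_1$ the layer is the nonsingular affine map $\mathcal{T}$; the ``arbitrary point of $l_{11}$'' is realised by the freedom in choosing $O$. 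Your scheme of keeping $D_2$ on the \emph{positive} side of $l_{1i}$ for $i\ge 2$ is the opposite of what is needed.
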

\begin{proof}
The proof begins with an example of Figure \ref{Fig.2}a, for which the network architecture is $N = 2^{(1)}2^{(1)}1^{(1)}$. Lines $l_{11}$ and $l_{12}$ are from the units of the hidden layer. The set $D$ of the points of Figure \ref{Fig.2}a is divided into two parts by $l_{11}$, denote by $D = D_1 \cup D_2$ with $D_1 \subset l_{11}^+$ and $D_2 \subset l_{11}^0$. We arbitrarily choose a point $O \in l_{11}$ with $O = l_{11} \cap l_{12}$, and rotate $l_{12}$ to make $D_1 \subset l_{11}^+l_{12}^+$ and $D_2 \subset l_{11}^0l_{12}^0$.

After the affine transform $\mathcal{T}$ associated with the hidden layer of $N$, the point $O$ will become $\mathcal{T}(O) = [0, 0]^T$. By the map of the hidden layer, due to the zero-output property of a ReLU, $D_2$ is also mapped to $[0, 0]^T$, which means that $D_2$ becomes the single point $O$ in $\mathcal{T}(\mathbb{R}^2)$, where $\mathcal{T}(\mathbb{R}^2)$ is an affine transform of the input space. Thus, we could say that, by the hidden layer, $D_2$ is changed to be $O$ in the sense of affine transforms.  And $D_1$ is transmitted because $D_1 \subset l_{11}^+l_{12}^+$, which is also in the sense of affine transforms as discussed in \citet*{Huang2020,Huang2022a}.

To the case of $n$-dimensional space, we should choose an arbitrary point on $n-1$-dimensional hyperplane $l_{11}$ as the mapped single point $O$, and construct other $n-1$ hyperplanes passing through this point, as well as having the same classification effect as $l_{11}$. One construction method is as follows.

Let $\boldsymbol{w}_1^T\boldsymbol{x} + b_1 = 0$ be the equation of $l_{11}$, where
\begin{equation}
\boldsymbol{w}_1 = \begin{bmatrix}
w_{11}, w_{12}, \cdots, w_{1n}
\end{bmatrix}^T.
\end{equation}
The constructed $n-1$ hyperplanes together with $l_{11}$ can be combined into the matrix form
\begin{equation}
\boldsymbol{W}\boldsymbol{x} + \boldsymbol{b} = \boldsymbol{0},
\end{equation}
where
\begin{equation}
\boldsymbol{W} = \begin{bmatrix}
w_{11} & w_{12} & w_{13} & \cdots & w_{1n} \\
w_{11} & w_{12} + \varepsilon_1  & w_{13} + \varepsilon_2 & \cdots & w_{1n} + \varepsilon_{n-1} \\
w_{11} & w_{12} + \varepsilon_1^2  & w_{13} + \varepsilon_2^2 & \cdots & w_{1n} + \varepsilon_{n-1}^2 \\
\vdots & \vdots & \vdots &\ddots &  \vdots\\
w_{11} & w_{12} + \varepsilon_1^{n-1}  & w_{13} + \varepsilon_2^{n-1} & \cdots & w_{1n} + \varepsilon_{n-1}^{n-1}
\end{bmatrix}
\end{equation}
is designed to be nonsingular with $0 < \varepsilon_i < 1$ and $\varepsilon_i \ne \varepsilon_j$ for $i \ne j$ (theorem 4 of \cite{Huang2020}), where $i, j = 1, 2, \cdots, n$.

Let $\boldsymbol{x}_0$ be an arbitrary point of $l_{11}$, which is designated as the point $O$. Independently of $\varepsilon_i$'s of equation 6.3, the parameter setting of $\boldsymbol{b} = -\boldsymbol{W}\boldsymbol{x}_0$ could make all hyperplanes $l_{1i}$'s pass through $\boldsymbol{x}_0$. If $\varepsilon_j$'s are sufficiently small, all $l_{1i}$'s for $i \ne 1$ could classify the data as $l_{11}$ (\cite{Huang2020}). This completes the proof.
\end{proof}

\begin{lem}[Lemma 8 of \citet*{Huang2022a}]
Denote by $\boldsymbol{x}_0$ a point of $n$-dimensional space, and let $\boldsymbol{w}^T\boldsymbol{x} + b = 0$ be the equation of a hyperplane $l$ derived from a ReLU. Through an affine transform, $\boldsymbol{x}_0$ and $l$ become $\boldsymbol{x}_0'$ and ${\boldsymbol{w}{'}}^T\boldsymbol{x}' + b' = 0$, respectively. Then we have $\boldsymbol{w}^T\boldsymbol{x}_0 + b = {\boldsymbol{w}{'}}^T\boldsymbol{x}'_0 + b'$, implying $\sigma(\boldsymbol{w}^T\boldsymbol{x}_0 + b) = \sigma({\boldsymbol{w}{'}}^T\boldsymbol{x}'_0 + b')$, which means that the output of a hyperplane (or unit) with respect to a point cannot be affected by affine transforms.
\end{lem}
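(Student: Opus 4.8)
The plan is to verify the claimed identity by a single substitution, after pinning down the (only mildly subtle) convention by which the hyperplane equation is carried along by the transform. Write the affine transform as $\mathcal{T}(\boldsymbol{x}) = \boldsymbol{A}\boldsymbol{x} + \boldsymbol{c}$ with $\boldsymbol{A}$ an invertible $n\times n$ matrix, so the point is carried to $\boldsymbol{x}_0' = \mathcal{T}(\boldsymbol{x}_0) = \boldsymbol{A}\boldsymbol{x}_0 + \boldsymbol{c}$. The hyperplane $l = \{\boldsymbol{x} : \boldsymbol{w}^T\boldsymbol{x} + b = 0\}$ is carried to its image $\mathcal{T}(l) = \{\mathcal{T}(\boldsymbol{x}) : \boldsymbol{x}\in l\}$; substituting $\boldsymbol{x} = \boldsymbol{A}^{-1}(\boldsymbol{x}' - \boldsymbol{c})$ into the defining equation identifies $\mathcal{T}(l)$ with $\{\boldsymbol{x}' : {\boldsymbol{w}'}^T\boldsymbol{x}' + b' = 0\}$ for $\boldsymbol{w}' = \boldsymbol{A}^{-T}\boldsymbol{w}$ and $b' = b - \boldsymbol{w}^T\boldsymbol{A}^{-1}\boldsymbol{c}$. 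This is exactly the representative of $\mathcal{T}(l)$ that one obtains by re-expressing the ReLU unit's pre-activation in the new coordinates, so it is the natural choice here.

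It is cleanest to run the computation in the homogeneous coordinates already used in the proof of Lemma 1: put $\boldsymbol{z}_0 = [\boldsymbol{x}_0^T, 1]^T$ and $\boldsymbol{\lambda} = [\boldsymbol{w}^T, b]^T$, so the output of the hyperplane at $\boldsymbol{x}_0$ is the pairing $\boldsymbol{\lambda}^T\boldsymbol{z}_0$. An invertible affine transform acts linearly on homogeneous coordinates by an invertible matrix $\boldsymbol{T}$ whose last row is $[\boldsymbol{0}^T,1]$: points transform as $\boldsymbol{z}_0' = \boldsymbol{T}\boldsymbol{z}_0$, while the covector defining the hyperplane transforms contragradiently, $\boldsymbol{\lambda}' = \boldsymbol{T}^{-T}\boldsymbol{\lambda}$, precisely because that substitution makes ${\boldsymbol{\lambda}'}^T\boldsymbol{z}' = 0$ cut out the image $\mathcal{T}(l)$. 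The claimed equality is then immediate: ${\boldsymbol{\lambda}'}^T\boldsymbol{z}_0' = \boldsymbol{\lambda}^T\boldsymbol{T}^{-1}\boldsymbol{T}\boldsymbol{z}_0 = \boldsymbol{\lambda}^T\boldsymbol{z}_0$, i.e.\ ${\boldsymbol{w}'}^T\boldsymbol{x}_0' + b' = \boldsymbol{w}^T\boldsymbol{x}_0 + b$, and applying $\sigma$ to both sides gives $\sigma({\boldsymbol{w}'}^T\boldsymbol{x}_0' + b') = \sigma(\boldsymbol{w}^T\boldsymbol{x}_0 + b)$. Equivalently, without homogeneous coordinates, one plugs $\boldsymbol{w}' = \boldsymbol{A}^{-T}\boldsymbol{w}$, $b' = b - \boldsymbol{w}^T\boldsymbol{A}^{-1}\boldsymbol{c}$, $\boldsymbol{x}_0' = \boldsymbol{A}\boldsymbol{x}_0 + \boldsymbol{c}$ into ${\boldsymbol{w}'}^T\boldsymbol{x}_0' + b'$ and cancels the two $\boldsymbol{w}^T\boldsymbol{A}^{-1}\boldsymbol{c}$ terms.

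There is no genuinely hard step: the verification is a one-line cancellation. The only point I would make explicit in the write-up is that this is a statement about the ReLU unit \emph{as a function}, transported by composition with $\mathcal{T}$, not about the bare hyperplane as a point set: the zero set alone determines $(\boldsymbol{w}', b')$ only up to a nonzero scalar, and for any other scaling the displayed equality fails. So the content of the lemma is just the invariance of the vector--covector pairing under $\boldsymbol{T}$ acting on one factor and $\boldsymbol{T}^{-T}$ on the other, i.e.\ precomposing a ReLU with an invertible affine change of input coordinates leaves its value on the correspondingly transported point unchanged; the invertible case is the one relevant to the preceding results.
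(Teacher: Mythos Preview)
Your argument is correct. The paper itself does not supply a proof of this lemma: it is quoted verbatim as Lemma~8 of \citet*{Huang2022a} and left unproved here, so there is no ``paper's own proof'' to compare against. Your write-up fills that gap cleanly; the homogeneous-coordinate formulation $\boldsymbol{z}_0' = \boldsymbol{T}\boldsymbol{z}_0$, $\boldsymbol{\lambda}' = \boldsymbol{T}^{-T}\boldsymbol{\lambda}$ makes the invariance ${\boldsymbol{\lambda}'}^T\boldsymbol{z}_0' = \boldsymbol{\lambda}^T\boldsymbol{z}_0$ a one-line cancellation, and your closing remark that the equality fixes the scaling of $(\boldsymbol{w}',b')$ --- i.e.\ that this is a statement about the ReLU as a function, not about the hyperplane as a point set --- is exactly the right caveat to include.
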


\subsection{Main Results}

\begin{figure}[!t]
\captionsetup{justification=centering}
\centering
\subfloat[Linearly nonseparable classification.]{\includegraphics[width=1.9in, trim = {4.7cm 4.0cm 3.5cm 1.9cm}, clip]{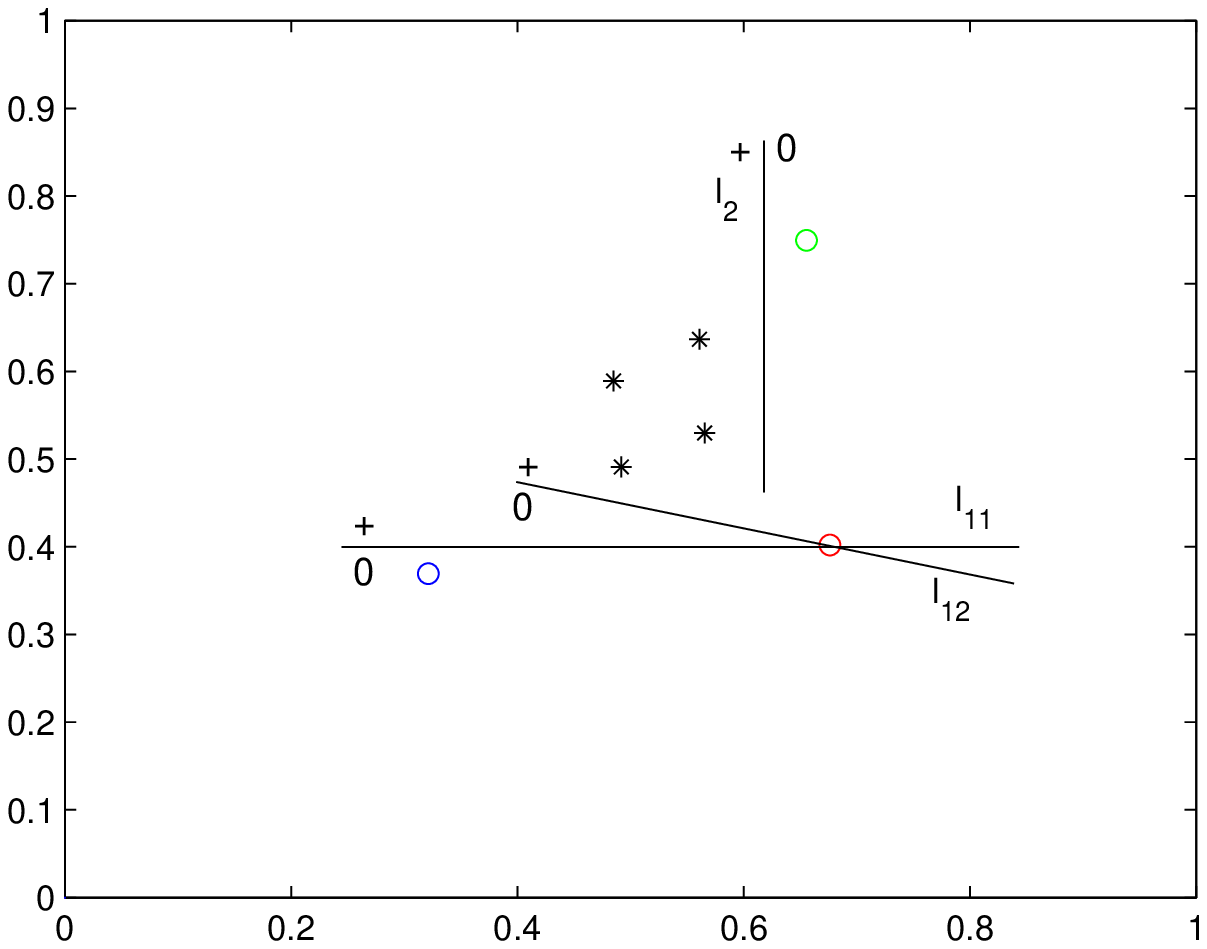}}
\subfloat[Another example for classification.]{\includegraphics[width=1.9in, trim = {4.8cm 4.0cm 3.8cm 2.1cm}, clip]{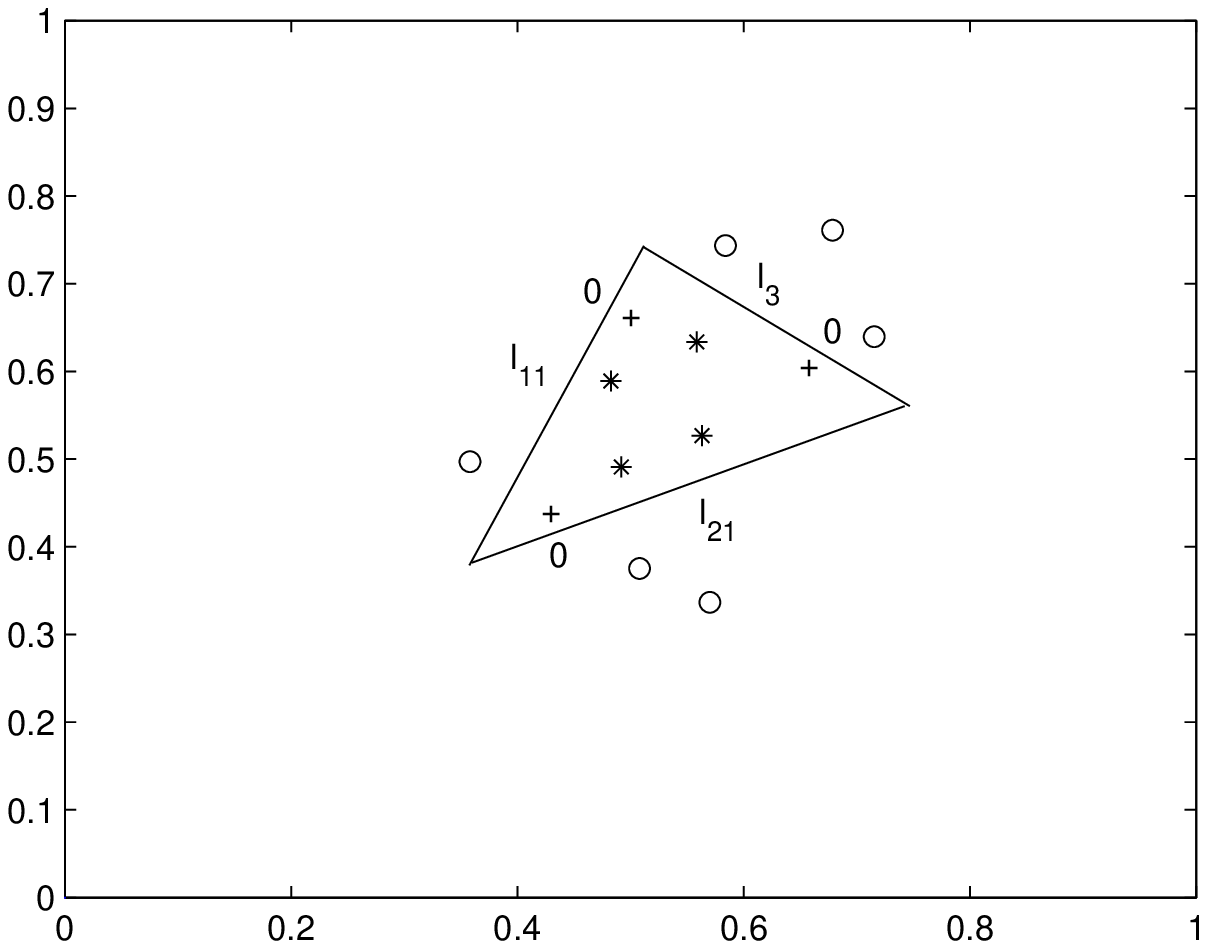}}
\subfloat[Network architecture for (b).]{\includegraphics[width=1.9in, trim = {4.3cm 4.9cm 3.8cm 2.5cm}, clip]{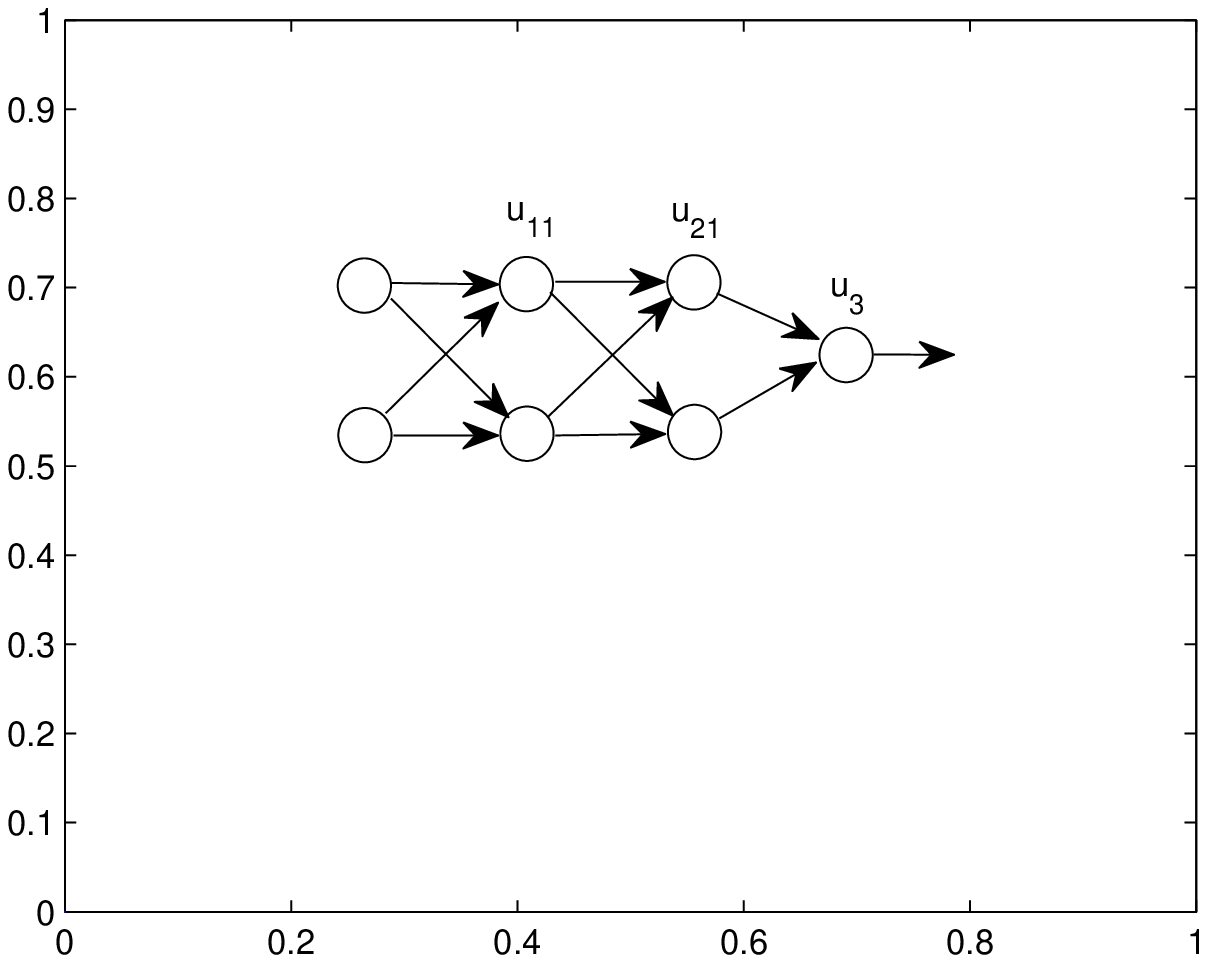}}
\caption{Effect of deep layers.}
\label{Fig.2}
\end{figure}

We use the terminology of a polytope of $n$-dimensional space from \citet*{Grunbaum2003}. Let \textsl{$n$-polytope} and \textsl{$k$-face} be an $n$-dimensional polytope and a $k$-dimensional face, respectively.

\begin{thm}
Given a two-category data set $D$ of $n$-dimensional space, if its $*$-points are contained in an open convex $n$-polytope $\mathcal{P}$ whose corresponding closed one has $d$ faces of dimensionality $n-1$, and if simultaneously its $o$-points do not belong to $\mathcal{P}$, then network $n^{(1)}n^{(d-1)}1^{(1)}$ can classify it.
\end{thm}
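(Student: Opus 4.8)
The plan is to build the network in two stages and track how the open convex polytope $\mathcal{P}$ is collapsed layer by layer. Write $\mathcal{P} = \bigcap_{i=1}^{d} l_i^{+}$, where $l_1, \dots, l_d$ are the hyperplanes supporting the $d$ facets of the closed polytope (choosing orientations so that the interior lies on the positive side of each). Since the $*$-points lie in the open polytope, every $*$-point satisfies $\boldsymbol{w}_i^T\boldsymbol{x} + b_i > 0$ for all $i$; since each $o$-point lies outside $\mathcal{P}$, each $o$-point violates at least one of these inequalities, i.e.\ lies in some $l_i^{0}$. The idea is to use $d-1$ of these $d$ hyperplanes as the units of the first hidden layer of width $n$ — wait: we only have $n$ units per layer, not $d-1$. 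So the correct reading is that we process the facets \emph{sequentially} across the $d-1$ hidden layers, using lemma~9 repeatedly: at layer $k$ we apply lemma~9 with $l_{11}$ playing the role of $l_k$, so that the part of the data on $l_k^{0}$ (a chunk of the $o$-points) gets mapped, in the sense of affine transforms, to a single point of $l_k$, while the rest of the data — including all $*$-points, which are on $l_k^{+}$ — is transmitted to layer $k+1$. The remaining $n-1$ units of that layer are the auxiliary hyperplanes of lemma~9, constructed via equations 6.1--6.4 to share the classification effect of $l_k$ and pass through the chosen collapse point.

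First I would set up the induction: after applying lemma~9 for $l_1, \dots, l_{d-1}$ across the first $d-1$ hidden layers, all $o$-points that violated any of $l_1, \dots, l_{d-1}$ have been collapsed onto single points, and by lemma~10 the outputs of all hyperplanes with respect to every surviving point are unchanged under these affine transforms, so the "surviving" configuration still sees all $*$-points strictly inside $\bigcap_{i} l_i^{+}$. The key bookkeeping step is that after $d-1$ collapses, the only region still "alive" for the $*$-points is $\bigcap_{i=1}^{d-1} l_i^{+}$, and the one remaining facet constraint $l_d^{+}$ now distinguishes the $*$-points (all in $l_d^{+}$) from any $o$-points that had violated \emph{only} $l_d$. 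I would handle the collapsed $o$-points by noting they have been sent to fixed points lying on the respective hyperplanes $l_k$, hence on the boundary, hence (with the orientation bookkeeping) they are separated from the open polytope; one must check they do not accidentally land inside the image of $\mathcal{P}$, which is where the freedom in choosing the collapse point $\boldsymbol{x}_0 \in l_{11}$ in lemma~9 is used. Finally, the single output unit of $n^{(1)}n^{(d-1)}1^{(1)}$ implements the last separating hyperplane $l_d$ composed with the accumulated affine transform — by lemma~10 its action on each point is the pre-transform action, so it outputs a positive value on $*$-points and a nonpositive value on all $o$-points, giving the classification.

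The main obstacle I anticipate is the careful tracking of the $o$-points across successive collapses: an $o$-point may lie outside $\mathcal{P}$ by violating several of the $l_i$, and I must ensure that at the first layer where it gets "caught" (i.e.\ the first $k$ with the point in $l_k^{0}$) it is collapsed cleanly to the designated single point without later re-entering the transmitted region, and that distinct batches of collapsed $o$-points do not get confused with the $*$-points after subsequent affine maps. This requires arguing that lemma~9's collapse is compatible with the data still on $l_k^{+}$ — in particular that choosing $O = l_{11}\cap l_{12}\cap\cdots$ and rotating the auxiliary hyperplanes (the $\varepsilon_i$-construction of equation 6.3) can be done so that $D_1$ — all the still-live data including every $*$-point — satisfies $D_1 \subset \bigcap_i l_{1i}^{+}$, which the small-$\varepsilon$ argument of lemma~9 guarantees. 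A secondary subtlety is the edge case $d-1 = 0$ or $d = 1$ (a single half-space), where $\mathcal{P}$ is not bounded and the statement degenerates to linear separability handled directly by the output unit; and the case where some facets carry no $o$-points, which only makes the argument easier since those collapses are vacuous but still legal. Modulo these region-tracking details, the construction is a direct iterated application of lemmas~9 and~10.
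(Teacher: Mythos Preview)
Your overall plan matches the paper's: process the $d$ facets one per hidden layer via lemma~9, transmit the $*$-points affinely, and let the output unit realize the last facet hyperplane $l_d$. The gap is in the step you yourself flag as ``the main obstacle'': you never say \emph{how} to choose the collapse point $O_k \in l_k$, and your stated criterion---that the collapsed point ``not land inside the image of $\mathcal{P}$''---is too weak to make the final claim go through.

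Concretely, your conclusion that the output unit $l_d$ ``outputs \ldots\ a nonpositive value on all $o$-points'' is not justified. An $o$-point caught at layer $k<d$ is sent to $O_k \in l_k$; by lemma~10 the output unit then sees the value of $l_d$ at $O_k$, not at the original $o$-point. A point of $l_k$ can perfectly well lie in $l_d^{+}$ (it is outside $\mathcal{P}$, so your criterion is met, yet it is on the wrong side of $l_d$), in which case the output unit assigns it the same sign as the $*$-points and the classification fails. The paper closes this gap by a cascading choice: pick $O_k \in l_k$ so that additionally $O_k \in l_{k+1}^{0}$. Then the batch collapsed at layer $k$ lands in the zero side of the \emph{next} facet and is re-collapsed by layer $k+1$ together with the fresh $o$-points there; iterating, every collapsed $o$-point is funneled forward and ends up at a single point in $l_d^{0}$, so the output unit $l_d$ really does separate. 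Once you insert this one sentence---choose each $O_k$ on $l_k$ and in $l_{k+1}^{0}$---your induction and the rest of your bookkeeping become correct.
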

\begin{proof}
The proof is composed of three parts. Parts 1 and 2 are two examples that explain the main idea, and part 3 deals with the general case on the basis of the examples.

\textbf{Part 1}. In Figure \ref{Fig.2}a, the four black $*$-points and the three $o$-points with different colours are linearly nonseparable and to be classified by a three-layer network. Lines $l_{11}$ and $l_{12}$ correspond to the units of the hidden layer, whose intersection happens to be the red $o$-point; line $l_{2}$ is from the unit of the output layer. Because the blue $o$-point is in $l^{0}_{11}l^{0}_{12}$, it is mapped to the origin of the coordinate system of the hidden layer, where the red $o$-point is located in.

As can be seen, in the input space, regardless of the blue $o$-point, the red and green $o$-points are linearly separable from the $*$-points; due to the property of affine transforms, after passing through the hidden layer, they can still be linearly classified. Thus, by lemma 9, the points become linearly separable in the output layer, and a line such as $l_2$ can classify them.

In Figure \ref{Fig.2}a, line $l_2$ should have been depicted in an affine-transform region of $l^{+}_{11}l^{+}_{12}$; however, the effect is equivalent in the sense of linear classification due to the property of affine transforms, and similarly for other cases of the remaining proof.

\textbf{Part 2}. Another example is the repeated application of the above method through deep layers. The points of Figure \ref{Fig.2}b are to be classified by network $2^{(1)}2^{(2)}1^{(1)}$ of Figure \ref{Fig.2}c. Line $l_{ij}$ for $i,j = 1, 2$ of Figure \ref{Fig.2}b is derived from the $j$th unit of the $i$th layer of Figure \ref{Fig.2}c. For simplicity, lines $l_{12}$ and $l_{22}$ are omitted in Figure \ref{Fig.2}b. Line $l_3$ corresponds to the only unit of the third layer. The $*$-points are enclosed by a triangle formed by lines $l_{11}$, $l_{21}$ and $l_3$, while the $o$-points are excluded outside.

In Figure \ref{Fig.2}b, by lemma 9, through the first layer, we can map the $o$-point in $l_{11}^0$ (denote by $p_1$) to a point $p_1'$ of $l_{11}$ and simultaneously $p_1' \in l_{21}^0$. By lemma 10, after passing through the first layer, the relative position of $p_1'$ with respect to $l_{21}$, namely $p_1' \in l_{21}^0$, will be preserved by affine transforms.

Denote the two $o$-points in $l_{21}^0$ from left to right by $p_{21}$ and $p_{22}$, respectively. By the second layer, according to lemma 9, $p_1'$, $p_{21}$ and $p_{22}$ can be mapped to a point $p_2'$ such that $p_2' \in l_{21}$ as well as $p_2' \in l_3^0$. During the mapping process, the $*$-points are always made to be transmitted in the sense of affine transforms by the method of lemma 9. Finally in the third layer, all the $o$-points are mapped onto the region $l_3^0$, while the $*$ points are in $l_3^+$; that is, the points become linearly separable.

\textbf{Part 3}. The general case is similar. Let $\mathcal{P}'$ be the closed convex polytope corresponding to the open one $\mathcal{P}$ of this theorem, which has $d$ faces of dimensionality $n-1$. Then we need $d-1$ hidden layers and the network architecture is $n^{(1)}n^{(d-1)}1^{(1)}$. Each hidden layer $i$ for $i = 1, 2, \cdots, d-1$ corresponds to one $n-1$-face of $\mathcal{P}'$, which is denoted by $s_i$ and is part of the associated hyperplane $l_{i1}$ with respect to unit $u_{i1}$ of the $i$th layer.

For face $s_i$, we should select a point $O_i \in l_{i1}$ as the mapped point of the $o$-points in $l_{i1}^0$; and then construct other $n-1$ hyperplanes passing through $O_i$ and simultaneously having the same classification effect as $l_{i1}$, according to lemma 9.

The selection of $O_i$ for all $i$ should make the mapped version of $D$ linearly separable in the output layer. The method is similar to the example of part 2, which is as follows: From one $n-1$-face of the polytope $\mathcal{P}'$, say the $j$th one, the $o$-points in $l_{j1}^0$ are mapped onto the next adjacent region $l_{k1}^0$ by the selection of $O_j$, one after another, until the last one; finally, all the mapped $o$-points will be in a region $l_{d}^0$, where $l_d$ is the hyperplane associated with the single unit of the $d$th layer of $n^{(1)}n^{(d-1)}1^{(1)}$, corresponding to the last processed $n-1$-face of $\mathcal{P}'$.

Because $\mathcal{P}'$ is convex, in each map of the constructed hidden layers, the $*$-points can always be in a region transmitted by affine transforms, and will be in the destination of $l_d^+$. That is, finally, the points become linearly separable.
\end{proof}

\begin{cl}
Suppose that a two-category data set $D$ of $n$-dimensional space satisfies the condition of theorem 8. Then network $n^{(1)}\prod_{i=1}^dm_i^{(1)}1^{(1)}$ for $m_i > n$ could classify $D$ through the principle of both affine-transform generalizations \citep*{Huang2022a} and theorem 8.
\end{cl}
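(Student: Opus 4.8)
The plan is to deduce the corollary directly from theorem 8 by ``fattening'' each hidden layer: every exactly-$n$-unit hidden layer used in the proof of theorem 8 is replaced by a layer of $m_i>n$ units that realizes the \emph{same} affine transform on the data currently being transmitted, which is precisely what the affine-transform overparameterization results of \citet*{Huang2022a} (proposition 6 there, together with the redundant-layer observation in the proof of its lemma 6) guarantee is possible.

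First I would isolate what each hidden layer contributes in the proof of theorem 8. Layer $i$ is built from a nonsingular $n\times n$ matrix $\boldsymbol{W}$ of the form of equations 6.2--6.3 with bias $\boldsymbol{b}=-\boldsymbol{W}\boldsymbol{x}_0$; on the region where all its $n$ units are active it acts as an injective affine map, while the $o$-points lying in $l_{i1}^0$ are collapsed onto the chosen point $O_i\in l_{i1}$ purely by the zero-output property of the ReLU $u_{i1}$. The $*$-points and the $o$-points not yet collapsed stay in the all-active region and are therefore transmitted in the sense of affine transforms, so by lemma 10 their positions relative to all the hyperplanes of the construction are preserved.

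Next I would invoke proposition 6 of \citet*{Huang2022a}: when $m_i>n$ units are available in layer $i$, an arbitrary affine transform of $\mathbb{R}^n$ can still be implemented, the surplus $m_i-n$ units being redundant --- arranged so as to be uniformly active on the relevant data, with their linear contribution absorbed into the output weights of the next layer. Substituting such a wide realization for each $n$-unit layer of theorem 8 changes neither the induced affine transform (hence, by lemma 10, none of the relative positions of the transmitted points) nor the collapse of the $o$-points in $l_{i1}^0$. Consequently the final layer still sees the $*$-points in $l_d^+$ and every $o$-point in $l_d^0$, a linearly separable configuration, and $D$ is classified. If the stated layer count $d$ (rather than $d-1$) is read literally, the one extra hidden layer is accounted for by the second form of affine-transform redundancy noted in \citet*{Huang2022a}: a layer that merely transmits the data in the sense of affine transforms may always be inserted.

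The step I expect to carry all the content, and to be the main obstacle, is the ReLU bookkeeping inside the widened layers --- checking that the $m_i-n$ extra units of layer $i$ can be positioned so that they are simultaneously activated by precisely the data that must be transmitted affinely (the $*$-points and the as-yet-uncollapsed $o$-points), without disturbing the collapse of the $o$-points in $l_{i1}^0$ and without breaking the convexity argument of theorem 8 that keeps the $*$-points on the required side through all the layers. Once proposition 6 of \citet*{Huang2022a} is quoted in the appropriate form this reduces to routine verification, but it is where the genuine work lies.
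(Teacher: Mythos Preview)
Your proposal is correct and follows essentially the same approach as the paper: invoke the affine-transform overparameterization results of \citet*{Huang2022a} to show that each wide layer with $m_i>n$ units can realize the same effect as the $n$-unit layers of theorem 8, then apply theorem 8. The paper's proof is a two-sentence appeal to theorem 8 of \citet*{Huang2022a} (the statement that $n^{(1)}\prod_{i=1}^dm_i^{(1)}$ is equivalent to $n^{(1)}n^{(d)}$ for affine transforms), whereas you unpack the argument via proposition 6 and the redundant-layer remark of lemma 6 of that reference; your more detailed ReLU bookkeeping and your handling of the layer-count discrepancy are simply making explicit what the paper leaves implicit.
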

\begin{proof}
By theorem 8 of \citet*{Huang2022a}, network $n^{(1)}\prod_{i=1}^dm_i^{(1)}$ could be equivalent to $n^{(1)}n^{(d)}$ in realizing affine transforms. Combined with theorem 8 of this paper, the conclusion is proved.
\end{proof}

\section{Universal Solutions of Deep Layers}
This section investigates the mechanism of deep-layer networks in terms of interpolation matrices, on the basis of the three-layer case. Despite the close relationship of the two types of neural networks, we'll emphasize the distinction of deep layers in several aspects, including the solution, the activation, the property of interpolation matrices, and so on. The explanation of multi-output neural networks is in section 7.7.

\subsection{Preliminaries}
\begin{lem}
Given an $n-1$-dimensional hyperplane $l_n$ of $n$-dimensional space $\boldsymbol{X}_n$, let $l_k$ be another hyperplane whose dimensionality satisfies $1 \le k \le n-1$. Then the probability of $l_{k} \parallel l_n$ is $0$.
\end{lem}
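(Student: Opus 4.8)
The plan is to translate the geometric condition ``$l_k \parallel l_n$'' into the simultaneous vanishing of finitely many nontrivial (indeed linear) functions of the parameters defining the two hyperplanes, and then to appeal to the absolute continuity of the underlying parameter distribution (note~10, i.e.\ the probabilistic model of the appendix of \citet*{Huang2022a}) to conclude that such a coincidence occurs only on a Lebesgue-null, hence probability-zero, set.

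First I would fix coordinates. Write $l_n$ as the affine hyperplane $\{\boldsymbol{x}\in\boldsymbol{X}_n : \boldsymbol{w}_n^{T}\boldsymbol{x}+b_n=0\}$ with normal $\boldsymbol{w}_n\in\mathbb{R}^n\setminus\{\boldsymbol{0}\}$, and write $l_k$ in the parametric form $\boldsymbol{x}=\boldsymbol{x}_0+\sum_{j=1}^{k}t_j\boldsymbol{\lambda}_j$ as in equation 3.8, with $\boldsymbol{\lambda}_1,\dots,\boldsymbol{\lambda}_k$ linearly independent; the exceptional event that these $k$ vectors are dependent is itself of probability zero, since it is the zero set of the nontrivial polynomial obtained from the $k\times k$ minors of $[\boldsymbol{\lambda}_1,\dots,\boldsymbol{\lambda}_k]$, and I would dispose of it first. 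By the definition of parallelism between affine subspaces, $l_k\parallel l_n$ is equivalent to the direction space $\mathrm{span}(\boldsymbol{\lambda}_1,\dots,\boldsymbol{\lambda}_k)$ being contained in the direction space of $l_n$, i.e.\ to the $k$ linear constraints $\boldsymbol{w}_n^{T}\boldsymbol{\lambda}_j=0$ for $j=1,\dots,k$.

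Next I would condition on a fixed admissible normal $\boldsymbol{w}_n\ne\boldsymbol{0}$: the constraints then force each $\boldsymbol{\lambda}_j$ to lie in the proper linear subspace $\boldsymbol{w}_n^{\perp}\subset\mathbb{R}^n$ (of dimension $n-1<n$), which is Lebesgue-null in $\mathbb{R}^n$, so the joint event has measure zero; integrating over $\boldsymbol{w}_n$ via Fubini gives probability $0$. Symmetrically, conditioning instead on generic $\boldsymbol{\lambda}_j$'s, the admissible $\boldsymbol{w}_n$ must lie in the orthogonal complement of $\mathrm{span}(\boldsymbol{\lambda}_1,\dots,\boldsymbol{\lambda}_k)$, a subspace of dimension $n-k\le n-1<n$, again null. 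Either route yields the claim once the parameter distribution is absolutely continuous with respect to Lebesgue measure, which is exactly the posited model.

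The step needing the most care is the bookkeeping about what is random and with respect to which measure: the statement is meaningful only once one fixes that the parameters of $l_k$ (and/or $l_n$) are drawn from a distribution mutually absolutely continuous with Lebesgue measure and are not a priori tied to one another. Given that, the single essential point is the strict inequality $n-1<n$ (equivalently $k\le n-1$), which makes the parallelism locus a proper algebraic subvariety of the parameter space and therefore null; everything else is routine.
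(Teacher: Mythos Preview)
Your argument is correct and self-contained, but it proceeds differently from the paper. The paper does not directly analyse the algebraic locus $\{\boldsymbol{w}_n^{T}\boldsymbol{\lambda}_j=0:j=1,\dots,k\}$; instead it reduces to the one-dimensional case by quoting an external result (lemma~6 of \citet*{Huang2022b}) that a random line in $\boldsymbol{X}_n$ is parallel to $l_n$ with probability $0$, then observes that $l_k$ contains a line $l_1$ (obtained by freezing $t_2,\dots,t_k$ in the parametric form), and that $l_1\not\parallel l_n$ forces $l_k\not\parallel l_n$ since $l_1\subset l_k$. Your route is more elementary in that it proves the line case simultaneously with the general case and makes the measure-theoretic content explicit; the paper's route is shorter once the cited lemma is granted, and it isolates the single sufficient constraint $\boldsymbol{w}_n^{T}\boldsymbol{\lambda}_1\ne 0$ rather than carrying all $k$ constraints. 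Both hinge on the same observation---that a single direction vector generically fails to lie in $\boldsymbol{w}_n^{\perp}$---so the difference is one of packaging, not of substance.
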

\begin{proof}
By lemma 6 of \citet*{Huang2022b}, the probability of a line embedded in $\boldsymbol{X}_n$ parallel to $l_n$ is $0$. Let
\begin{equation}
\boldsymbol{x} = \boldsymbol{x}_0 + \sum_{j=1}^{k}t_j\boldsymbol{\lambda}_j
\end{equation}
be the parametric equation of $l_k$. Without loss of generality, we fixed the parameters of $t_j$'s except for $t_1$; then equation 7.1 could be written as $\boldsymbol{x} = \boldsymbol{x}_0' + t_1\boldsymbol{\lambda}_1$ with $\boldsymbol{x}_0'  = \boldsymbol{x}_0 + \sum_{\nu=2}^{k}t_{\nu}\boldsymbol{\lambda}_{\nu}$, which is a line denoted by $l_1$. Then $l_1 \subset l_k$. Line $l_1$ is not parallel to $l_n$ with probability $1$, and so is $l_k$, since $l_1 \subset l_k$.
\end{proof}

\begin{lem}
Under the notations of lemma 11, the probability of $l_k \subseteq l_n$ is $0$.
\end{lem}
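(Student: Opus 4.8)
The plan is to follow the line‑reduction already used in the proof of lemma 11 and then observe that the extra requirement ``contained in $l_n$'', as opposed to merely ``parallel to $l_n$'', imposes an additional constraint on the parameters, so the governing event stays null.

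First I would reduce to the one‑dimensional case exactly as in lemma 11: writing $l_k$ by its parametric equation 7.1 and freezing $t_2,\dots,t_k$, one gets the line $l_1=\{\boldsymbol{x}_0'+t_1\boldsymbol{\lambda}_1\}\subset l_k$ with $\boldsymbol{x}_0'=\boldsymbol{x}_0+\sum_{\nu=2}^{k}t_\nu\boldsymbol{\lambda}_\nu$. Since $l_1\subset l_k$, the inclusion $l_k\subseteq l_n$ forces $l_1\subseteq l_n$, so $\mathbb{P}(l_k\subseteq l_n)\le\mathbb{P}(l_1\subseteq l_n)$ and it is enough to kill the latter. Writing $l_n$ as $\boldsymbol{w}^T\boldsymbol{x}+b=0$, the event $l_1\subseteq l_n$ is the simultaneous occurrence of $\boldsymbol{w}^T\boldsymbol{\lambda}_1=0$ and $\boldsymbol{w}^T\boldsymbol{x}_0'+b=0$. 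The first of these is precisely ``$l_1$ parallel to $l_n$'', which has probability $0$ by lemma 11 with $k=1$; alternatively, and independently of any convention, the second condition says the anchor point $\boldsymbol{x}_0'$ lies on $l_n$, and since $l_n$ is an $(n-1)$‑dimensional affine subspace of $\boldsymbol{X}_n$ it is Lebesgue‑null, so under the probabilistic model of the appendix of \citet*{Huang2022a} the event $\{\boldsymbol{x}_0'\in l_n\}$ has probability $0$. Either route gives $\mathbb{P}(l_1\subseteq l_n)=0$, hence the claim.

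The main obstacle is not the geometry, which is routine, but being careful about two points of the setup. One must check that passing from $l_k$ to $l_1$ preserves enough randomness in the relevant parameters that ``$\boldsymbol{w}^T\boldsymbol{\lambda}_1=0$'' (or ``$\boldsymbol{x}_0'\in l_n$'') really is a positive‑codimension, hence null, event rather than something the conditioning on $t_2,\dots,t_k$ has made deterministic; and one must decide once and for all whether the paper's notion of ``parallel'' already subsumes the contained case, because that single convention is what determines whether lemma 12 is a one‑line corollary of lemma 11 or needs the supplementary anchor‑point argument above. Fixing that convention explicitly is, I expect, the only genuinely delicate step.
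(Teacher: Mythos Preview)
Your proposal is correct and follows essentially the same route as the paper: reduce $l_k$ to a line $l_1\subset l_k$ as in lemma~11, then observe that $l_1\subseteq l_n$ is a special case of $l_1\parallel l_n$, which already has probability~$0$. The paper resolves your ``convention'' worry directly by remarking that the underlying probability argument (lemma~6 of \citet*{Huang2022b}) depends only on the line's direction and not its location, so containment is automatically subsumed; your alternative anchor-point argument is sound but unnecessary here.
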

\begin{proof}
The proof of lemma 6 of \citet*{Huang2022b} is only associated with the line direction, regardless of the line location. Thus, the probability of a line being on $l_n$ is also 0, since it is a special case of parallel-direction relationships. Then similar to the proof of lemma 11, the conclusion holds.
\end{proof}

\begin{prp}
Let $l_i$ for $i = 1, 2, \cdots, m$ be a hyperplane derived from the $i$th unit of the first layer of network $n^{(1)}m^{(1)}$. If $m \le n$, the probability of the existence of the divided region $\prod_{i=1}^ml_i^0$ is $1$.
\end{prp}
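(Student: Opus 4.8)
The plan is to reduce the assertion to a rank condition on the first-layer weight matrix and then invoke the genericity built into the probabilistic model of note~10. Write $\boldsymbol{w}_i$ and $b_i$ for the parameters of the $i$th unit, so that $l_i^0 = \{\boldsymbol{x}\in\mathbb{R}^n : \boldsymbol{w}_i^T\boldsymbol{x} + b_i \le 0\}$, and collect the normals into the $m\times n$ matrix $\boldsymbol{W} = [\boldsymbol{w}_1, \boldsymbol{w}_2, \cdots, \boldsymbol{w}_m]^T$ together with the biases $\boldsymbol{b} = [b_1, b_2, \cdots, b_m]^T$. The divided region $\prod_{i=1}^m l_i^0$ exists, i.e.\ is a genuine full-dimensional cell of the arrangement of the $l_i$'s, precisely when the strict system $\boldsymbol{W}\boldsymbol{x} + \boldsymbol{b} < \boldsymbol{0}$ has a solution; so it suffices to exhibit such an $\boldsymbol{x}$ with probability $1$.

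First I would show that $\text{rank}(\boldsymbol{W}) = m$ with probability $1$. Since $m \le n$, the set of $m$-tuples $(\boldsymbol{w}_1, \cdots, \boldsymbol{w}_m)$ that are linearly dependent is the common zero set of the $m\times m$ minors of $\boldsymbol{W}$, hence a finite union of proper algebraic subvarieties of the parameter space, and therefore a null set under any model possessing a density, which is the situation here by note~10. This is in the same spirit as lemmas~11 and 12, and can alternatively be obtained by induction on $m$: given that $\boldsymbol{w}_1, \cdots, \boldsymbol{w}_{m-1}$ already span an $(m-1)$-dimensional subspace of $\mathbb{R}^n$, a freshly drawn $\boldsymbol{w}_m$ lands in that subspace with probability $0$ because $m-1 < n$.

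Second, conditioning on $\text{rank}(\boldsymbol{W}) = m$, the linear map $\boldsymbol{x}\mapsto\boldsymbol{W}\boldsymbol{x}$ is onto $\mathbb{R}^m$, so I may choose $\boldsymbol{x}_0$ with $\boldsymbol{W}\boldsymbol{x}_0 = \boldsymbol{c}$ for any prescribed $\boldsymbol{c}$; taking $c_i < -b_i$ for every $i$ yields $\boldsymbol{W}\boldsymbol{x}_0 + \boldsymbol{b} < \boldsymbol{0}$, and since this inequality is strict, a whole neighbourhood of $\boldsymbol{x}_0$ lies in $\prod_{i=1}^m l_i^0$, so that divided region exists. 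Combining the two steps gives probability $1$. The only real subtlety is the first step --- pinning down, within the paper's probabilistic model, that the first-layer normals are almost surely linearly independent when $m\le n$; everything afterwards is elementary, and the hypothesis $m\le n$ enters exactly to make full row rank of $\boldsymbol{W}$ (equivalently, surjectivity of $\boldsymbol{x}\mapsto\boldsymbol{W}\boldsymbol{x}$) possible.
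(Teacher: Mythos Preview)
Your argument is correct and is in fact cleaner than the paper's own proof. You reduce the question to an algebraic one: the region $\prod_{i=1}^m l_i^0$ is nonempty exactly when $\boldsymbol{W}\boldsymbol{x}+\boldsymbol{b}<\boldsymbol{0}$ is solvable, and full row rank of $\boldsymbol{W}$ (generic because $m\le n$) makes the map $\boldsymbol{x}\mapsto\boldsymbol{W}\boldsymbol{x}$ surjective, so any target vector strictly below $-\boldsymbol{b}$ can be hit. The paper instead argues geometrically by induction on the number of hyperplanes: it tracks the intersection flat $l=\bigcap_{\nu=1}^{k-1}l_\nu$, appeals to lemmas~11 and~12 (and lemma~1 of \citet*{Huang2022b}) to assert that the next hyperplane $l_k$ is almost surely transversal to $l$, and then uses a local continuity argument near the crossing point to conclude $l_k^0\cap\prod_{\nu<k}l_\nu^0\neq\emptyset$. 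Your route avoids those auxiliary lemmas and the somewhat informal continuity step, and it makes the role of the hypothesis $m\le n$ fully transparent (it is exactly the condition that permits full row rank). The paper's route, on the other hand, stays within its established geometric language and gives a picture of \emph{why} the argument breaks at $m=n+1$: the flat $l$ degenerates to a point and the inductive step cannot continue.
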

\begin{proof}
When $n=2$, the conclusion is trivial by lemma 11 and easy to be imagined. The case of $n = 3$ is general in the basic idea of the proof. When $m =2$, by lemma 11, planes $l_1$ and $l_2$ are probably unparallel to each other, and $l_1^0l_2^0$ obviously exists. If we add a third plane $l_3$, by lemmas 11 and 12, the line $l = l_1 \cap l_2 \subset l_1^0l_2^0$ intersects $l_3$ with probability $1$; that is, $l$ is not parallel to $l_3$ and $l \subsetneq l_3$. Due to the continuity of $l_1^0l_2^0$ on the neighbourhood of $l$, we have $l_3^0(l_1^0l_2^0) \ne \emptyset$.

The general case of $n$ can be similarly dealt with by inductive ways. Suppose that $\prod_{\nu = 1}^{k-1}l_{\nu}^0$ has been constructed, with $k < m$. By lemma 1 of \citet*{Huang2022b},
\begin{equation}
l = \bigcap_{\nu = 1}^{k-1}l_{\nu}
\end{equation}
is an $n-k + 1$-dimensional hyperplane. If a $k$th $n-1$-dimensional hyperplane $l_k$ is added, then $l$ is not parallel to $l_k$ as well as $l \subsetneq l_k$ with probability $1$, which implies the existence of $\prod_{i=1}^kl_i^0$. Repeat it until $k = m$.

Note that when $m = n$, the final $l$ of equation 7.2 is a point, which could not be further processed. Thus, this conclusion doesn't include the case of $m > n$.
\end{proof}

\begin{assm}
To network $n^{(1)}m^{(1)}$, we assume that the divided region $\prod_{i=1}^ml_i^0$ always exists, where $l_i$ is the $n-1$-dimensional hyperplane corresponding to the $i$th unit of the first layer.
\end{assm}

The rationale of assumption 4 lies in four aspects:
\begin{itemize}
\item[1.] Proposition 2 when $m \le n$.
\item[2.] When $m > n$, it is still possible that $\prod_{i=1}^ml_i^0$ exists.
\item[3.] Even if the condition of proposition 2 cannot be satisfied by a whole network, during its performance, some activated subnetwork may fulfil it, such as lemma 6 of \citet*{Huang2022a}.
\item[4.] In a deep-layer network, even if one layer satisfies assumption 4, it would be possible to influence the solution, whose mechanism will be discussed later.
\item[5.] We had given a construction method of $\prod_{i=1}^ml_i^0$ in theorem 4 of \citet*{Huang2020}.
\end{itemize}

\subsection{Model Description}
\begin{lem}
Denote a deep-layer network with $d$ hidden layers by
\begin{equation}
\mathcal{N} := n^{(1)}\prod_{i=1}^dm_i^{(1)}1^{(1)}.
\end{equation}
The interpolation of data set $F$ of equation 4.1 or 4.2 by $\mathcal{N}$ can be decomposed into two steps. The first is a map
\begin{equation}
f': D \to D',
\end{equation}
where $D'$ is the mapped data set of $D$ by the $d-1$th layer. The second step is an interpolation
\begin{equation}
\boldsymbol{\Psi}'\boldsymbol{\alpha} = \boldsymbol{y}
\end{equation}
by the three-layer subnetwork
\begin{equation}
\mathcal{N}_3 : = m_{d-1}^{(1)}m_{d}^{(1)}1^{(1)}
\end{equation}
of the last three layers of $\mathcal{N}$, where
\begin{equation}
\boldsymbol{\Psi}' = \begin{bmatrix}
\Phi_1(\boldsymbol{x}_1') & \Phi_2(\boldsymbol{x}_1') & \cdots & \Phi_{m_d}(\boldsymbol{x}_1')\\
\Phi_1(\boldsymbol{x}_2') & \Phi_2(\boldsymbol{x}_2') & \cdots & \Phi_{m_d}(\boldsymbol{x}_2')\\
\vdots & \vdots &\ddots & \vdots\\
\Phi_1(\boldsymbol{x}_{\mathcal{C}}') & \Phi_2(\boldsymbol{x}_{\mathcal{C}}') & \cdots & \Phi_{m_d}(\boldsymbol{x}_{\mathcal{C}}')
\end{bmatrix}
\end{equation}
with $\boldsymbol{x}_{k}' = f'(\boldsymbol{x}_{k})$ for $k = 1, 2, \cdots, \mathcal{C}$, $\boldsymbol{x}_{k} \in D$, $\boldsymbol{x}_{k}' \in D'$, and $\boldsymbol{\alpha} = [\alpha_1, \alpha_2, \cdots, \alpha_{m_d}]^T$.

Let
\begin{equation}
\mathcal{N} \setminus \mathcal{N}_3 := n^{(1)}\prod_{j=1}^{d-1}m_j^{(1)},
\end{equation}
which is obtained by deleting the last two layers of $\mathcal{N}$ of equation 7.3. Then $D'$ of equation 7.4 can be considered as the output of $\mathcal{N} \setminus \mathcal{N}_3$.

Under this model, correspondingly, equation 4.1 will become
\begin{equation}
F' = \{(\boldsymbol{x}_k', y_k); k = 1, 2, \cdots, \mathcal{C}\},
\end{equation}
which is to be interpolated by $\mathcal{N}_3$ of equation 7.6.
\end{lem}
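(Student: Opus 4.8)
The plan is to regard the feedforward network $\mathcal{N}$ of equation 7.3 as a composition of layer maps and to cut this composition immediately after the $(d-1)$th hidden layer. First I would recall that each hidden layer of $\mathcal{N}$ transforms the output of the preceding layer by an affine map followed by the componentwise ReLU $\sigma$, while the output layer is linear (note 11 of the notations); hence evaluating $\mathcal{N}$ at an input $\boldsymbol{x}_k\in D$ amounts to evaluating layers $1,2,\dots,d$ and then the output unit in succession. Define $f'$ to be the function computed by the first $d-1$ hidden layers, that is, the map realized by $\mathcal{N}\setminus\mathcal{N}_3$ of equation 7.9, whose architecture $n^{(1)}\prod_{j=1}^{d-1}m_j^{(1)}$ terminates at layer $d-1$ and therefore outputs an $m_{d-1}$-dimensional vector. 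Putting $\boldsymbol{x}_k':=f'(\boldsymbol{x}_k)$ and $D':=f'(D)$ gives equation 7.4 and the final assertion that $D'$ is the output of $\mathcal{N}\setminus\mathcal{N}_3$.

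Next I would check that what remains is exactly a three-layer network acting on $D'$. The layers $d-1$, $d$, and the output layer, taken together, constitute the subnetwork $\mathcal{N}_3=m_{d-1}^{(1)}m_d^{(1)}1^{(1)}$ of equation 7.6: layer $d-1$ serves as its input layer (carrying $m_{d-1}$ coordinates), layer $d$ as its single ReLU hidden layer (carrying $m_d$ units), and the original output layer as its linear output unit. Since the evaluation of $\mathcal{N}$ factors as ``apply $f'$, then apply $\mathcal{N}_3$'', the interpolation requirement for $\mathcal{N}$, namely $\sum_{\nu=1}^{m_d}\alpha_\nu\Phi_\nu(\boldsymbol{x}_k)=y_k$ with $\Phi_\nu$ the $\nu$th unit of layer $d$, coincides with the interpolation requirement $\sum_{\nu=1}^{m_d}\alpha_\nu\Phi_\nu(\boldsymbol{x}_k')=y_k$ of $\mathcal{N}_3$ on the mapped data, where now $\Phi_\nu(\boldsymbol{x}_k')=\sigma(\boldsymbol{w}_\nu^T\boldsymbol{x}_k'+b_\nu)$ for the parameters $\boldsymbol{w}_\nu,b_\nu$ of the units of layer $d$. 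Collecting these $\mathcal{C}$ scalar equations into matrix form, as equation 2.8 was obtained from equation 2.6 (equivalently, equations 4.3--4.4), yields $\boldsymbol{\Psi}'\boldsymbol{\alpha}=\boldsymbol{y}$ of equation 7.5 with $\boldsymbol{\Psi}'$ the matrix of equation 7.8. Finally, relabeling $F$ of equation 4.1 by replacing each input $\boldsymbol{x}_k$ with $\boldsymbol{x}_k'$ while keeping the target $y_k$ produces $F'$ of equation 7.10, which is precisely the data set interpolated by $\mathcal{N}_3$.

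I do not expect a genuine obstacle here: the statement is essentially the associativity of function composition applied to the layered structure, the only care needed being the bookkeeping that layer $d-1$ is shared, being simultaneously the output of $\mathcal{N}\setminus\mathcal{N}_3$ and the input of $\mathcal{N}_3$, so the two pieces overlap in one layer rather than partitioning the network disjointly. The point worth a remark is that $f'$ need not be injective --- the zero-output property of $\sigma$ can collapse several distinct inputs onto a common image --- so $F'$ is best read as the family indexed by $k=1,\dots,\mathcal{C}$ rather than as a bare set; if collapsed points carried different targets the interpolation of $F'$, hence of $F$, would already be impossible, so this causes no difficulty in the regime of interest. I would also note that $f'$, and therefore $D'$ and $\boldsymbol{\Psi}'$, are defined relative to a fixed choice of the parameters of layers $1,\dots,d-1$, in the spirit of the space-domain way of definition 6.
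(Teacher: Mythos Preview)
Your proposal is correct and follows the same approach as the paper: both simply observe that the layered evaluation of $\mathcal{N}$ factors through the $(d-1)$th hidden layer, with $\mathcal{N}\setminus\mathcal{N}_3$ realizing $f'$ and $\mathcal{N}_3$ performing the three-layer interpolation on the mapped data. The paper's proof is a three-sentence statement of this decomposition; your version is more explicit about the bookkeeping (the shared layer $d-1$, the matrix assembly paralleling equations 2.6--2.8) and anticipates the non-injectivity issue that the paper defers to Assumption 5, but the underlying argument is identical.
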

\begin{proof}
The final output of $\mathcal{N}$ is directly produced by the subnetwork $\mathcal{N}_3$ of equation 7.6, whose mechanism is the case of three-layer networks. The input layer of $\mathcal{N}_3$ is the $d-1$th layer of of $\mathcal{N}$. The subnetwork $\mathcal{N} \setminus \mathcal{N}_3$ realizes the function $f': D \to D'$ of equation 7.4.
\end{proof}

The following assumption is reasonable, since we have discussed it in detail in \citet*{Huang2022b}.
\begin{assm}
Assume that the map $f'$ of equation 7.4 from the input layer to the $d-1$th layer of $\mathcal{N}$ is bijective.
\end{assm}

\begin{lem}
Under assumption 5, each map $f_i: D_{i-1} \to D_i$ for $i = 2, 3, \cdots, d-1$ from the $i-1$th layer to $i$th layer is bijective, where $D_0 = D$ and the input layer is regarded as the $0$th layer.
\end{lem}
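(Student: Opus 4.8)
The plan is to exploit the fact that the composite map $f'$ of equation 7.4 is literally the composition of the layer maps $f_1, f_2, \ldots, f_{d-1}$, together with the elementary observation that injectivity of a composition forces injectivity of each factor on the appropriate domain. So the argument is essentially a bookkeeping consequence of assumption 5.

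First I would fix notation: write $g_0 := \mathrm{id}_{D}$ and, for $i = 1, 2, \ldots, d-1$, let $g_i := f_i \circ f_{i-1} \circ \cdots \circ f_1$ be the map carrying $D = D_0$ to the $i$th-layer data set $D_i$. By the way $D_i$ is defined in lemma 13 (it is the image of $D$ under the first $i$ layers, equivalently $f_i(D_{i-1})$), each $g_i$ is surjective onto $D_i$ and each $f_i : D_{i-1} \to D_i$ is surjective by construction; hence the whole issue reduces to injectivity. Note also that $g_{d-1} = f'$, which is bijective by assumption 5.

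Next I would show that every $g_i$ is injective. For each $i \le d-1$ we have the factorization $f' = g_{d-1} = (f_{d-1} \circ \cdots \circ f_{i+1}) \circ g_i$; since the left-hand side is injective, $g_i$ must be injective, because $g_i(\boldsymbol{x}) = g_i(\boldsymbol{y})$ would give $f'(\boldsymbol{x}) = f'(\boldsymbol{y})$ after applying $f_{d-1} \circ \cdots \circ f_{i+1}$ to both sides, hence $\boldsymbol{x} = \boldsymbol{y}$. Combined with the surjectivity noted above, each $g_i : D_0 \to D_i$ is a bijection ($g_0$ trivially so). Finally I recover each $f_i$: on $D_{i-1}$ one has $f_i = g_i \circ g_{i-1}^{-1}$, since every element of $D_{i-1}$ equals $g_{i-1}(\boldsymbol{x})$ for a unique $\boldsymbol{x} \in D$ and $f_i(g_{i-1}(\boldsymbol{x})) = g_i(\boldsymbol{x})$. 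As a composition of the two bijections $g_{i-1}^{-1} : D_{i-1} \to D_0$ and $g_i : D_0 \to D_i$, the map $f_i : D_{i-1} \to D_i$ is bijective, which is the claim.

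I do not anticipate a real obstacle. The only point needing care is pinning down the codomains correctly — reading ``bijective'' as bijective onto the finite data set $D_i$ actually produced by the layer, not onto the ambient Euclidean space (the ReLU layer maps are of course highly non-injective on all of $\mathbb{R}^{m_{i-1}}$) — and checking from the definition of $D_i$ that each $f_i$ is surjective by construction, so that only injectivity, which comes directly from assumption 5, requires an argument.
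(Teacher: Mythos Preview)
Your argument is correct and is essentially the same as the paper's: the paper proves the contrapositive in one line (if some $f_i$ fails to be bijective then the composite $f'$ fails too, contradicting assumption 5), which is exactly the injectivity-of-a-composition observation you spell out, with surjectivity of each $f_i$ onto $D_i$ being automatic from the definition. Your version is more explicit about the codomain bookkeeping, but the route is the same.
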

\begin{proof}
Otherwise, if there exists some $f_i$ that is not bijective, the composite function $f' = f_{d-1} \circ f_{d-2} \circ \cdots f_0$ is also not bijective, where $f'$ is the map of equation 7.4, which is a contradiction of assumption 5.
\end{proof}

\subsection{Activation Route of Data}

\begin{figure}[!t]
\captionsetup{justification=centering}
\centering
\subfloat[Two non-overlapping routes.]{\includegraphics[width=2.4in, trim = {2.5cm 2.5cm 4.5cm 2.4cm}, clip]{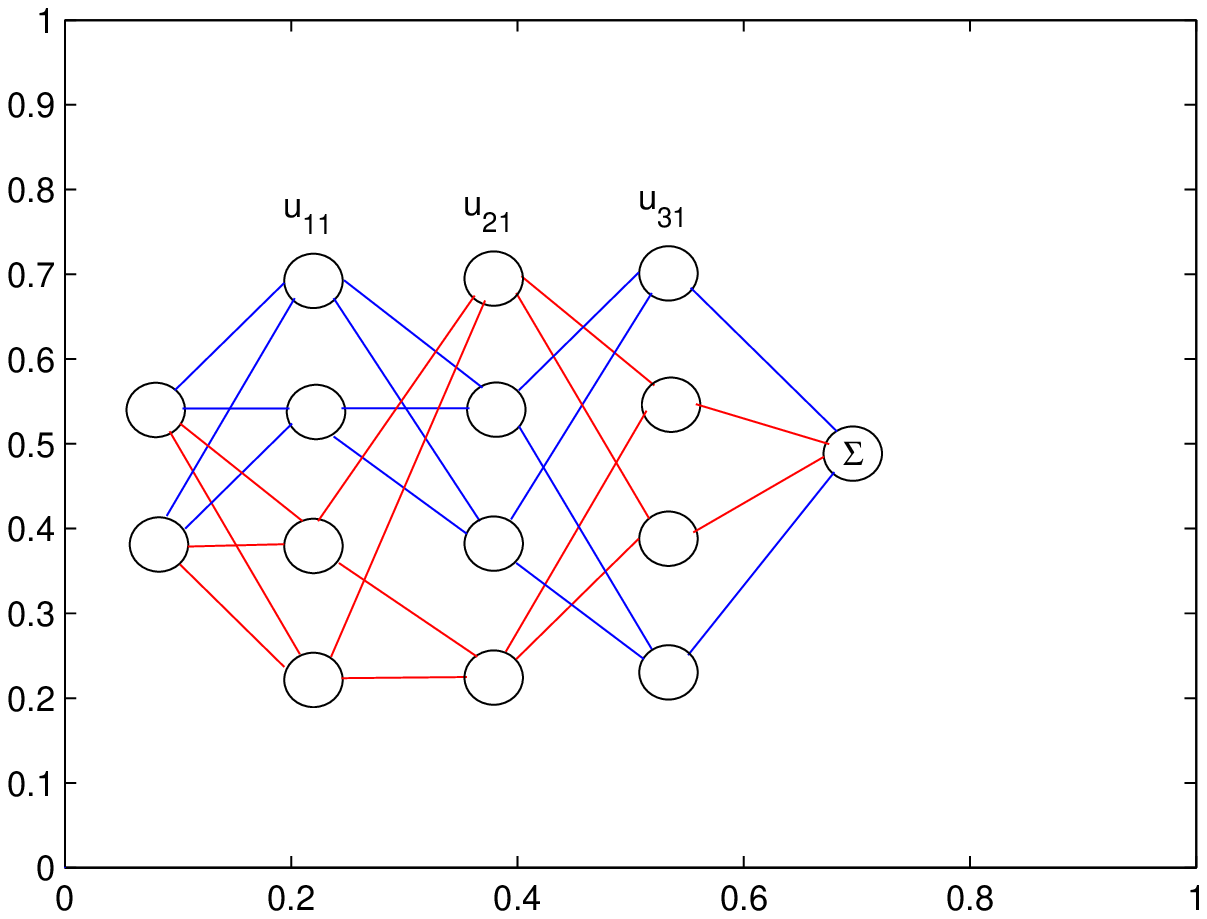}} \quad \quad
\subfloat[The third route.]{\includegraphics[width=2.4in, trim = {2.5cm 2.5cm 4.5cm 2.4cm}, clip]{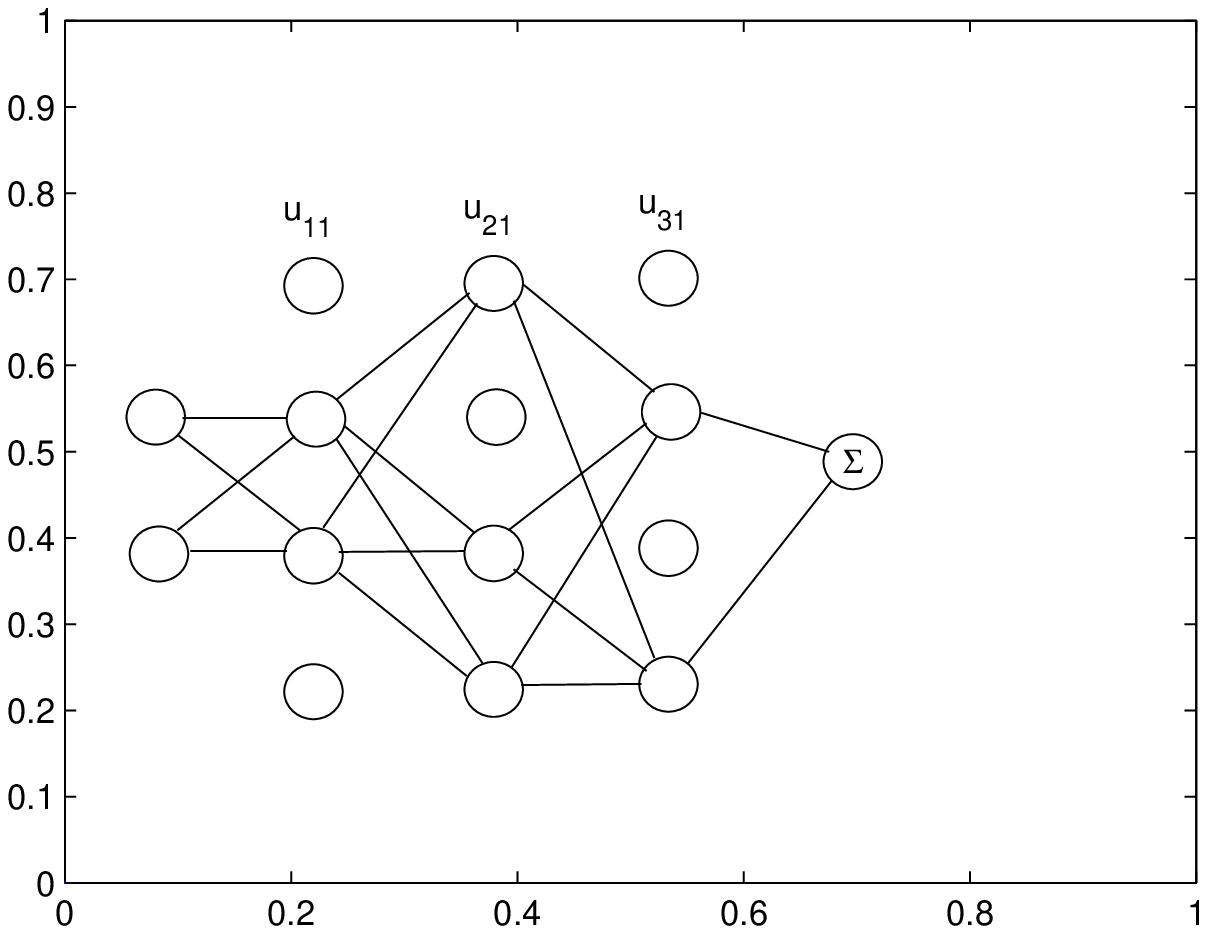}}
\caption{Activation routes of data.}
\label{Fig.3}
\end{figure}

\begin{dfn}
To a point $\boldsymbol{x} \in D$ of the input space, its route in network $\mathcal{N}$ of equation 7.3 is a subnetwork of $\mathcal{N}$ whose units are activated by $\boldsymbol{x}$ and are fully connected between adjacent layers.
\end{dfn}

For example, Figure \ref{Fig.3}a gives two different routes in $2^{(1)}4^{(3)}1^{(1)}$ connected by the blue and red links, respectively; and Figure \ref{Fig.3}b is another one with black links.

\begin{dfn}
The route of a subset $D_j \subset D$ in network $\mathcal{N}$ of equation 7.3 is the subnetwork that is composed of the units activated by any element of $D_j$, along with the fully connected links between adjacent layers. If a unit is in the route of $D_j$, there exists at least one point of $D_j$ activating it.
\end{dfn}

Denote the route of a point or subset of $D$ in network $\mathcal{N}$ of equation 7.3 by
\begin{equation}
\mathcal{R} := n^{(1)}\prod_{i=1}^{d}n_i^{(1)}1^{(1)},
\end{equation}
under which let
\begin{equation}
E_{\nu} := D^{(\nu)} \cap \prod_{j=1}^{n_i}l_{ij}^0
\end{equation}
with $\nu = i-1$, where $D^{(\nu)}$ is the mapped data set of $D$ by the $\nu$th layer (with $D^{(0)} = D$) of $\mathcal{N}$, and $l_{ij}$ is the hyperplane corresponding to the $j$th unit of the $i$th layer of route $\mathcal{R}$ of equation 7.10.

\begin{thm}
In route $\mathcal{R}$ of equation 7.10, the elements of $E_{\nu}$ of equation 7.11 will become a single point in subsequent layers. And the single point may vary layer by layer, which are denoted by $\boldsymbol{p}_{\nu+1}^{(\nu)}, \boldsymbol{p}_{\nu+2}^{(\nu)}, \cdots, \boldsymbol{p}_{d}^{(\nu)}$, with $\boldsymbol{p}_{d}^{(\nu)}$ the final output of the hidden layers of $\mathcal{R}$ for $E_{\nu}$. Especially, if $\boldsymbol{p}_{d-1}^{(\nu)} \in \prod_{j=1}^{n_d}l_{dj}^0$, then $\boldsymbol{p}_{d}^{(\nu)} = \boldsymbol{0}$, where $\boldsymbol{0}$ is a zero vector of size $n_d \times 1$.
\end{thm}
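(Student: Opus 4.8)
The plan is to reduce the statement to the zero-output property of a ReLU, $\sigma(s)=\max(0,s)$, together with the structural fact that, by Definitions 8 and 9, a route is fully connected between adjacent layers, so that each layer map of $\mathcal{R}$ is a well-defined function of the route outputs of the previous layer alone. Fix $\nu$ and write $i=\nu+1$, so that $E_{\nu}=D^{(\nu)}\cap\prod_{j=1}^{n_{i}}l_{ij}^{0}$ consists of the layer-$\nu$ images of the data that fall in $l_{ij}^{0}$ for every unit $u_{ij}$, $j=1,\dots,n_{i}$, of the $i$th layer of $\mathcal{R}$.

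First I would establish the collapse at layer $\nu+1$. For each $\boldsymbol{x}\in E_{\nu}$ the pre-activation $\boldsymbol{w}_{ij}^{T}\boldsymbol{x}+b_{ij}\le 0$ holds for all $j=1,\dots,n_{i}$, so each of the $n_{i}$ route units of layer $i$ outputs $0$ on $\boldsymbol{x}$; hence the layer-$i$ output of $\boldsymbol{x}$, restricted to the units of $\mathcal{R}$, is the zero vector of $\mathbb{R}^{n_{i}}$. Since this value is the same for every element of $E_{\nu}$, the whole set $E_{\nu}$ is carried by the $i$th layer of $\mathcal{R}$ to one common point, which we name $\boldsymbol{p}_{\nu+1}^{(\nu)}$ (in fact the zero vector of size $n_{i}\times 1$).

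Next I would propagate this by induction on the layer index $k$ from $\nu+1$ to $d$: the image of $E_{\nu}$ at layer $k$ of $\mathcal{R}$ is a single point $\boldsymbol{p}_{k}^{(\nu)}$. The base case $k=\nu+1$ is the previous paragraph. For the inductive step, the map sending the route outputs of layer $k$ to those of layer $k+1$ is a well-defined function (an affine map followed by $\sigma$) thanks to the full connectivity built into Definitions 8 and 9; applying it to the common value $\boldsymbol{p}_{k}^{(\nu)}$ produces a common value $\boldsymbol{p}_{k+1}^{(\nu)}$ for all of $E_{\nu}$. The point generally changes from layer to layer precisely because these maps are nontrivial affine-plus-ReLU transformations, which is why one obtains a genuine sequence $\boldsymbol{p}_{\nu+1}^{(\nu)},\boldsymbol{p}_{\nu+2}^{(\nu)},\dots,\boldsymbol{p}_{d}^{(\nu)}$; and since layer $d$ is the last hidden layer of $\mathcal{R}$, $\boldsymbol{p}_{d}^{(\nu)}$ is the final hidden-layer output of $\mathcal{R}$ for $E_{\nu}$.

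Finally the special case: if $\boldsymbol{p}_{d-1}^{(\nu)}\in\prod_{j=1}^{n_{d}}l_{dj}^{0}$, then every unit $u_{dj}$ of the $d$th layer of $\mathcal{R}$ has pre-activation $\le 0$ on the common point $\boldsymbol{p}_{d-1}^{(\nu)}$, and the same zero-output property forces $\boldsymbol{p}_{d}^{(\nu)}=\boldsymbol{0}$ of size $n_{d}\times 1$. The analytic content here is elementary; I expect the real work to be the bookkeeping around routes: checking that $E_{\nu}$ is read off from the units of $\mathcal{R}$ alone, that the units of $\mathcal{R}$ at each layer in fact receive their inputs only from the route outputs of the previous layer (so that the ``function'' used in the induction is legitimate, which uses that units outside the route are not activated by the relevant points, Definitions 4 and 5 of \citet*{Huang2022a}), and that the chain of mapped data sets $D^{(\nu)}$ of equation 7.11 is indexed consistently with the layers of $\mathcal{R}$. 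Assumptions 4 and 5 and the conventions on $l^{0}$ (Note 2 of \citet*{Huang2022a}) are what keep this consistent.
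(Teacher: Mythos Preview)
Your proof is correct and follows essentially the same approach as the paper: both use the zero-output property of the ReLU to show that all elements of $E_{\nu}$ collapse to the zero vector of $\mathbb{R}^{n_{\nu+1}}$ at layer $\nu+1$ of the route, then propagate the single-point property forward layer by layer since each route-layer map is a function, and finally apply the zero-output property once more for the special case. The paper argues by a worked example on Figure~\ref{Fig.3}a rather than by your explicit induction, but the content is identical; your closing paragraph about route bookkeeping in fact makes explicit what the paper leaves implicit in its example.
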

\begin{proof}
The proof is illustrated by an example. In Figure \ref{Fig.3}a, suppose that the subnetwork with blue links of $2^{(1)}4^{(3)}1^{(1)}$ is the route $r_1$ of $\boldsymbol{x}_1 \in D$. In the first layer of $r_1$, $\boldsymbol{x}_1$ activates units $u_{11}$ and $u_{12}$. By equation 7.11, $E_0 = D \cap l_{11}^0l_{12}^0$ doesn't activate $u_{11}$ and $u_{12}$. Thus, the outputs of the first layer of $r_1$ for all the elements of $E_0$ become a single point $\boldsymbol{p} = (0, 0)$; then in the subsequent layers, the outputs with respect to $E_0$ would always be a single point, until the last hidden layer.

The single point $\boldsymbol{p}$ for $E_0$ may be changed layer by layer, according to the parameter settings of each subsequent layer. And if in the second layer of $r_1$ we have $\boldsymbol{p} \in l_{31}^0l_{34}^0$, then the output of the last hidden layer of $r_1$ for $E_0$ is a zero vector.

The case of $E_1$ and $E_2$ is similar. The differences are that $E_1$ and $E_2$ become a single point from the second layer and third layer, respectively, and that the final output for $E_2$ is always a zero vector, since $E_2 \subset l_{31}^0l_{34}^0$. The general case is similar to this example.
\end{proof}

\begin{cl}
With the notations of theorem 9, if $\boldsymbol{p}_{k-1}^{(\nu)} \in \prod_{j=1}^{n_{k}}l_{kj}^0$ for $\nu+2 \le k \le d$, then the output of the last hidden layer of route $\mathcal{R}$ for $E_{\nu}$ is a zero vector.
\end{cl}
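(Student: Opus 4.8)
The plan is to prove the corollary by a one-line induction that reuses, layer by layer, exactly the mechanism already isolated in the final clause of theorem 9. The starting observation is that $\boldsymbol{p}_{\nu+1}^{(\nu)}=\boldsymbol{0}$ holds unconditionally: by the definition of $E_{\nu}$ in equation 7.11, every element of $E_{\nu}$ lies in $\prod_{j=1}^{n_{\nu+1}}l_{\nu+1,j}^{0}$, so each route unit $u_{\nu+1,j}$ of the $(\nu+1)$th layer of $\mathcal{R}$ receives a nonpositive pre-activation on it and outputs $0$; hence the image of $E_{\nu}$ through the $(\nu+1)$th layer of $\mathcal{R}$ is the zero vector of size $n_{\nu+1}\times 1$. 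This is the base of the induction.

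For the inductive step, fix $k$ with $\nu+2\le k\le d$ and assume $\boldsymbol{p}_{k-1}^{(\nu)}=\boldsymbol{0}$ (which, for $k=\nu+2$, is the base case just established). By theorem 9 the image of $E_{\nu}$ at the $(k-1)$th layer of $\mathcal{R}$ is still a single point, namely $\boldsymbol{p}_{k-1}^{(\nu)}$; the hypothesis of the corollary gives $\boldsymbol{p}_{k-1}^{(\nu)}\in\prod_{j=1}^{n_{k}}l_{kj}^{0}$, so each of the $n_{k}$ route units of the $k$th layer outputs $\sigma(\text{nonpositive})=0$ on this point, and therefore $\boldsymbol{p}_{k}^{(\nu)}=\boldsymbol{0}$. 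Iterating from $k=\nu+2$ to $k=d$ gives $\boldsymbol{p}_{d}^{(\nu)}=\boldsymbol{0}$; since $\boldsymbol{p}_{d}^{(\nu)}$ is by definition the output of the last hidden layer of $\mathcal{R}$ for $E_{\nu}$, this is the assertion. The degenerate cases $\nu\ge d-2$, where the index set $\{\nu+2,\dots,d\}$ is empty or a singleton, follow at once: if $\nu=d-1$ then $E_{\nu}$ already sits in $\prod_{j=1}^{n_{d}}l_{dj}^{0}$, and if $\nu=d-2$ the claim is precisely the final clause of theorem 9.

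I do not expect a real obstacle here; the entire content is bookkeeping of ReLU pre-activations. The one point I would state explicitly, rather than take for granted, is that the single-point property of $E_{\nu}$ established in theorem 9 persists through every intermediate layer $\nu+2,\dots,d$ irrespective of whether that single point happens to coincide with the origin at those layers — this is what lets us invoke the zero-output argument of theorem 9 at each such layer and not merely at layer $d$. Equivalently, once $\boldsymbol{p}_{k}^{(\nu)}=\boldsymbol{0}$, the hypothesis assumed for the next index merely says that the origin of the $k$th-layer coordinate system lies in $\prod_{j=1}^{n_{k+1}}l_{k+1,j}^{0}$, which again forces $\boldsymbol{p}_{k+1}^{(\nu)}=\boldsymbol{0}$, and the chain closes.
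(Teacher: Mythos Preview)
Your proposal is correct and matches the paper's own proof, which is the single sentence ``The condition of this corollary makes sure that the single output of each relevant hidden layer for $E_{\nu}$ is always a zero vector, until the last one.'' You have simply unpacked this into an explicit layer-by-layer argument. One small stylistic remark: the inductive hypothesis $\boldsymbol{p}_{k-1}^{(\nu)}=\boldsymbol{0}$ that you carry along is never actually used in the step---the conclusion $\boldsymbol{p}_{k}^{(\nu)}=\boldsymbol{0}$ follows directly from the corollary's assumption $\boldsymbol{p}_{k-1}^{(\nu)}\in\prod_{j}l_{kj}^{0}$ together with the single-point property of theorem 9---so the argument is really a direct chain rather than an induction, but this does not affect correctness.
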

\begin{proof}
The condition of this corollary makes sure that the single output of each relevant hidden layer for $E_{\nu}$ is always a zero vector, until the last one.
\end{proof}

\begin{rmk}
The proof of theorem 8 of section 6 is an application of this corollary.
\end{rmk}

\begin{cl}
Without loss of generality, to $\boldsymbol{x}_1 \in D$, suppose that $n^{(1)}\prod_{i=1}^{d}n_i^{(1)}1^{(1)}$ of equation 7.10 is the route $r_1$ of $\boldsymbol{x}_1$ in network $\mathcal{N}$ of equation 7.3, and that set $E_{\nu}$ of equation 7.11 is for $r_1$. Let $\tau_{\nu} = |E_{\nu}|$ be the cardinality of $E_{\nu}$. Then in the interpolation matrix $\boldsymbol{\Psi}'$ of equation 7.7, the $\tau_{\nu}$ output vectors of the last hidden layer of $r_1$ for $E_{\nu}$ are the same.
\end{cl}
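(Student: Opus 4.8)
The plan is to obtain this corollary as an essentially immediate consequence of theorem 9, so the proof reduces to matching the ``single point'' conclusion of theorem 9 to the rows of $\boldsymbol{\Psi}'$. First I would unwind what a row of $\boldsymbol{\Psi}'$ of equation 7.7 is: its $k$th row equals $[\Phi_1(\boldsymbol{x}_k'),\dots,\Phi_{m_d}(\boldsymbol{x}_k')]$ with $\boldsymbol{x}_k'=f'(\boldsymbol{x}_k)$ the output of the $d-1$th layer and $\Phi_j(\boldsymbol{x}_k')=\sigma(\boldsymbol{w}_j^T\boldsymbol{x}_k'+b_j)$ the output of the $j$th unit of the last hidden layer; restricting this row to the $n_d$ columns indexed by the units of the $d$th layer that lie on route $r_1$ gives precisely the ``output vector of the last hidden layer of $r_1$ for $\boldsymbol{x}_k$'' that appears in the statement.

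Second I would pass from the set $E_\nu$, which by equation 7.11 lives in the $\nu$th-layer space $D^{(\nu)}$, back to the input data: under assumption 5 and lemma 14 the layer maps $D\to D^{(1)}\to\cdots\to D^{(\nu)}$ are bijections, so $E_\nu$ is the image of a uniquely determined set of $\tau_\nu=|E_\nu|$ original data points of $D$, and these index a block of $\tau_\nu$ rows of $\boldsymbol{\Psi}'$. Then theorem 9, applied to route $r_1$, does all the work: it asserts that along $r_1$ the elements of $E_\nu$ collapse to a single point once they pass the $\nu+1$th layer (because $E_\nu\subset\prod_{j=1}^{n_{\nu+1}}l_{(\nu+1)j}^0$ forces every route unit there to output zero), and that this single point is then transported layer by layer to $\boldsymbol{p}_d^{(\nu)}$ at the last hidden layer, with $\boldsymbol{p}_d^{(\nu)}=\boldsymbol{0}$ in the degenerate case $\boldsymbol{p}_{d-1}^{(\nu)}\in\prod_{j=1}^{n_d}l_{dj}^0$. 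Hence every one of those $\tau_\nu$ data points produces the same route-$r_1$ last-hidden-layer output vector $\boldsymbol{p}_d^{(\nu)}$, i.e. the corresponding $\tau_\nu$ (route-restricted) rows of $\boldsymbol{\Psi}'$ coincide, which is the assertion.

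I do not expect any genuine analytic difficulty here; the whole content is already carried by theorem 9. The one point requiring care — and the only plausible source of error — is the route-versus-network bookkeeping: the collapse of $E_\nu$ to a single point holds for the sub-computation along route $r_1$, where each layer of $r_1$ is fed only the outputs of the route units of the preceding layer, and need not hold for the ambient full-width representation, since an element of $E_\nu$ may well activate layer-$(\nu+1)$ units outside $r_1$; correspondingly the identical-rows conclusion must be read for the columns of $\boldsymbol{\Psi}'$ attached to $r_1$. Keeping this restriction explicit throughout, together with the bijectivity of the layer maps used to index the $E_\nu$ rows, is all that is needed to make the argument airtight.
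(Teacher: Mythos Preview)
Your proposal is correct and follows essentially the same approach as the paper: restrict $\boldsymbol{\Psi}'$ to the $n_d$ columns indexed by the last-hidden-layer units on $r_1$, then invoke theorem 9 to conclude that the $\tau_\nu$ rows coming from $E_\nu$ coincide. The paper presents this via a concrete example (Figure \ref{Fig.3}a with $n_d=2$) rather than your abstract formulation, and it does not make explicit either the bijectivity step for indexing the $E_\nu$ rows or the route-versus-full-network caveat you flag, but the underlying argument is the same.
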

\begin{proof}
Select the $n_d$ columns of $\boldsymbol{\Psi}'$ of equation 7.7 associated with route $r_1$ as
\begin{equation}
\boldsymbol{\Psi}'_{r_1} = \begin{bmatrix}
\Phi_{k_1}(\boldsymbol{x}_1') & \Phi_{k_2}(\boldsymbol{x}_1') & \cdots & \Phi_{k_{\mu}}(\boldsymbol{x}_1')\\
\Phi_{k_1}(\boldsymbol{x}_2') & \Phi_{k_2}(\boldsymbol{x}_2') & \cdots & \Phi_{k_{\mu}}(\boldsymbol{x}_2')\\
\vdots & \vdots &\ddots & \vdots\\
\Phi_{k_1}(\boldsymbol{x}_{\mathcal{C}}') & \Phi_{k_2}(\boldsymbol{x}_{\mathcal{C}}') & \cdots & \Phi_{k_{\mu}}(\boldsymbol{x}_{\mathcal{C}}')
\end{bmatrix},
\end{equation}
where $\mu = n_d$ and $1 \le k_j \le m_d$ for $j = 1, 2, \cdots, n_d$, with $m_d$ the number of the units of the $d$th layer of $\mathcal{N}$.

We use an example of Figure \ref{Fig.3}a to show the main idea. Denote by $N_2 = 2^{(1)}4^{(3)}1^{(1)}$ the network of Figure \ref{Fig.3}a. Suppose that the subnetwork of $N_2$ connected by the blue links is the route $r_1$ of $\boldsymbol{x}_1$. The unit $u_{ij}$ for $i = 1, 2, 3$ and $j = 1, 2, 3,4$ is the $i$th unit of the $j$th layer of $N_2$, and $l_{ij}$ is the hyperplane corresponding to $u_{ij}$. For simplicity, only $u_{11}$, $u_{21}$ and $u_{31}$ are labeled in Figure \ref{Fig.3}. In this example, $n_d = 2$, and thus equation 7.12 becomes
\begin{equation}
\boldsymbol{\Psi}''_{r_1} = \begin{bmatrix}
\Phi_1(\boldsymbol{x}_1') & \Phi_4(\boldsymbol{x}_1')\\
\Phi_1(\boldsymbol{x}_2') & \Phi_4(\boldsymbol{x}_2')\\
\vdots & \vdots\\
\Phi_1(\boldsymbol{x}_{\mathcal{C}}') & \Phi_4(\boldsymbol{x}_{\mathcal{C}}')
\end{bmatrix},
\end{equation}
where $\Phi_1$ and $\Phi_4$ are associated with the outputs of $u_{31}$ and $u_{34}$, respectively.

To equation 7.11, Let $\tau_0 = |E_0|$. By theorem 9, the outputs of $u_{31}$ and $u_{34}$ for all the elements of $E_0$ are the same. Correspondingly, there would be $\tau_0$ identical rows in $\boldsymbol{\Psi}''_{r_1}$ of equation 7.13.

Let $D^{(1)}$ and $D^{(2)}$ be the mapped sets of $D$ by the the first layer and second layer of $N_2$, respectively. Write $E_1 = D^{(1)} \cap l_{22}^0l_{23}^0$ and $E_2 = D^{(2)} \cap l_{31}^0l_{34}^0$. Let $\tau_1 = |E_1|$ and $\tau_2 = |E_2|$. Analogous to the $E_0$ case, there will be $\tau_1$ identical rows in $\boldsymbol{\Psi}''_{r_1}$ for $E_1$, and $\tau_2$ identical rows for $E_2$. The general case is similar to this example.
\end{proof}

Let
\begin{equation}
H' = \{l_{dj}; j = 1, 2, \cdots, m_d\},
\end{equation}
where $l_{dj}$ is the hyperplane corresponding to unit $u_{dj}$ of the last hidden layer of network $\mathcal{N}$ of equation 7.3. According to the arrangement of $H'$, decompose $D'$ of equation 7.4 into the form
\begin{equation}
D' = \bigcup_{\mu}D_{\mu}',
\end{equation}
where each $D_{\mu}'$ is in a unique divided region of $H'$. Correspondingly, by assumption 5, because the map $f'$ of equation 7.4 is bijective, we have $D = \bigcup_{\mu}D_{\mu}$ of the input space, with
\begin{equation}
D_{\mu}' = f'(D_{\mu}).
\end{equation}

Since the linear components of a piecewise linear function realized by network $\mathcal{N}$ of equation 7.3 are discriminated by the divided regions of $H'$, the route of $D_{\mu}$ in $\mathcal{N}$ is particularly important, whose influence on the interpolation matrix $\boldsymbol{\Psi}'$ of equation 7.7 is characterized as follows.

\begin{cl}
Suppose that equation 7.10 is the route $r_{\mu}$ of $D_{\mu}$ of equation 7.16. Let
\begin{equation}
\boldsymbol{\Psi}'_{\mu} = \begin{bmatrix}
\Phi_1(\boldsymbol{x}_{1}') & \Phi_2(\boldsymbol{x}_{1}') & \cdots & \Phi_{n_d}(\boldsymbol{x}_{1}')\\
\vdots & \vdots &\ddots & \vdots\\
\Phi_1(\boldsymbol{x}_{\beta_1}') & \Phi_2(\boldsymbol{x}_{\beta_1}') & \cdots & \Phi_{n_d}(\boldsymbol{x}_{\beta_1}')\\
\vdots & \vdots &\ddots & \vdots\\
\Phi_1(\boldsymbol{x}_{\mathcal{C}}') & \Phi_2(\boldsymbol{x}_{\mathcal{C}}') & \cdots & \Phi_{n_d}(\boldsymbol{x}_{\mathcal{C}}')
\end{bmatrix}.
\end{equation}
be a submatrix of $\boldsymbol{\Psi}'$ of equation 7.7, where subscripts $1, 2, \cdots, n_d$ of each row represent the $n_d$ activated hyperplanes of $D_{\mu}'$ of equation 7.16, with $\beta_{\mu} = |D_{\mu}'| = |D_{\mu}|$. Then each $E_{\nu}$ of equation 7.11 with cardinality $\tau_{\nu} = |E_{\nu}|$ will contribute to $\tau_{\nu}$ identical rows in $\boldsymbol{\Psi}'_{\mu}$ of equation 7.17.
\end{cl}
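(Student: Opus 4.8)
The plan is to obtain this corollary as the direct subset-analogue of corollary 3, applying theorem 9 to the route $r_{\mu}$ of the whole subset $D_{\mu}$ in place of the route $r_1$ of a single point. First I would pin down which $n_d$ columns $\boldsymbol{\Psi}'_{\mu}$ of equation 7.17 retains. Since $D'_{\mu}$ of equation 7.16 lies in a single divided region of $H'$ of equation 7.14, every element of $D'_{\mu}$ has the same activation pattern on the units of the last hidden layer, so the $n_d$ hyperplanes it activates are exactly the $n_d$ units of the $d$th layer belonging to route $r_{\mu}$ of equation 7.10. Hence the columns indexed $1,2,\dots,n_d$ in equation 7.17 are precisely the outputs of the $d$th-layer units of $\mathcal{R}=r_{\mu}$.

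Next, fix $\nu=i-1$ and consider $E_{\nu}=D^{(\nu)}\cap\prod_{j=1}^{n_i}l_{ij}^0$ of equation 7.11. By theorem 9 applied to $\mathcal{R}=r_{\mu}$, once the $\nu$th-layer images of the elements of $E_{\nu}$ pass the $i$th layer they collapse to one point, and this single point is propagated (changing from layer to layer if necessary) through layers $\nu+2,\dots,d$, so after the last hidden layer all of $E_{\nu}$ shares one value $\boldsymbol{p}_{d}^{(\nu)}$ on the $d$th-layer units of $r_{\mu}$. By assumption 5 and lemma 14 the maps up to the $d-1$th layer are bijective, so the $\tau_{\nu}=|E_{\nu}|$ elements of $E_{\nu}$ are the images of $\tau_{\nu}$ distinct points of $D$, which index $\tau_{\nu}$ of the $\mathcal{C}$ rows of $\boldsymbol{\Psi}'$ and hence of $\boldsymbol{\Psi}'_{\mu}$. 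Combining the two steps finishes the argument: in each of those $\tau_{\nu}$ rows the entries in the $n_d$ retained columns are the outputs of the $d$th-layer units of $r_{\mu}$ at the corresponding mapped point, and by the second step these equal the components of the common vector $\boldsymbol{p}_{d}^{(\nu)}$; so the $\tau_{\nu}$ rows of $\boldsymbol{\Psi}'_{\mu}$ coincide, and letting $\nu$ run over all admissible indices gives the claim.

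I expect the only delicate point to be the second step. An element of $E_{\nu}$ need not belong to $D_{\mu}$, so in the full matrix $\boldsymbol{\Psi}'$ its row could be nonzero in columns outside $r_{\mu}$ or entirely zero on the route columns, and one has to be sure this does not spoil the constancy. It does not, precisely because $\boldsymbol{\Psi}'_{\mu}$ keeps only the columns of the units activated by $D'_{\mu}$ — that is, the $d$th-layer units of $r_{\mu}$ — and theorem 9 guarantees that once an image lands in $\prod_{j=1}^{n_i}l_{ij}^0$ it is thereafter transmitted through $r_{\mu}$ identically for every element of $E_{\nu}$; hence on the retained columns the rows are constant no matter what happens outside $r_{\mu}$. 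With that checked, the proof is a verbatim reprise of corollary 3, and, as in the proofs of theorem 9 and corollary 3, it suffices to exhibit the reasoning on the representative configuration of Figure \ref{Fig.3} and note that the general case is identical.
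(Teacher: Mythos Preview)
Your proposal is correct and follows the same approach as the paper: the paper's own proof is the single line ``trivial by corollary 3, with only the type of the route changed,'' and your argument is exactly this reduction, written out with the details (column identification, application of theorem 9, bijectivity via assumption 5 and lemma 14) made explicit. The extra care you take over whether elements of $E_{\nu}$ lie in $D_{\mu}$ goes beyond what the paper spells out but is consistent with its intent.
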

\begin{proof}
The proof is trivial by corollary 3, with only the type of the route changed.
\end{proof}

\begin{rmk}
Note that corollaries 3 and 4 are two different view angles of the same interpolation matrix. Corollary 4 is more directly related to the piecewise linear components, which is the reason that we highlight it.
\end{rmk}

\subsection{Solution via Space-Domain Way}
The next two propositions are examples of the effect of deep layers on an interpolation matrix.
\begin{prp}
By \citet*{Huang2022a}, network $\mathcal{N}$ of equation 7.3 for $m_i > n$ could make the interpolation matrix $\boldsymbol{\Psi}'$ of equation 7.7 to be
\begin{equation}
\boldsymbol{\Psi}' = \begin{bmatrix}
\boldsymbol{B}_{11} & \boldsymbol{0} & \cdots & \boldsymbol{0}\\
\boldsymbol{0} & \boldsymbol{B}_{22} & \cdots & \boldsymbol{0}\\
\vdots & \vdots &\ddots & \vdots\\
\boldsymbol{0} & \boldsymbol{0} & \cdots & \boldsymbol{B}_{NN}
\end{bmatrix},
\end{equation}
which is a block-diagonal matrix, where the symbol $\boldsymbol{0}$ represents a zero matrix whose size varies according to the associated block.
\end{prp}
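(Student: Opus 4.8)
The plan is to use the deep subnetwork $\mathcal{N}\setminus\mathcal{N}_3$ of equation 7.9 to \emph{disentangle} the linear pieces of $F$, and then to choose the hyperplanes of the last hidden layer so that each piece activates its own private group of units and no others. First I would fix a decomposition $F=\bigcup_{\mu=1}^{N}F_{\mu}$ of equation 4.1 with each $F_{\mu}$ lying on an $n$-dimensional hyperplane; such a decomposition always exists by theorem 3. Writing $D=\bigcup_{\mu}D_{\mu}$ for the induced partition of the domain and $D_{\mu}'=f'(D_{\mu})$ for its image in the $(d-1)$th layer as in equation 7.16, the goal becomes: arrange $f'$ and the last hidden layer $H'$ of equation 7.14 so that, for $\nu\ne\mu$, every point of $D_{\nu}'$ gives zero output on every unit that $D_{\mu}'$ uses.

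Next I would invoke the constructions of \citet*{Huang2022a}. Since $m_i>n$ for every $i$, by theorem 8 of \citet*{Huang2022a} the layers of $\mathcal{N}\setminus\mathcal{N}_3$ can realize an arbitrary affine transform, and composing them implements the interference-avoiding arrangement of \citet*{Huang2022a} in the representation space of the $(d-1)$th layer; in particular $f'$ of equation 7.4 can be taken bijective (assumption 5, lemma 13) while placing the images $D_{1}',\dots,D_{N}'$ in pairwise well-separated regions. In the last hidden layer, which has $m_d>n$ units, I would then partition the units into groups $G_{1},\dots,G_{N}$ and, for each $\mu$, choose the hyperplanes $\{\,l_{dj}:j\in G_{\mu}\,\}$ so that $D_{\mu}'$ activates all of them while $D_{\nu}'\subset\bigcap_{j\in G_{\mu}}l_{dj}^{0}$ for every $\nu\ne\mu$. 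The slack $m_d>n$ is exactly what makes room for this: a group of a few more than $n$ units suffices to interpolate one $n$-dimensional piece, and the separation of the $D_{\mu}'$'s guarantees the required zero-output relations. This is the deep-layer analogue of the higher-dimensional interference-avoiding principle of \citet*{Huang2022a}, and it is illustrated concretely by theorem 8 of this paper.

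With this arrangement in hand, order the rows of $\boldsymbol{\Psi}'$ of equation 7.7 by the subdomains $D_{1}',\dots,D_{N}'$ and its columns by the groups $G_{1},\dots,G_{N}$. For $\nu\ne\mu$ the block $\boldsymbol{B}_{\nu\mu}$ collects the outputs of the units of $G_{\mu}$ evaluated at the points of $D_{\nu}'$; these vanish because $D_{\nu}'\subset\bigcap_{j\in G_{\mu}}l_{dj}^{0}$, so every off-diagonal block is the zero matrix. The diagonal block $\boldsymbol{B}_{\mu\mu}$ is just the three-layer interpolation matrix of $D_{\mu}'$ by its own group $G_{\mu}$ (positive entries where needed). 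Hence $\boldsymbol{\Psi}'$ takes the block-diagonal form of equation 7.18.

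I expect the main obstacle to be the second step: checking that the affine-transform and interference-avoiding machinery of \citet*{Huang2022a}, which was originally set up with the freedom of an ambient higher-dimensional space, still carries through when every layer is constrained to have only $m_i>n$ units and no units may be added. The delicate points are that the mutual separation of the $D_{\mu}'$'s must be propagated through \emph{all} the layers while keeping $f'$ bijective and compatible with the existence of the divided region $\prod_{j=1}^{m_i}l_{ij}^{0}$ of assumption 4, and that $m_i>n$ must be verified to suffice at each stage rather than only at the last hidden layer.
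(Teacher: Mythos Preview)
Your proposal is correct and follows essentially the same approach as the paper: both defer to the constructions of \citet*{Huang2022a} to arrange that, in the last hidden layer, each unit is activated exclusively by one subdomain $D_{\mu}$ and by no other, which immediately yields the block-diagonal form. The paper's proof is a two-sentence citation of theorems 10, 5 and lemma 6 of \citet*{Huang2022a}, whereas you unpack the mechanism more explicitly (disentangling via $\mathcal{N}\setminus\mathcal{N}_3$, partitioning the last-layer units into groups $G_{\mu}$, ordering rows and columns); your citation of theorem 8 of \citet*{Huang2022a} for the affine-transform freedom is a different entry point into the same toolkit, and your final worry about whether $m_i>n$ suffices at every layer is precisely what the paper sidesteps by invoking those cited results wholesale rather than re-deriving them.
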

\begin{proof}
By the proofs of theorems 10, 5 and lemma 6 of \citet*{Huang2022a}, in the last hidden layer of $\mathcal{N}$, each unit can be constructed to be only activated by a certain subset $D_i$ of $D$; conversely, the subset $D_i$ of $D$ could have its distinct activation units and cannot activate others. Equation 7.18 indicates this fact by matrix $\boldsymbol{\Psi}'$.
\end{proof}

The proposition below derived from the results of section 6 demonstrates that the combination of different independent routes could yield complex behaviors.
\begin{prp}
Let $D = \bigcup_{j=1}^kD_j$ be a data set of the input space of network $\mathcal{N}$ of equation 7.3, where $D_j$ is contained in an open convex polytope that other points of $D$ do not belong to. Denote by $r_j$ the route of $D_j$ in $\mathcal{N}$, whose architecture is $n^{(1)}\prod_{i=1}^dm_i^{(1)}1^{(1)}$ for $m_i \ge n$. Then if $r_{\nu} \cap r_{\mu} = \emptyset$ for $1 \le \nu, \mu \le k$ with $\nu \ne \mu$, and if $r_j$ works by the mechanism of corollary 1 or theorem 8, the interpolation matrix $\boldsymbol{\Psi}'$ of equation 7.7 could be in the form of equation 7.18.
\end{prp}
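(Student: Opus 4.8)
The plan is to treat each $D_j$, together with its enclosing open convex polytope $\mathcal{P}_j$, as a two-category problem in which $D_j$ plays the role of the $*$-points and $D\setminus D_j=\bigcup_{i\ne j}D_i$ that of the $o$-points. By hypothesis no point of $D\setminus D_j$ lies in $\mathcal{P}_j$, so the conditions of theorem 8 are met for this sub-problem (and those of corollary 1 when $m_i>n$). Hence route $r_j$ can be configured by the mechanism of theorem 8 or corollary 1 so that, running through the hidden layers of $r_j$, the set $D\setminus D_j$ is carried forward --- each chunk collapsing to a single point in the manner of theorem 9 --- into the intersection of the inactive half-spaces of the last-hidden-layer units of $r_j$, while $D_j$ itself is transmitted through $r_j$ in the sense of affine transforms (lemmas 9 and 10). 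By corollary 2, every point of $D\setminus D_j$ then produces the zero vector as its output from the last hidden layer of $r_j$.

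First I would fix such a configuration simultaneously for $j=1,\dots,k$; it is precisely the hypothesis $r_\nu\cap r_\mu=\emptyset$ that makes the $k$ constructions mutually consistent, since the polytopes contain no data points of the other groups and the routes share no units, so the processing inside $r_j$ neither uses nor perturbs the units active for the other groups. Write $C_j\subseteq\{1,\dots,m_d\}$ for the set of units of the last hidden layer of $\mathcal{N}$ lying in $r_j$. Disjointness of the routes gives $C_\nu\cap C_\mu=\emptyset$ for $\nu\ne\mu$, and by taking $|C_j|=\beta_j:=|D_j|$ with $m_d=\mathcal{C}=\sum_{j=1}^k\beta_j$ we may further arrange that $\{C_1,\dots,C_k\}$ partitions the columns of the interpolation matrix $\boldsymbol{\Psi}'$ of equation 7.7.

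Next I would read off the block structure. Partition the rows of $\boldsymbol{\Psi}'$ according to $D'=\bigcup_{j=1}^kD_j'$ with $D_j'=f'(D_j)$ --- a genuine partition, since $f'$ of equation 7.4 is bijective by assumption 5, hence so is each inter-layer map by lemma 14 --- and partition its columns by $C_1,\dots,C_k$. For $\boldsymbol{x}_k\in D_i$ and a unit $u\in C_j$ with $i\ne j$, the point $\boldsymbol{x}_k$ belongs to $D\setminus D_j$, so either by corollary 2 applied to $r_j$, or simply because $u\in r_j$ while $u\notin r_i$ so that $\boldsymbol{x}_k$ cannot activate $u$, the entry of $\boldsymbol{\Psi}'$ in row $\boldsymbol{x}_k'$ and column $u$ is zero; hence every off-diagonal block of $\boldsymbol{\Psi}'$ vanishes. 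The diagonal block indexed by $(j,j)$ has rows $D_j'$ and columns $C_j$, is in general a full matrix $\boldsymbol{B}_{jj}$, and has the internal row structure described by corollary 4. Therefore $\boldsymbol{\Psi}'$ is block-diagonal in the form of equation 7.18 with $N=k$.

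The hard part is the simultaneity asserted in the second paragraph: one must exhibit a single parameter assignment of $\mathcal{N}$ realizing the theorem-8 / corollary-1 mechanism for all $k$ routes at once while keeping them pairwise disjoint. This is where the geometric separation of the polytopes and the flexibility of corollary 1 are decisive --- redundant layers, which merely transmit in the sense of affine transforms by theorem 8 of \citet*{Huang2022a}, are inserted to reconcile the depth $d$ of $\mathcal{N}$ with the differing numbers of $(n-1)$-faces of the $\mathcal{P}_j$'s, and while $r_j$ is actively collapsing $D\setminus D_j$ the other routes are in such a transmitting phase, so the constructions do not interfere. Once this coexistence is established, the block-diagonal conclusion follows immediately from the activation pattern.
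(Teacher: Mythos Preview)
Your approach is correct and essentially the same as the paper's: use theorem 8 / corollary 1 inside each route $r_j$ so that $D_j$ is transmitted while $D\setminus D_j$ is collapsed to zero output in the last hidden layer of $r_j$, and then read off the block-diagonal form from the disjointness of the routes. The paper's own proof is a single sentence to this effect, so you have supplied considerably more detail than required.

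Two minor remarks. First, the constraint $|C_j|=\beta_j$ with $m_d=\mathcal{C}$ is not needed: equation 7.18 does not demand square diagonal blocks, only that the off-diagonal blocks vanish, so the column partition by the $C_j$'s suffices regardless of block sizes. Second, your final paragraph on the ``hard part'' of simultaneity is not strictly part of the proof obligation here. The proposition takes as \emph{hypotheses} that the routes are pairwise disjoint and that each $r_j$ already works by the mechanism of theorem 8 or corollary 1; the word ``could'' in the conclusion is then immediate from those hypotheses, and the paper does not attempt to construct a single parameter assignment from scratch. Your discussion is a reasonable gloss on why the hypotheses are plausible, but it goes beyond what the proposition asks. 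Your alternative justification for the off-diagonal zeros --- that $u\in r_j$ and $u\notin r_i$ already forces $\Phi_u(\boldsymbol{x}')=0$ for $\boldsymbol{x}\in D_i$ by the very definition of route --- is in fact cleaner than invoking corollary 2, and the paper does not make this explicit.
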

\begin{proof}
In each route $r_j$, such as in Figure \ref{Fig.3}a, the method of corollary 1 or theorem 8 could separate $D_j$ from $D - D_j$, in terms of nonzero and zero output vectors of the last hidden layer of $r_j$, respectively; and the overall effect is the same as that of proposition 3.
\end{proof}

\begin{thm}[Sparse-matrix principle]
A deep-layer network can produce sparser interpolation matrix $\boldsymbol{\Psi}'$ of equation 7.7, compared to a three-layer one. And as the network becomes deeper, $\boldsymbol{\Psi}'$ has the possibility to be sparser.
\end{thm}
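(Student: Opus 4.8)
The plan is to establish the principle not through a single sharp inequality but by exhibiting, with the machinery already developed, the sparsity-inducing operations available to a deep network that are unavailable to a three-layer one. I would start from the decomposition of lemma 13: for $\mathcal{N} = n^{(1)}\prod_{i=1}^dm_i^{(1)}1^{(1)}$ the interpolation matrix $\boldsymbol{\Psi}'$ of equation 7.7 is built from the \emph{mapped} data $D' = f'(D)$, where the map $f'$ of equation 7.4 is realized by the front subnetwork $\mathcal{N}\setminus\mathcal{N}_3 = n^{(1)}\prod_{j=1}^{d-1}m_j^{(1)}$ of equation 7.9; whereas for a three-layer network $n^{(1)}m^{(1)}1^{(1)}$ the matrix $\boldsymbol{\Psi}$ of equation 4.4 acts directly on the fixed input data $D$, so that a zero entry $\Phi_\nu(\boldsymbol{x}_k)=0$ can arise only when $\boldsymbol{x}_k \in l_\nu^0$ and the zero pattern is pinned to the geometry of the original points. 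By theorem 2 the activation mode of any three-layer solution is of the form $\mathcal{M}^*$ of equation 3.6 --- its above-diagonal blocks are $\boldsymbol{0}'$, but the $\boldsymbol{U}$ blocks below the diagonal are not forced to vanish. The proof then reduces to showing that the freedom in choosing $f'$ drives $\boldsymbol{\Psi}'$ strictly past that form.

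The first ingredient I would invoke is iterated collapsing via theorem 9. By lemma 9 one hidden layer can collapse a subset sitting in a divided region $\prod_j l_{ij}^0$ to a single point while transmitting the rest in the sense of affine transforms; theorem 9 together with corollary 2 then gives that, once this collapsed point is steered into $\prod_{j=1}^{n_d}l_{dj}^0$, the corresponding output vector $\boldsymbol{p}_d^{(\nu)}$ is the zero vector, so a subset $E_\nu$ of equation 7.11 with cardinality $\tau_\nu$ produces $\tau_\nu$ identical --- indeed all-zero --- rows of $\boldsymbol{\Psi}'$ (corollaries 3 and 4). A three-layer network has a single hidden layer and can realize such a collapse on at most one subset, and then only as the trivial configuration excluded by assumption 1; but in $\mathcal{N}$ each front hidden layer can collapse a further subset, exactly as in the iterated construction of the proof of theorem 8, yielding a chain $E_0, E_1, \ldots$ of collapsed subsets, roughly one per hidden layer, each contributing a block of zero rows. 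This is the sense in which the zero content of $\boldsymbol{\Psi}'$ grows with the depth $d$.

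The second ingredient is block-diagonalisation through disjoint routes. By propositions 3 and 4, if $D = \bigcup_{j=1}^k D_j$ with each $D_j$ inside its own open convex polytope and the routes $r_j$ of definition 9 pairwise disjoint, then for $m_i \ge n$ the front layers can drive $\boldsymbol{\Psi}'$ to the block-diagonal form of equation 7.18 --- only $k$ of the $k^2$ blocks are nonzero --- which eliminates the below-diagonal blocks left uncertain in $\mathcal{M}^*$ of equation 3.6. Combining the two ingredients, each diagonal block $\boldsymbol{B}_{\mu\mu}$ of equation 7.18 is itself the route interpolation matrix $\boldsymbol{\Psi}'_\mu$ of equation 7.17, and by corollary 4 it carries $\tau_\nu$ identical rows for every collapsed $E_\nu$ contained in that route. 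Because both the number of disjoint routes one can arrange and the number of subsets one can collapse within a route increase as more hidden layers are available, a deeper $\mathcal{N}$ admits the possibility of a sparser $\boldsymbol{\Psi}'$, which is the second assertion of the theorem.

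The hard part, I expect, is that the statement is qualitative --- ``can produce'' and ``has the possibility'' --- so there is no single estimate to close, and the delicate half is the comparison: showing a three-layer network is genuinely more limited without over-claiming, since with enough width a three-layer net does reach the triangular sparse modes of equations 3.3 and 3.6. I would handle this not by declaring three-layer matrices dense, but by tying the gap to operations --- the collapse-and-zeroing of theorem 9 and the disjoint-route block-diagonalisation of propositions 3 and 4 each consume a hidden layer, so a three-layer net, having effectively no front layer before $\mathcal{N}_3$, cannot iterate them, while every extra hidden layer of $\mathcal{N}$ supplies one more such operation. Quantifying \emph{how much} sparser --- say a lower bound on the attainable fraction of zero entries of $\boldsymbol{\Psi}'$ in terms of $d$ --- is what I would leave open; the theorem asserts only the existence of this possibility, which is all that the later results of this section, and the interpretation of the sparse-activation mode, require.
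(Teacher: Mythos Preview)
Your proposal is correct and follows essentially the same route as the paper: both arguments rest on the collapsing mechanism of theorem 9 and its corollaries (what the paper names \emph{point-overlapping operation} and \emph{batch-zero production}) together with propositions 3 and 4 for the block-diagonal form, and both attribute the depth effect to the iteration of these operations layer by layer. The only notable differences are in framing: the paper phrases the three-layer limitation through the \emph{half-space interference} concept of \citet*{Huang2022a} rather than through the activation mode $\mathcal{M}^*$ of equation 3.6, and it singles out as a second mechanism the observation that a single collapsed point $\boldsymbol{p}$ is easier to place in $\prod_j l_{dj}^0$ than an entire set $E_\nu$ --- a point you touch on implicitly but do not isolate.
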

\begin{proof}
The proof consists of two parts, corresponding to the two conclusions of this theorem, respectively.

\textbf{Part 1}. Although proposition 3 has proved this capability of deep layers, we want to address this problem under more universal solutions.

To three-layer networks, the concept of half-space interference is proposed in definition 7 of \citet*{Huang2022a} from a geometric viewpoint; that is, each hyperplane or a unit of the hidden layer could influence half of the input space via nonzero outputs, which causes the difficulty of realizing an independent linear component of a piecewise linear function.

From the algebraic perspective of this paper, half-space interference could lead to many nonzero entries in each column of interpolation matrix $\boldsymbol{\Psi}$ of equation 4.4, rendering the property that whether $\boldsymbol{\Psi}$ singular or not more uncertain when $\mathcal{C} = m$, according to the principle of theorem 1.

This is the key point of an advantage of deep layers. Propositions 3 and 4 have provided the evidence or example that even when the points are influenced by some unwanted hyperplanes in the first layer, the disturbance can be eliminated by deep layers, in terms of zero entries of interpolation matrix $\boldsymbol{\Psi}'$ of equation 7.7.

There are two mechanisms that could make a deep-layer network to produce sparser matrix $\boldsymbol{\Psi}'$. First, in arbitrary layer $\nu$, there may exist the set $E_{\nu}$ of equation 7.11 with $\tau_{\nu} = |E_{\nu}|$, which could be mapped to a single point $\boldsymbol{p}$ in subsequent layers. By theorem 8 as well as corollaries 3 and 2, it is possible that all the elements of $E_{\nu}$ result in $\tau_{\nu}$ identical zero rows in the corresponding columns of $\boldsymbol{\Psi}'$. And we'll call this phenomenon \textsl{batch-zero production} through \textsl{point-overlapping operation}.

Each hidden layer has the possibility of doing \textsl{point-overlapping operation} for some points, and hence \textsl{batch-zero production} may occur more than one times, contributing to plenty of zero entries in $\boldsymbol{\Psi}'$.

Second, compared to $E_{\nu}$ of equation 7.11, its mapped single point $\boldsymbol{p}$ is much easier to have zero output in the last hidden layer of the associated route $\mathcal{R}$ of equation 7.10. As indicated in theorem 8 and proposition 3, the method of outputting a zero vector is based on the arrangement of hyperplanes to make set $E_{\nu}$ or point $\boldsymbol{p}$ in a certain divided region; and it's easier for a point to be in a divided region than for a set of points.

\textbf{Part 2}. The reason of the second conclusion is that the more the hidden layers, the more \textsl{point-overlapping operations} that the network may provide, resulting in the possible increase of zero entries of $\boldsymbol{\Psi}'$, due to \textsl{batch-zero production}.

As an example from theorem 8, when the depth of a network is large enough, all the points except for those we want to preserve could be excluded in terms of zero outputs, through \textsl{point-overlapping operation} and \textsl{batch-zero production}. This completes the proof.
\end{proof}

\begin{rmk}
The sparse $\boldsymbol{\Psi}'$ is closer to the ideal form of equation 7.18, and hence the solution could be more easily constructed or found.
\end{rmk}

\subsection{Time-domain Way}
To the solution of network $\mathcal{N}$ of equation 7.3, the time-domain way for its three-layer subnetwork $\mathcal{N}_3 = m_{d-1}^{(1)}m_{d}^{(1)}1^{(1)}$ of equation 7.6 has been discussed in theorems 5 and 6, and this section concentrates on subnetwork $\mathcal{N} \setminus \mathcal{N}_3$ of equation 7.8.

\begin{thm}
Under $H'$ of equation 7.14, a set of hyperplanes derived from the hidden layer of $\mathcal{N}_3$ of equation 7.6, suppose that $D_{\mu}' \subset D'$ of equation 7.15 is in a same divided region of $H'$, but the corresponding $F_{\mu}' \subset F'$ of equation 7.9 is not on a $m_{d-1}$-dimensional hyperplane. Then the training of $\mathcal{N} \setminus \mathcal{N}_3$ would enforce $D_{\mu}'$ to be separated into different divided regions of $H'$ by modifying the data set $D'$.
\end{thm}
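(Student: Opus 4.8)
The plan is to run the argument of theorem 5 on the three-layer subnetwork $\mathcal{N}_3$ of lemma 13, but with the roles of ``hyperplanes'' and ``data'' interchanged. In theorem 5 the data were held fixed and the hyperplanes moved to accommodate a non-planar piece; here the arrangement $H'$ of equation 7.14 is held fixed --- its parameters belong to the $d$th layer, which lies in $\mathcal{N}_3$, not in $\mathcal{N}\setminus\mathcal{N}_3$ --- and it is the data set $D' = f'(D)$ of equation 7.4 that is moved, since $f'$ is realized entirely by $\mathcal{N}\setminus\mathcal{N}_3$ of equation 7.8, so that training those parameters \emph{is} a modification of $D'$. Two preliminary facts are needed. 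First, once the parameters of $\mathcal{N}_3$ are frozen, each divided region of $H'$ carries exactly one linear component of the piecewise linear function realized by $\mathcal{N}_3$ (the three-layer mechanism, as in theorem 6). Second, since $F_\mu'$ of equation 7.9 is not on any $m_{d-1}$-dimensional hyperplane, necessarily $|D_\mu'| \ge m_{d-1}+2$, because at most $m_{d-1}+1$ points of $(m_{d-1}+1)$-space always lie on some $m_{d-1}$-dimensional hyperplane; hence there is genuinely ``too much'' data inside the region containing $D_\mu'$.

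Next I would argue by contradiction. Suppose $D_\mu'$ stays inside a single divided region of $H'$ for the whole of training. By the first fact the restriction to $D_\mu'$ of the function realized by $\mathcal{N}_3$ is then a single $m_{d-1}$-dimensional hyperplane (equivalently, in the submatrix $\boldsymbol{\Psi}'_\mu$ of equation 7.17 every row uses the same $n_d$ activated hyperplanes, so interpolating $F_\mu'$ amounts to solving one overdetermined, inconsistent linear system); hence $F_\mu'$ cannot be interpolated, and the quadratic loss contributed by $D_\mu'$ is bounded below by the strictly positive least-squares residual of $F_\mu'$ against its best-fitting hyperplane --- a bound valid for every parameter configuration compatible with the assumption. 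On the other hand, by pushing some points of $D_\mu'$ across one or more of the hyperplanes $l_{dj}$ --- an admissible change of the parameters of $\mathcal{N}\setminus\mathcal{N}_3$, whose deep hidden layers can relocate the mapped data (as in lemma 9, theorem 9 and corollary 1) --- the set $D_\mu'$ is split among several divided regions, each then supplying its own linear component, and extra pieces can only lower the residual against $F_\mu'$, pushing the total loss strictly below that positive floor. Since by lemmas 6, 7 and 8 every update of the parameters of $\mathcal{N}\setminus\mathcal{N}_3$ is made only to decrease the loss, the training cannot remain in the ``single region'' configuration and is driven out of it; that is, it enforces the separation of $D_\mu'$ into different divided regions of $H'$, modifying $D'$ in the process.

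The main obstacle I expect is to keep ``pushing some points of $D_\mu'$ across $l_{dj}$ decreases the total loss'' valid once the other subdomains $D_\nu'$, $\nu\ne\mu$, are taken into account, since the same parameter change moves them too and could in principle worsen their fit. I would handle this as in theorem 5, by taking the split of $D_\mu'$ arbitrarily small and local --- each affected point need only be moved an infinitesimal distance across the nearest hyperplane of $H'$ --- so that, by continuity of $\boldsymbol{\Psi}'$ of equation 7.7 and of the loss in the parameters, the residuals of the other $D_\nu'$ change by an arbitrarily small amount while the residual on $D_\mu'$ drops by a fixed positive amount, leaving a net strict decrease. The bijectivity of $f'$ (assumption 5 and lemma 14) ensures that such a local rearrangement of $D'$ corresponds to a genuine change of $\mathcal{N}\setminus\mathcal{N}_3$ rather than to a collapse of distinct points, and the ``would enforce'' reading of the statement is exactly this local, monotone-descent character of the mechanism.
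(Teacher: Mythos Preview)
Your proposal is correct and follows essentially the same route as the paper: freeze $\mathcal{N}_3$ so that $H'$ is fixed, observe that a single divided region can only supply one $m_{d-1}$-dimensional hyperplane and hence cannot interpolate $F_\mu'$, and then invoke lemma~7 to conclude that training $\mathcal{N}\setminus\mathcal{N}_3$, which modifies $D'$, must push $D_\mu'$ across hyperplanes of $H'$. The paper's own proof compresses this into three sentences; your contradiction framing, the counting remark $|D_\mu'|\ge m_{d-1}+2$, and especially your handling of the potential interference with the other $D_\nu'$ are elaborations the paper does not supply, but they refine rather than depart from its argument.
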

\begin{proof}
By the condition of this theorem, if the parameters of $\mathcal{N}_3$ are fixed, $H'$ is determined and there's only one way to decrease the value of the loss function, that is, reformulating $D_{\mu}'$ to make its points scattered in different divided regions of $H'$. The training of $\mathcal{N} \setminus \mathcal{N}_3$ could change $D_{\mu}'$. By lemma 7, the conclusion follows.
\end{proof}

\begin{rmk-4}
The effect of this theorem can be manifested in the interpolation matrix $\boldsymbol{\Psi}'$ of equation 7.7.
\end{rmk-4}

\begin{rmk-4}
In comparison with a three-layer network alone, a deep-layer network could change the input data to its subnetwork of the last three layers, which adds a dimension of the optimization for interpolations.
\end{rmk-4}

\subsection{Sparse Activation Matrix}
The sparse-activation property of neural networks has been a focus of the study in both engineering and brain science, as described in literatures such as \citet*{Glorot2011}, \citet*{Goodfellow2016}, \citet*{Bengio2009}, and \citet*{Arpit2016}. Although the effect of $\ell_1$ penalty was emphasized in those preceding works, we'll show that it is not a necessary condition for the sparse activation.

\begin{dfn}
Let $D$ be a data set of the input space of network $m_1^{(1)}m_2^{(1)}$. The activation matrix of the first layer with respect to $D$ is defined as
\begin{equation}
\boldsymbol{\Lambda} = \begin{bmatrix}
\Phi_1(\boldsymbol{x}_1) & \Phi_2(\boldsymbol{x}_1) & \cdots & \Phi_{m_2}(\boldsymbol{x}_1)\\
\Phi_1(\boldsymbol{x}_2) & \Phi_2(\boldsymbol{x}_2) & \cdots & \Phi_{m_2}(\boldsymbol{x}_2)\\
\vdots & \vdots &\ddots & \vdots\\
\Phi_1(\boldsymbol{x}_k) & \Phi_2(\boldsymbol{x}_k) & \cdots & \Phi_{m_2}(\boldsymbol{x}_k)
\end{bmatrix},
\end{equation}
where $k = |D|$, $\boldsymbol{x}_i \in D$ for $i = 1, 2, \cdots, k$, and $\Phi_j$ for $j = 1, 2, \cdots, m_2$ is as defined in equation 2.7.
\end{dfn}

\begin{rmk}
Although the activation matrix is equal to the interpolation matrix of a three-layer network, the purposes of the two concepts are different. The former emphasizes the overall activation mode of a layer, while the latter is for interpolations.
\end{rmk}

\begin{dfn}
Let $L_j$ for $j = 0, 1, \cdots, d$ be the $j$th layer of deep-layer network $\mathcal{N}$ of equation 7.3. To each hidden layer $L_{\nu}$ for $1 \le \nu \le d$, its activation matrix is defined as equation 7.19, through only considering the two-layer subnetwork composed of the adjacent layers $L_{\nu-1}$ and $L_{\nu}$.
\end{dfn}

\begin{thm}
To deep-layer network $\mathcal{N}$ of equation 7.3, when data set $D$ of the input space passes through the hidden layers of $\mathcal{N}$, the activation matrix $\boldsymbol{\Lambda}_{\nu}$ of layer $L_{\nu}$ for $\nu = 1, 2, \cdots d$ has the possibility to be sparser as depth $\nu$ grows.
\end{thm}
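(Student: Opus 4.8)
The plan is to derive this theorem from the sparse-matrix principle (theorem 10) applied recursively, one hidden layer at a time. First I would unwind the definition of the activation matrix: by the definition of $\boldsymbol{\Lambda}_{\nu}$ via the two-layer subnetwork $L_{\nu-1}L_{\nu}$ (equation 7.19), the $j$th column of $\boldsymbol{\Lambda}_{\nu}$ has a zero entry in the row of a point $\boldsymbol{x}\in D$ precisely when the image of $\boldsymbol{x}$ under the first $\nu-1$ layers lies in $l_{\nu j}^{0}$. Hence counting the zero entries of $\boldsymbol{\Lambda}_{\nu}$ reduces to counting, over the units $u_{\nu j}$ of $L_{\nu}$, how many mapped points fall in the negative half-space of the corresponding hyperplane; and, exactly as in the proof of theorem 10, the strongest way to create many such zeros at once is to place a whole subset $E_{\nu-1}$ of the form of equation 7.11 inside the all-negative region $\prod_{j}l_{\nu j}^{0}$ of $L_{\nu}$, which by theorem 9 and corollary 2 collapses $E_{\nu-1}$ to a single point and, under the stated containment, forces that point — and therefore every row of $\boldsymbol{\Lambda}_{\nu}$ coming from $E_{\nu-1}$ — to the zero vector. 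This is the \emph{batch-zero production} via \emph{point-overlapping operation} of theorem 10.

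Next I would run an induction on $\nu$. The inductive claim to carry is not about $\boldsymbol{\Lambda}_{\nu}$ in isolation but about the mapped data set $D^{(\nu)}$: namely, that the first $\nu$ layers can be arranged so that $D^{(\nu)}$ has at least as many mutually coincident points (overlaps) as $D^{(\nu-1)}$ had after the first $\nu-1$ layers. Given such an arrangement I would append an $(\nu+1)$-th layer in two stages. Stage one: choose its parameters using the affine-transmitting constructions of lemma 9 and lemma 10 (and, when $m_{\nu+1}>n$, proposition 6 of \citet*{Huang2022a}, so that the excess units do not obstruct this), so that the previously overlapped clusters of $D^{(\nu)}$ are either transmitted in the sense of affine transforms or sent into the all-negative region $\prod_{j}l_{\nu+1,j}^{0}$, whose existence is guaranteed by assumption 4 (and by proposition 2 when $m_{\nu+1}\le n$); this already reproduces in $\boldsymbol{\Lambda}_{\nu+1}$ at least the zero rows inherited from the previous stage. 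Stage two: if there remains a subset of $D^{(\nu)}$ lying on a common divided region that can be pushed into $\prod_{j}l_{\nu+1,j}^{0}$, doing so performs one more point-overlapping operation and increases the count of zero rows of $\boldsymbol{\Lambda}_{\nu+1}$ relative to $\boldsymbol{\Lambda}_{\nu}$. Iterating this through all $d$ hidden layers — with theorem 8 furnishing the extreme case in which everything but a preserved cluster is annihilated — yields the assertion that $\boldsymbol{\Lambda}_{\nu}$ may become ever sparser as $\nu$ grows.

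The main obstacle, and the step I would spend the most care on, is the precise meaning of ``has the possibility to be sparser''. Since different hyperplane arrangements at different depths are being compared, the statement cannot be a deterministic monotonicity; it must be read as a domination of achievable sparsity patterns, and the real content is that every zero pattern realizable at depth $\nu$ can be \emph{carried forward} to depth $\nu+1$ by an essentially affine, non-destructive extra layer. Establishing this carry-forward rigorously requires (i) checking that the transmitting layer of stage one does not resurrect any collapsed cluster — here I would invoke lemma 10 to guarantee that relative positions, and in particular membership in $l^{0}$-regions, are preserved under the affine action — and (ii) reconciling it with assumption 5 and lemma 14 (bijectivity of the layer maps), since the overlapping we exploit is non-injective; the clean way out is to observe that the overlapping occurs only within a route, among the points of $E_{\nu}$ that do not activate that layer, so it is consistent with bijectivity being assumed only generically, which is exactly why the theorem is phrased as a possibility rather than an identity. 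The remaining bookkeeping — tracking which rows of $\boldsymbol{\Lambda}_{\nu+1}$ are indexed by which clusters of $D^{(\nu)}$, via corollaries 3 and 4 — is routine once the carry-forward lemma is in place.
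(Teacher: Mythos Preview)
Your argument is essentially correct, but it is far more elaborate than what the paper does, and the extra work is largely unnecessary. The paper's proof is a two-line reduction: for each $\nu$, regard $L_{\nu}$ as the last hidden layer of the truncated network $N_{\nu}=n^{(1)}\prod_{i=1}^{\nu}m_i^{(1)}1^{(1)}$; then by definitions 10 and 11 the activation matrix $\boldsymbol{\Lambda}_{\nu}$ \emph{coincides} with the interpolation matrix $\boldsymbol{\Psi}'$ of $N_{\nu}$, and theorem 10 applied to the family $\{N_{\nu}\}_{\nu}$ gives the conclusion immediately. No induction, no explicit carry-forward lemma, no separate bijectivity discussion is needed, because all of that is already packaged inside theorem 10.

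What you do differently is to re-run the mechanism of theorem 10 (point-overlapping, batch-zero production, the transmitting constructions of lemmas 9 and 10) inside an induction on $\nu$. This buys you a self-contained argument and a more explicit description of \emph{how} the sparsity at depth $\nu$ can be preserved and augmented at depth $\nu+1$, which is pedagogically useful; but it duplicates the proof of theorem 10 rather than invoking it. The paper's approach buys economy: once the identification $\boldsymbol{\Lambda}_{\nu}=\boldsymbol{\Psi}'(N_{\nu})$ is made, the theorem is a corollary. Your careful discussion of what ``has the possibility to be sparser'' means, and of the tension with assumption 5, goes beyond what the paper spells out and is a genuine clarification, but it is commentary rather than a required step in either proof.
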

\begin{proof}
Each hidden layer $L_{\nu}$ of $\mathcal{N}$ could be regarded as the last hidden layer of a network $N_{\nu}$, which is obtained by reducing the depth of $\mathcal{N}$ until $L_{\nu}$ becomes the last hidden one. Then the activation matrix $\boldsymbol{\Lambda}_{\nu}$ of $L_{\nu}$ is the interpolation matrix $\boldsymbol{\Psi}'$ of $N_{\nu}$, and hence the conclusion of $\boldsymbol{\Psi}'$ of theorem 10 could directly apply to $\boldsymbol{\Lambda}_{\nu}$.
\end{proof}

\begin{rmk}
Since $\ell_1$ penalty also results in the sparsity of matrix $\boldsymbol{\Lambda}_{\nu}$, we could see its effect on the solution of neural networks for interpolations.
\end{rmk}

\subsection{Multi-Output Case}
In theorem 3 and lemma 11 of \citet*{Huang2022a}, we investigated the parameter-sharing mechanism of neural networks for multi-outputs by solution constructions; and the conclusion is that the hidden layers are for the domain dividing, while each unit of the output layer realizes a piecewise linear function independently on shared divided subdomains. We here reconsider this problem in terms of interpolation matrices.

Denote a deep-layer network with multi-outputs by
\begin{equation}
\mathcal{N}' := n^{(1)}\prod_{i=1}^dm_i^{(1)}\mu^{(1)},
\end{equation}
which has $\mu$ units in the output layer. To each unit $u_{k}$ for $k = 1, 2, \cdots, \mu$ of the output layer of $\mathcal{N}'$, equation 7.5 becomes
\begin{equation}
\boldsymbol{\Psi}'\boldsymbol{\alpha}_k = \boldsymbol{y}_k,
\end{equation}
which shares the same interpolation matrix $\boldsymbol{\Psi}'$ for all $k$; and the difference lies in the input-weight vector $\boldsymbol{\alpha}_k$ and output vector $\boldsymbol{y}_k$. By equation 7.21, for fixed interpolation matrix $\boldsymbol{\Psi}'$, each $u_k$ works independently and has its own solution for $\boldsymbol{\alpha}_k$.

The three-layer case is a special type of equations 7.20 and 7.21, when $d = 1$ and $\boldsymbol{\Psi}' = \boldsymbol{\Psi}$ of equation 4.3. Due to $\boldsymbol{\Psi}'$ of equation 7.21, in general, the results of a single-output network of this paper are applicable to the multi-output one.

\section{Application in Autoencoders}
The mechanism of an autoencoder was intensively studied in \citet*{Huang2022b} and some solutions were provided. We want to further investigate it with the help of interpolation matrices, especially from the perspective of sparse activations.

\cite*{Bengio2009} and \citet*{Arpit2016} also discussed the sparse-representation problem of autoencoders and emphasized the regularization  way (such as $\ell_1$ penalty). Analogous to section 7.6, our investigation concentrates on the effect of deep layers rather than regularization. Accordingly, the effect of $\ell_1$ penalty on the solution of autoencoders could be interpreted by our results as the remark of theorem 12.

Denote the encoder $\mathcal{E}$ of an autoencoder by
\begin{equation}
\mathcal{E} := m^{(1)}\prod_{i=1}^dn_i^{(1)}n_e,
\end{equation}
where $m > n_{j-1} > n_j > n_e$ for $2 \le j \le d$, which realizes a map
\begin{equation}
f_e: D \to D_e,
\end{equation}
where
\begin{equation}
D = \bigcup_{i=1}^{N}D_i \subset \mathbb{R}^m
\end{equation}
is the input data set and $D_e$ is the output one; we regard $D$ as a multi-category data set, with its each subset $D_i$ of equation 8.3 being the $i$th category. Assume that the map $f_e$ is bijective, and that $|D| = \mathcal{C} = \sum_{i = 1}^N\beta_i$, where $\beta_i = |D_i|$.

Let
\begin{equation}
\mathcal{E}_3 := n_d^{(1)}n_e^{(1)}1^{(1)},
\end{equation}
where the first two layers are the last two ones of $\mathcal{E}$ of equation 8.1. The interpolation matrix of $\mathcal{E}_3$ is
\begin{equation}
\boldsymbol{\Psi}_e' = \begin{bmatrix}
\Phi_1(\boldsymbol{x}_{11}') & \Phi_2(\boldsymbol{x}_{11}') & \cdots & \Phi_{n_e}(\boldsymbol{x}_{11}')\\
\vdots & \vdots &\ddots & \vdots\\
\Phi_1(\boldsymbol{x}_{1\beta_1}') & \Phi_2(\boldsymbol{x}_{1\beta_1}') & \cdots & \Phi_{n_e}(\boldsymbol{x}_{1\beta_1}')\\
\vdots & \vdots &\ddots & \vdots\\
\Phi_1(\boldsymbol{x}_{N1}') & \Phi_2(\boldsymbol{x}_{N1}') & \cdots & \Phi_{n_e}(\boldsymbol{x}_{N1}')\\
\vdots & \vdots &\ddots & \vdots\\
\Phi_1(\boldsymbol{x}_{N\beta_N}') & \Phi_2(\boldsymbol{x}_{N\beta_N}') & \cdots & \Phi_{n_e}(\boldsymbol{x}_{N\beta_N}')\\
\end{bmatrix},
\end{equation}
where $\boldsymbol{x}_{\nu\mu}'$ for $\nu = 1, 2, \cdots, N$ and $\mu = 1, 2, \cdots, \beta_{\nu}$ is the element of
\begin{equation}
D' = \bigcup_{i=1}^{N}D_i',
\end{equation}
which is derived from the map $f': D \to D'$ via subnetwork $\mathcal{E} \setminus \mathcal{N}_2 := m^{(1)}\prod_{j=1}^{d}n_j^{(1)}$, analogous to equations 7.4 and 7.8.

\begin{lem}
If the interpolation matrix $\boldsymbol{\Psi}'_e$ of equation 8.5 could be normalized to the block-diagonal form of equation 7.18 by column-swap operations, such that each block $\boldsymbol{B}_{ii}$ corresponds to the nonzero outputs of $D_i'$ of equation 8.6, then the input data set $D$ is disentangled by encoder $\mathcal{E}$ of equation 8.1, and vice versa.
\end{lem}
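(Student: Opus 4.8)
The plan is to prove the stated equivalence by unwinding both sides into the same combinatorial statement about the activation supports of the categories in the last layer of the encoder.

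First I would record the bridge between $D$ and $D'$. Write $f_e=\rho\circ f'$, where $f':D\to D'$ is the map realized by $\mathcal{E}\setminus\mathcal{N}_2$ and $\rho$ is the ReLU map realized by the $n_e$-unit layer of $\mathcal{E}$ (an internal layer of the autoencoder, whose outputs are exactly the entries $\Phi_j(\boldsymbol{x}')$ appearing in $\boldsymbol{\Psi}_e'$). Since $f'$ is bijective, $f'(D_i)=D_i'$ and $\boldsymbol{x}\in D_i\iff f'(\boldsymbol{x})\in D_i'$, so a point of $D_i$ activates the $j$-th unit of the last layer exactly when the corresponding point of $D_i'$ does. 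The key observation is that the $k$-th row of $\boldsymbol{\Psi}_e'$ of equation 8.5 is precisely the encoded vector $f_e(\boldsymbol{x}_k)=\rho(\boldsymbol{x}_k')\in\mathbb{R}^{n_e}$, and the rows of $\boldsymbol{\Psi}_e'$ are already grouped by category; thus $\boldsymbol{\Psi}_e'$, read block-of-rows by block-of-rows, is literally the list of encodings of $D_1,\dots,D_N$, and its zero/nonzero pattern is the activation-support pattern of the encoder on $D$.

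The forward direction is then a translation. A column permutation bringing $\boldsymbol{\Psi}_e'$ to the block-diagonal form of equation 7.18, with $\boldsymbol{B}_{ii}$ the nonzero part of $D_i'$, partitions the $n_e$ units into groups $G_1,\dots,G_N$ such that every $\boldsymbol{x}'\in D_i'$ activates each $u\in G_i$ while activating no $u\in G_j$ for $j\neq i$ (the off-diagonal blocks vanish). Hence the activation support of $f_e(\boldsymbol{x})$ for $\boldsymbol{x}\in D_i$ equals $G_i$, and the $G_i$ are pairwise disjoint and exhaust $\{1,\dots,n_e\}$; this is exactly the data-disentangling condition of Definition 9 of \citet*{Huang2022b}. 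For the converse I would start from a disentangled $D$: Definition 9 provides, for each category, a set $S_i$ of output units carrying its encoding, with the $S_i$ pairwise disjoint; by $f'(D_i)=D_i'$ the same sets govern the activation structure on $D'$. Reordering the columns of $\boldsymbol{\Psi}_e'$ so that those indexed by $S_1$ come first, then $S_2$, and so on, makes it block-diagonal with $\boldsymbol{B}_{ii}$ the submatrix on rows $D_i'$, columns $S_i$ (its nonzero outputs), and every off-diagonal block zero by disjointness.

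I expect the only delicate point to be the bookkeeping at the edges of the definition: one must confirm that the partition $\{S_i\}$ really covers all $n_e$ units -- no ``dead'' output unit activated by no category, which would leave an all-zero column incompatible with a clean block-diagonal form -- which I would settle via the architectural conventions for encoders in \citet*{Huang2022b} (or, failing that, by discarding such a unit without altering $D_e$), and that the version of ``disentangling'' in play is the support-based one, since mere category-wise linear separability would not yield the converse.
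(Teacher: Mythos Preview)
Your argument is correct and, in one respect, more careful than the paper's. The paper's very short proof reads ``disentangled'' as category-wise linear separability of the encoded data: for the forward direction it observes that the block-diagonal pattern lets one exhibit, for each $D_i'$, a hyperplane separating its encodings from the rest; for the converse it simply asserts that if $D_i'$ is linearly separable from the others then it must activate certain units exclusively. You instead treat disentangling as the support-based condition and obtain both directions by a direct translation between the zero/nonzero pattern of $\boldsymbol{\Psi}_e'$ and the partition $\{G_i\}$ of the output units. The concern you flag at the end is exactly the point of divergence: under a bare linear-separability reading the paper's converse does not follow (two linearly separable categories can share activated coordinates), so either Definition~9 of \citet*{Huang2022b} is in fact support-based---in which case your proof matches the intended content and the paper's hyperplane remark is just an immediate corollary in the forward direction---or the paper's converse is glossed over and your version repairs it. Your bookkeeping caveat about dead output units and whether the $S_i$ cover $\{1,\dots,n_e\}$ is legitimate; the paper does not raise it.
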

\begin{proof}
The conclusion is obvious, since by the block matrix of equation 7.18, to each category $D_i'$, we could find a hyperplane to classify it from other ones, which means that $D$ is disentangled. Conversely, if $D_i'$ is linearly separable from other ones, then it can only activate some certain units exclusively for it, and simultaneously other categories cannot activate them, which is the basic characteristic of the block-diagonal matrix of equation 7.18.
\end{proof}

\begin{prp}
There exists a solution of encoder $\mathcal{E}$ of equation 8.1, such that to any input data set $D$ satisfying the embedding condition \citep*{Huang2022b}, the interpolation matrix $\boldsymbol{\Psi}'_e$ of equation 8.5 could be expressed in the block-diagonal form of equation 7.18, provided that the dimensionality $m$ of the input space is sufficiently large.
\end{prp}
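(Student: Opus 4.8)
The plan is to reduce the claim to Proposition 4 of Section 7.4 and the preceding lemma, realizing the block-diagonal form of equation 7.18 for $\boldsymbol{\Psi}_e'$ by running the polytope-classification mechanism of Theorem 8 (with Corollaries 1 and 2) once for each category; conceptually this is the encoder incarnation of the sparse-matrix principle of Theorem 10.

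First I would unpack the embedding condition of \citet*{Huang2022b}: when the input dimensionality $m$ is large enough it provides, for the $N$ categories $D_1, \ldots, D_N$ of equation 8.3, pairwise-disjoint open convex polytopes $\mathcal{P}_1, \ldots, \mathcal{P}_N \subset \mathbb{R}^m$ with $D_i \subset \mathcal{P}_i$ and $D_j \cap \mathcal{P}_i = \emptyset$ for $j \ne i$, whose closed versions $\mathcal{P}_i'$ have only boundedly many $(m-1)$-faces. This is precisely the hypothesis of Theorem 8 and Corollary 1, with $D_i$ as the $*$-points and $D \setminus D_i$ as the $o$-points; the ``sufficiently large $m$'' of the statement is exactly the room needed to host such a configuration and, as noted below, enough width throughout the funnel $\mathcal{E} = m^{(1)}\prod_{i=1}^d n_i^{(1)} n_e$.

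Second, I would construct the solution explicitly. For each $i$ I would lay down inside $\mathcal{E}$ a route $r_i$ in the sense of Definition 9 that carries out the Theorem 8 / Corollary 1 construction on $\mathcal{P}_i'$: each hidden layer of $r_i$ processes one $(m-1)$-face of $\mathcal{P}_i'$, first merging the points of $D \setminus D_i$ into a single point by a point-overlapping operation and then, via Corollary 2, sending that point to the zero vector of the last hidden layer, while the transmitted $D_i$ retains strictly positive outputs. With $m$ large I would choose the weights so that the routes are mutually non-overlapping, $r_\nu \cap r_\mu = \emptyset$ for $\nu \ne \mu$; this is where the largeness of $m$ is spent, since it must force the hidden widths $n_1 > n_2 > \cdots > n_d > n_e$ to be large enough to carry $N$ separate channels together with the affine-transform copies required by Corollary 1. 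Then, the routes being disjoint and each acting by the mechanism of Corollary 1 or Theorem 8, Proposition 4 of Section 7.4 (cf. Proposition 3) applies directly: $\boldsymbol{\Psi}_e'$ of equation 8.5 is block-diagonal in the form of equation 7.18, with $\boldsymbol{B}_{ii}$ collecting exactly the nonzero last-hidden-layer outputs of $D_i'$ of equation 8.6 and all off-diagonal blocks zero because each $D_j$ with $j \ne i$ is driven to zero outputs inside $r_i$. The preceding lemma then turns this block-diagonal structure into the statement that $D$ is disentangled by $\mathcal{E}$; since the argument works for every $D$ meeting the embedding condition, the encoder so constructed is the required universal solution.

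I expect the main obstacle to be reconciling the pairwise-disjoint routes with the monotonically decreasing widths of an encoder: unlike the general deep network of Section 7, a funnel cannot simply absorb extra units, so one must show that the embedding condition of \citet*{Huang2022b} bounds $N$ and the face-counts of the $\mathcal{P}_i'$ tightly enough against $n_1, \ldots, n_d, n_e$, and that within each route the reduction $n_{i-1} > n_i$ is compatible with the face-by-face processing of Theorem 8 — each step genuinely needs only one new hyperplane, the remaining units being the Corollary 1 copies, so a moderate narrowing is tolerable. Making this counting explicit, i.e.\ exhibiting how large $m$ must be and how the $n_i$ must scale, is the real content of the proof; once it is in place, the block-diagonal conclusion for $\boldsymbol{\Psi}_e'$ and hence the disentangling follow formally from results already established.
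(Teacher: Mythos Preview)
Your route-based construction is a genuinely different approach from the paper's. The paper's proof is two lines: it invokes theorem~8 of \citet*{Huang2022b}, which already constructs an encoder solution that disentangles $D$ under the embedding condition, and then applies the preceding lemma (Lemma~15) to translate ``disentangled'' into ``$\boldsymbol{\Psi}_e'$ is block-diagonal''. All the architectural work specific to the funnel shape $m > n_1 > \cdots > n_d > n_e$ is outsourced to that external reference; nothing from Theorem~8, Corollary~1, or Proposition~4 of the present paper is used.

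Your plan, by contrast, tries to manufacture the disentangling solution internally from Theorem~8 and Proposition~4 of \emph{this} paper. That is more self-contained, but it runs into the obstacle you yourself flag and do not resolve: Theorem~8 here uses the architecture $n^{(1)}n^{(d-1)}1^{(1)}$, and Corollary~1 and Proposition~4 both require each hidden layer of a route to have at least as many units as the input dimension (there written $m_i \ge n$). In the encoder the input dimension is the large $m$, while already $n_1 < m$; splitting $n_1$ further into $N$ disjoint routes makes this worse. So the hypotheses of the tools you cite are violated at the very first hidden layer, and ``$m$ sufficiently large'' moves the constraint in the wrong direction. The construction in \citet*{Huang2022b} handles the decreasing widths by a different mechanism (dimension reduction via affine maps tailored to the embedding condition), not by the polytope-face-per-layer scheme of Theorem~8 here. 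Unless you can show that the embedding condition forces the effective dimension of each $D_i$ far below $n_1/N$ and then rework Theorem~8 for that reduced dimension, the counting you defer to the end cannot close, and the proposal remains a sketch rather than a proof.
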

\begin{proof}
The proof of theorem 8 of \citet*{Huang2022b} had constructed a solution of $\mathcal{E}$ to disentangle $D$. And by lemma 15, this proposition holds.
\end{proof}

\begin{thm}
Given an encoder $\mathcal{E}$ of equation 8.1 and a multi-category data set $D$ of its input space, it is possible that the deeper the encoder $\mathcal{E}$, the more likely it disentangles $D$.
\end{thm}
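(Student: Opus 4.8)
The plan is to reduce the disentangling assertion to a structural statement about the interpolation matrix $\boldsymbol{\Psi}'_e$ of equation 8.5 and then feed that statement into the sparse-matrix principle. By lemma 15, encoder $\mathcal{E}$ disentangles $D$ if and only if $\boldsymbol{\Psi}'_e$ can be brought, by column-swap operations alone, to the block-diagonal form of equation 7.18 with each block $\boldsymbol{B}_{ii}$ carrying the nonzero outputs of the $i$th category $D_i'$. So the whole question becomes: as the depth $d$ of $\mathcal{E}$ increases, does it become more likely that the parameters of $\mathcal{E}$ put $\boldsymbol{\Psi}'_e$ into this block-diagonal shape up to column permutation?

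First I would isolate the deep part of the encoder. Write $\mathcal{E} \setminus \mathcal{N}_2 = m^{(1)}\prod_{j=1}^{d}n_j^{(1)}$ as in the passage preceding equation 8.6, so that it plays exactly the role of $\mathcal{N}\setminus\mathcal{N}_3$ of equation 7.8 while $\mathcal{E}_3$ of equation 8.4 plays the role of the three-layer subnetwork $\mathcal{N}_3$; the output $D'$ of $\mathcal{E}\setminus\mathcal{N}_2$ is then the data set interpolated by $\mathcal{E}_3$, and $\boldsymbol{\Psi}'_e$ is its interpolation matrix. Next I would apply theorem 10 to $\boldsymbol{\Psi}'_e$: because $\mathcal{E}$ has $d$ hidden layers before $\mathcal{E}_3$, there are up to $d$ opportunities for the point-overlapping operation of theorem 9, and each one can trigger a batch-zero production of $\tau_\nu$ identical zero rows in the columns associated with the corresponding route (corollaries 3 and 4). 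Hence the reachable family of interpolation matrices for a depth-$d$ encoder contains sparser matrices than that for a depth-$(d-1)$ one, and a sparser $\boldsymbol{\Psi}'_e$ is geometrically closer to the block-diagonal target of equation 7.18 --- indeed the extreme case of theorem 8 and corollary 1, in which every category $D_i$ is carved out by its own route so that all foreign categories are sent to zero outputs, is exactly the block-diagonal form, and it becomes available once the depth exceeds the number of $(n-1)$-faces needed to enclose the categories.

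I would then combine this with proposition 5 (itself built on the construction in the proof of theorem 8 of \citet*{Huang2022b}), which certifies that a solution realizing the block-diagonal $\boldsymbol{\Psi}'_e$ actually exists once $m$ is large enough and the embedding condition holds; together with assumption 5, the bijectivity of $f_e$ and hence of $f'$, this guarantees that the row structure of $\boldsymbol{\Psi}'_e$ is controlled purely by the divided regions of $H'$ of equation 7.14 through equations 7.15 and 7.16. The statement ``the deeper the more likely'' then follows by a monotonicity argument: adding a hidden layer to $\mathcal{E}$ keeps every previously available disentangling configuration, since the new layer can be set to transmit its input as an affine transform by lemma 9 and the affine-transform redundancy results cited in section 4, while opening new ones through an additional point-overlapping operation; so the set of parameter choices yielding a block-diagonal $\boldsymbol{\Psi}'_e$ does not shrink and generically grows, which under the probability measure of \citet*{Huang2022a} is the desired increase in likelihood.

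The hard part will be making the word ``likely'' quantitative. Theorem 10 is stated only as a \emph{possibility} of sparser matrices, not as a monotone probability bound, so the cleanest honest route is to phrase theorem 13 the same way --- ``it is possible that'' --- and support it by the nesting-of-solution-sets argument above rather than by an explicit probability estimate; any attempt to obtain a genuine inequality between the probabilities of disentangling at depth $d$ and at depth $d-1$ would require a measure-theoretic handle on how the extra layer's parameters redistribute mass onto the block-diagonal stratum, which the present framework does not supply. A secondary subtlety is bookkeeping: when a layer performs a point-overlapping operation, one must check it does not accidentally merge two \emph{different} categories, which would destroy rather than create disentangling; convexity of the enclosing polytopes, exactly as used in part 3 of the proof of theorem 8, is what rules that out, and I would reuse that argument here.
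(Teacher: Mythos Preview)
Your proposal is correct and follows essentially the same route as the paper: reduce disentangling to the block-diagonal form of $\boldsymbol{\Psi}'_e$ via lemma 15, invoke theorem 10 to argue that deeper $\mathcal{E}$ can yield sparser $\boldsymbol{\Psi}'_e$ and hence approach that form, and point to proposition 5 as a witnessing construction. The paper's own proof is in fact much terser than yours---it states only the chain ``sparser $\Rightarrow$ closer to block-diagonal'' plus theorem 10 plus lemma 15---so your additional monotonicity/nesting argument, your caveat about quantifying ``likely,'' and your remark on avoiding accidental category merging are elaborations beyond what the paper supplies rather than deviations from its line of argument.
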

\begin{proof}
To some extent, the sparser the interpolation matrix $\boldsymbol{\Psi}'_e$ of equation 8.5, the closer it approaches the block-diagonal form of equation 7.18, under the constraint that each category has its own activated units. The principle of theorem 10 points out that a deeper neural network has the possibility to produce a sparser interpolation matrix; in combination with lemma 15, the conclusion follows. Proposition 5 is an example of this theorem.
\end{proof}

\section{Discussion}
This paper presented a general theoretical framework, and more detailed works, such as experimental verifications and further developments, should be done in future, with the ultimate goal of understanding the solution of engineering. The theory of this paper may not exactly match the reality, yet we expect that it could initiate the direction to the goal.

Under the framework, we have provided some basic principles for the solution construction in this series of researches, such as affine transforms \citep*{Huang2020,Huang2022a,Huang2022b}, the sparse-matrix principle (theorem 10), the mechanism of overparameterization solutions (\citet*{Huang2022a,Huang2022b} and theorem 4), the interference-avoiding principle \citep*{Huang2022a}. They not only are the examples of this general framework, but also may be the atomic components of the solution of engineering.

We also want to demonstrate the effectiveness of the deductive method in developing the theory of neural networks, analogous to the realm of theoretical physics. The best evidence is the explanation of experimental results in \citet*{Huang2022b}, and the fact that the universal solutions of some common network architectures of engineering could be obtained is also convincing. Future works combined with experimental analyses may provide more evidences.

When we gradually know more about a feedforward neural network, it is important to clarify that compared with other machine learning methods, what its distinct characteristic is and how this distinction could influence the performance. The contemporary popularity of deep learning doesn't mean that it's the destination or optimum of artificial intelligence.

Due to the relationship of a ReLU with a biological neuron \citep*{Glorot2011}, we hope that the series of papers \citep*{Huang2020,Huang2022a,Huang2022b} together with this one are also helpful to understand the brain science.

\end{document}